\documentclass{amsart}

\usepackage[hidelinks]{hyperref}
\usepackage{comment}
\usepackage{aageneral}
\usepackage{aamath}
\usepackage{mhequ}
\usepackage{fullpage}

\usepackage{booktabs}       
\usepackage{nicefrac}       
\usepackage{microtype}      

\newtheorem{assumption}{Assumption}

\usepackage{stmaryrd}

\def\supp{\mathrm{supp}}

\setcounter{tocdepth}{2}
\setcounter{secnumdepth}{2}

\mathtoolsset{showonlyrefs=true}

\usepackage{todonotes}

\def\din{d}

\def\supp{\mathrm{supp}}
\def\span{\mathrm{span}}

\def\whh{W_{hh}}
\def\wxh{W_{xh}}
\def\why{W_{hy}}

\def\wwhh{\mathbf W_{hh}}
\def\wwxh{\mathbf W_{xh}}
\def\wwhy{\mathbf W_{hy}}

\def\WWhh{\mathbf W_{hh}}
\def\WWxh{\mathbf W_{xh}}
\def\WWhy{\mathbf W_{hy}}

\def\Phh{P_{hh}}
\def\Pxh{\mathbb R^\din}

\def\pp{P}
\def\phh{P_{hh}}

\def\dhh{D_{hh}}
\def\dxh{D_{xh}}
\def\dhy{D_{hy}}
\def\dhh{D_{hh}}
\def\dxh{D_{xh}}
\def\dhy{D_{hy}}
\def\ds{D_{\sigma}}

\def\ghh{G_{hh}}

\def\fhh{M_{hh}}
\def\fxh{M_{xh}}

\def\dehh{\Delta_{hh}}

\def\HH{H_\sigma}

\def\hh{\mathbf H}
\def\Hhh{H_{hh}}
\def\Hxh{H_{xh}}

\def\hhhh{\mathbf H_{hh}}
\def\hhxh{\mathbf H_{xh}}

\def\Ohh{\Omega_{h}}

\def\axh{a_{xh}}

\def\ahf{a_{h1}}
\def\ahs{a_{h2}}

\usepackage{bbm}
\def\sign{\text{sign}}

\def\xx{\mathbf x}

\def\DD{\mathcal D}
\def\ww{\mathbf W}
\def\WW{\mathbf W}

\def \TT{\mathcal T}


\renewcommand\aa[1]{#1}

\newcommand\nx[1]{\mathbb E_X\left[|#1|\right]^2}
\newcommand\nxx[1]{\mathbb E_X\left[|#1|^2\right]}

\makeatletter
\newtheorem*{rep@theorem}{\rep@title}
\newcommand{\newreptheorem}[2]{%
\newenvironment{rep#1}[1]{%
 \def\rep@title{#2 \ref{##1}}%
 \begin{rep@theorem}}%
 {\end{rep@theorem}}}
\makeatother

\newreptheorem{theorem}{Theorem}
\newreptheorem{lemma}{Lemma}
\newreptheorem{proposition}{Proposition}




\setlength{\parskip}{2pt}%



\let\theta=\vartheta
\title{Global optimality of Elman-type RNNs  in the mean-field regime}


\author{ Andrea Agazzi$^{1,2}$ and Jianfeng Lu$^{2,3,4}$ and  Sayan Mukherjee$^{2,5,6,7,8,9}$}
\address[1]{Department of Mathematics,
  Università di Pisa,
  Pisa, IT}

\address[2]{Department of Mathematics,
Duke University,
Durham, NC 27708}
\address[3]{Department of  Physics,
Duke University,
Durham, NC 27708}
\address[4]{Department of Chemistry,
Duke University,
Durham, NC 27708}
\address[5]{Center for Scalable Data Analytics and Artificial Intelligence,
Universit\"at Leipzig,
Leipzig, DE\\}
\address[6]{Max Planck Institute for Mathematics in the Sciences,
Leipzig, DE\\}
\address[7]{
Department of Statistical Science,
Duke University,
Durham, NC 27708}
\address[8]{
Department of Computer Science,
Duke University,
Durham, NC 27708}
\address[9]{
Department of Biostatistics \& Bioinformatics,
Duke University,
Durham, NC 27708 \newline }

\email{ andrea.agazzi@unipi.it, jianfeng@math.duke.edu, sayan.mukherjee@mis.mpg.de}

\date{\today}
\usepackage[sort,compress,numbers]{natbib}





\begin{document}

\begin{abstract}
{We analyze Elman-type Recurrent Reural Networks (RNNs) and their training in the mean-field regime.} Specifically, we show convergence of {gradient descent training dynamics} of the RNN to the corresponding mean-field formulation in the large width limit. We also show that the fixed points of the limiting infinite-width dynamics are globally optimal, under some assumptions on the initialization of the weights. Our results establish optimality for feature-learning with wide RNNs in the mean-field regime.
\end{abstract}

\maketitle

\section{Introduction}

During the last decade, artificial intelligence and in particular deep leaning have achieved a significant series of groundbreaking successes, partly due to the unprecedented increase of data and computational power at our disposal. Notably, the range of disciplines that have recently been revolutionized by machine learning is virtually unlimited: from medicine \cite{mlinm} to finance \cite{mlfinance}, from games \cite{Silver:16,vinyals19} to image analysis \cite{lecun89}, to the point where almost no domain has remained unaltered by the emergence of these technologies.

This revolution would have been unthinkable without the advent of deep neural networks. This extremely flexible family of function approximators has outperformed classical methods in almost every domain where it has been applied.
A discipline that has been profoundly revolutionized by these models is the analysis of time series and, more generally, the problem of learning dynamical systems. For instance, Recurrent Neural Networks (RNNs) \cite{jordan1, rumelhart85} and more specifically Long-Short Term Memory (LSTM) RNNs \cite{schmidhuber1, schmidhuber2} and Gated Recurrent Units \cite{GRUs1} have dramatically increased the predictive performance of machine learning in this context. These models take as input temporal sequences of data and act iteratively on the elements of such sequences, storing the information about previous timepoints into the hidden state of the network. This structure allows to learn datasets with strong time-correlations using relatively few parameters, and has provided benchmarks for state-of-the-art time-series learning algorithms for over a decade.
However, despite the groundbreaking success of these models in practice, the theoretical underpinnings of such success remain elusive to the computer science community. More specifically, many questions about the theoretical reasons for the performance of these models applied far into the overparametrized regime, such as for example explanations for their optimal behavior and their generalization error, remain open.

Only recently, a theory of neural
network learning has started to emerge in the context of wide, single-layer neural networks. The two main theoretical frameworks are based on either understanding mean-field training dynamics
\cite{ChizatBach18, RotVE18, MeiMonNgu18, wojtowytsch20, spiliopoulos18, agazzi20, ChizatBach20} or based on linearized dynamics in the overparametrized regime \cite{hongler18, ChizatBach182,Montanari:19b}.
These two frameworks provide contrasting explanations for the success of neural networks. \aa{On one hand the linearized dynamics gives strong convergence guarantees for the training process but fails to explain the feature-learning properties of neural networks. On the other hand the the mean-field framework is better at  accurately
capturing the
highly nonlinear dynamics arguably resulting in feature-learning \cite{Montanari:19c}, but the resulting training process is typically harder to analyze. Consequently, it remains a challenge to extend the mean-field results listed above to more realistic structures such as RNNs.}

This paper aims to extend the mean-field framework to Elman-type RNNs. Specifically, we aim to establish optimality of the fixed points of the training dynamics for wide RNNs trained with classical gradient descent. This provides an explanation of the outstanding performance of these models, in a certain idealized regime.

\subsection{Previous results}

The training dynamics of neural networks in the mean-field infinite width limit was pioneered by the series of papers \cite{MeiMonNgu18, ChizatBach18, RotVE18,spiliopoulos18}. Here, the authors proved that the training dynamics of infinitely wide, single layer neural networks in the mean-field regime can be studied by representing the parametric state of the network as a probability distribution in the space of weights. Using this representation it was possible to prove that
the limiting points of the training dynamics are global optimizers of the loss function.\\
These ideas have been extended to the reinforcement learning setting \cite{spiliopoulos22,agazzi20,agazzi19}, to non-differeniable network nonlinearities such as ReLU \cite{wojtowytsch20} and to the deep ResNet architecture \cite{lu20}. A recent series of papers \cite{pham20global,pham21global} has bypassed the difficulties related to the representation of the network state as a distribution by introducing the \emph{neuronal embedding} framework. This framework allows for the investigation of the mean-field dynamics of deep, purely feedforward neural networks. However, none of these results can be directly applied to the RNN setting: \aa{the presence of weight-sharing in  the RNN structure and the interaction of the unrolled network with the input violate fundamental assumptions in these analyses. }

The performance of RNNs in the infinite width limit was studied in \cite{rntk20}. Here, the authors explored the performance of the network in the so-called Neural Tangent Kernel (NTK) regime, arising under a particular scaling of the weights at initialization. This scaling linearizes the training dynamics of the network, which behaves essentially like a kernel method. It is therefore widely believed that in this regime feature learning is not possible. A recent paper \cite{yang20} considers a similar scaling to \cite{rntk20} in combination with more general architectures. In contrast to these works, the mean-field scaling we consider retains the nonlinear training dynamics.

Finally, a seemingly related result about mean-field theory for RNNs has been presented in \cite{chen18}. That work, however, uses \emph{dynamical} mean-field theory to explain the role of gating in RNN architectures and thus our proof techniques differ greatly from that paper. \aa{The scope of the results is also significantly different, as their results aim to explore forward propagation of signal
through vanilla RNNs, and do not aim to establish optimality of the fixed points after training.}

\subsection{Contributions}

This paper adapts the neuronal embedding analysis framework developed in \cite{pham20global,pham21global} to unrolled Elman-type RNNs \cite{Elman90}. We prove optimality of the fixed points of the training
dynamics in the mean-field regime under some  assumptions \aa{on} the expressive power of the
network at initialization. Specifically we prove:
\begin{enumerate}
\item Convergence of the dynamics of the finite-width RNN to its infinite-width limit. To do so we adapt the coupling formulation presented in \cite{pham20global} in the context of fully-connected feedforward networks to the RNN framework, thereby extending it to networks with weight-sharing.
\item Gradient descent trains these networks to optimal fixed points given infinite training time. This optimality result holds in the feature-learning regime, as opposed to previous results that hold in the NTK regime.
\item \aa{To prove the above results, we show universal approximation for deep neural networks with uniformly bounded hidden weights. This result extends classical universal approximation theorems, where weights are critically assumed to be in a vector space and, as such, to be unbounded.}
\end{enumerate}
A standard initialization assumption in feedforward neural networks, for example $3$-layer networks, with a large number of nodes is to initialize the weights randomly and independently. In this paper, we further observe that feedback in an RNN requires stronger assumptions on the weights of the network at initialization to achieve a comparable level of expressivity as a $3$-layer feedforward network. We examine this issue in some detail through our analysis.


The paper is organized as follows: in Section~\ref{s:notation} we introduce the notation and the model being investigated, together with its mean-field limit. Then, in Section~\ref{s:results} we outline our main results. The results are exemplified with some numerical experiments in Section~\ref{s:numerics}, and conclusions follow in Section~\ref{s:conclusion}. The proofs of our main theorems are given in the appendix.

\section{Notation}\label{s:notation}

\subsection{Predictors}\label{s:predictors}
To put the data-generation process in an abstract framework for dynamical systems, we consider as predictors subsets of a bi-infinite observation sequence $\mathbf{x} \in  (\mathbb R^\din)^{\mathbb Z}$. For a given subshift $\TT~:~(\mathbb R^\din)^{\mathbb Z} \to (\mathbb R^\din)^{\mathbb Z}$, we generate the elements of $\xx$ as $\mathbf x_{k+1} = \TT(\mathbf x)_k$.
 We make the following assumption on the underlying dynamical system
\begin{assumption}\label{a:01a}
  There exists a continuous function $T~:~\mathbb R^\din \to \mathbb R^\din$ such that $\mathbf x_{k+1} = T(\mathbf x_k)$  for all $k \in \mathbb Z$. We further assume that this map is uniquely ergodic \aa{(upon possibly restricting it to a forward invariant set $\mathbb X$)} and that the corresponding invariant measure has finite fourth moments. For the definition of unique ergodicity, see \cite{dstheory}.
\end{assumption}


We denote by $\Pi^0~:~(\mathbb R^\din)^{\mathbb Z}\to \mathbb R^\din$ the projection of a bi-infinite sequence on its $0$-th element. We further define $\nu \in \mathcal M_+^1(\mathbb R^{\mathbb Z})$ as the invariant measure of the map $\mathcal T$, which exists and is unique by the above assumption. The marginal $\nu_0\in \mathcal M_+^1(\mathbb R)$ on the $0$-th component of $\nu$  is the invariant measure of $T$, and $\Pi_\#^0 \nu = \nu_0$.

\subsection{Loss function}
We assume that we have access to an infinite-length sample from the invariant measure $\nu$, from a dynamical system satisfying  \aref{a:01a} to train the RNN.
Our objective is to learn a map $F^*~:~(\mathbb R^\din)^{\mathbb N} \to \mathbb R$  from sequences of arbitrary length to reals. We restrict our attention to functions with a fixed, finite memory $L \in \mathbb N$.
\begin{assumption}\label{a:02}
The function $F^*$ only depends on $\{\mathbf x_{-L}, \dots, \mathbf x_0\}$, for a fixed $L \in \mathbb N$.
\end{assumption}
Our objective is to learn an estimate of $F^*$
by minimizing the
sample Mean Squared Error (MSE) between the target function $F^*$ and a parametric family of estimators $\{F(\,\cdot\,;W)\}_W$ indexed by the parameter vector $W$ on an observation sequence of length $K$. In other words, we aim to find
the minimizer $\hat F \in \{F(\,\cdot\,;W)\}_W$ of the empirical risk
\begin{equ}
\mathcal L_K(F^*, \hat F) := \frac 1 K \sum_{k = 1}^K \frac 12(F^*(\TT^k(\mathbf x)) - \hat F(\TT^k(\mathbf x))^2 .
\end{equ}
In the large sample limit $K \to \infty$, the above loss function can be rewritten as the population risk
\begin{equs}\label{e:mse1}
\mathcal L(W)  = \lim_{K \to \infty} \mathcal L_K(F^*(\,\cdot\,), \hat F(\,\cdot\,;W)) =   \frac 12 \int (F^*(\mathbf x) - \hat F(\mathbf x;W))^2 \nu(\d \mathbf x)\,,
\end{equs}
expressed above as a function of the parameters of the estimator. While our analysis extends to more general loss functions, for concreteness and ease of exposition we restrict our discussion to the MSE.

\subsection{RNN structure} The family of models we consider are Elman-type Recurrent Neural Networks  of hidden width $n \in \mathbb N$. Such a neural network can be written as
\begin{equ}\label{e:finitewidth}
\begin{aligned}
\hat F(\mathbf x;\mathbf W) & = 
\mathbf H_{hy}({\mathbf x}) \\
\mathbf H_{hy}({\mathbf x}) & = { \frac 1 n} \wwhy \sigma_h(\mathbf H_{hh}(\mathbf x,0) + \mathbf H_{xh}(\mathbf x_0))\\
\mathbf H_{hh}(\mathbf x, k) & = { \frac 1 n}  \wwhh \sigma_h(\mathbf H_{hh}(\mathbf x,k+1) + \mathbf H_{xh}(\mathbf x_{-(k+1)}))\\
\mathbf H_{hh}(\mathbf x, L) & = 0\\
\mathbf H_{xh}(\mathbf x_k) & = \wwxh  \cdot \mathbf x_{k}
\end{aligned}
\end{equ}
where we assume that $\mathbf x_k \in \mathbb R^\din$, $\wwxh\in \mathbb R^{n \times \din}$, $\wwhy\in \mathbb R^{ n}$, $\wwhh\in \mathbb R^{n \times n}$ and activation function $\sigma_h~:~\mathbb R \to \mathbb R$
is applied component-wise.   The structure of the network is represented in \fref{f:structure}.

\begin{figure}
\centering
\includegraphics[width = 0.45\linewidth]{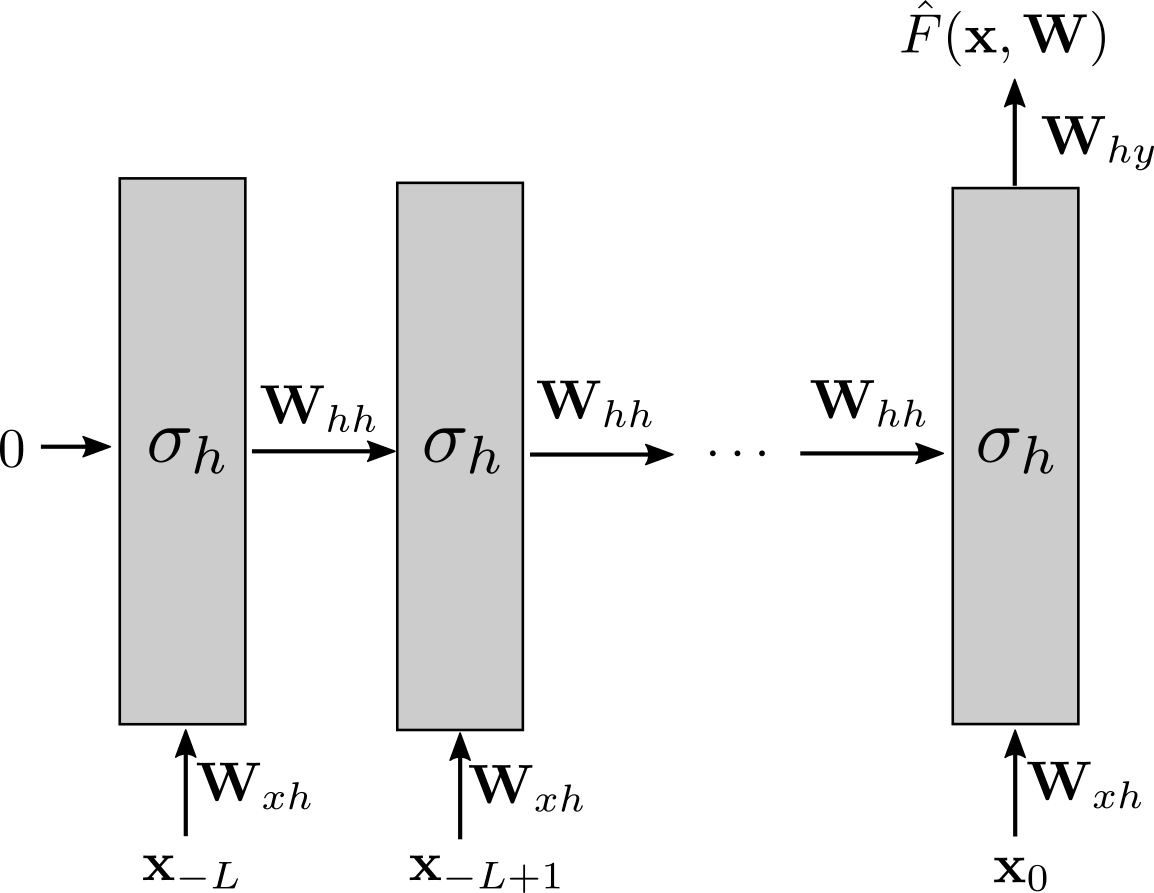}
\caption{The many-to-one structure of the Elman-type RNN. }
\label{f:structure}
\end{figure}%

To investigate the convergence properties of RNNs as $n\to \infty$, we will apply the neuronal embedding formalism from \cite{pham20global,pham21global}.
This formalism lifts the labeling of the neurons of the network to an abstract probability space $(\Ohh, \mathcal F_{h}, \phh)$, and the neural network weights are interpreted as a function of these abstract indices. This lifting allows for the representation of
any network as a specific choice of labelings, and
 equivalent relabelings of the neural network weights are different realizations of an abstract (random) labeling process.
In this formalism, the weight functions can then be written as
\begin{equ}
\wxh(\theta)\in \mathbb R^d
\qquad \whh(\theta,\theta')\in \mathbb R
\qquad \why(\theta')\in \mathbb R
\end{equ}
for $\theta, \theta' \in \Ohh$. A precise definition of $\theta, \theta'$ and of the coupling procedure to identify the neuronal embedding with the infinite width limit of the network \eref{e:finitewidth} is given in the next section. The mean-field representation is the  continuous version of the network introduced above, representing matrix multiplications as integral kernels and can be written as
\begin{equ}\label{e:mfrnn}
\begin{aligned}
\hat F(\mathbf x;W)  = &
H_{hy}(\mathbf x) \\
H_{hy}(\mathbf x)  = &\int \why(\theta) \sigma_h(H_{hh}(\theta; \mathbf x,0)+H_{xh}(\theta; \mathbf x_0)) \phh(\d\theta)\\
H_{hh}(\theta; \mathbf x,k)  =& \int \whh(\theta, \theta')
\sigma_h(H_{hh}(\theta';\mathbf x,k+1)+ H_{xh}(\theta';\mathbf x_{-(k+1)})) \phh(\d \theta')\\
H_{hh}(\theta;\mathbf x, L)  \equiv & \, 0\\
H_{xh}(\theta; \mathbf x_k)  = & \wxh(\theta) \cdot \mathbf x_{k}
\end{aligned}
\end{equ}

As the next example shows, any finite-width RNN $\hat F(\ww;\xx)$ can be \emph{embedded} into the mean-field representation.
\begin{example}(Finite-width RNN)\label{ex:1} For any choice of parameters $\ww = \{\wwxh, \wwhh, \wwhy\}$ for a width-$n$ network for $n<\infty$ and assuming $\din = 1$ we can set
  $
  \Ohh  := \{1,2,\dots, n\}\,.
$
The measure $\Phh$ can be chosen as the uniform measure on $\{1,2,\dots, n\}$. Then, it is readily seen that, setting $\whh(i,j) := (\wwhh)_{ij}$, $\wxh(i) := (\wwxh)_{i}$ and $\why(j) := (\wwhy)_{j}$ for $i,j \in \{1,\dots, n\}$ we have that \eref{e:mfrnn} gives the same output as \eref{e:finitewidth}.
\end{example}

\subsection{Initialization and Coupling procedure} We now introduce the coupling procedure that connects the evolution of finite-width neural networks \eref{e:finitewidth} to their mean-field representation \eref{e:mfrnn}. This coupling procedure is performed \emph{at initialization}, \ie before training starts. We will respectively denote the  weights of the finite-width network and of the mean-field limit at initialization by $  \mathbf W^0$ and $W^0$.
Instrumental to introducing the coupling procedure between the finite-width and the infinite-width neural network is the notion of \emph{neuronal embedding}. Given a family $I$ of initialization laws {indexed by the width $n$ of the hidden layer,}
\begin{equ}
  I = \{\rho_n~:~\rho_n \text{ is the law of $\mathbf W^0$ for a network of width $n$}\}
\end{equ}
we consider the parameters  $\mathbf W^0$
of the width-$n$ network as samples from the corresponding distribution $\rho_n \in I$.

We call $(\Ohh, \phh, W)$ a neuronal embedding for the neural network with initialization laws in $I$ if for every $\rho_n$ there exists a sampling rule $\bar P_{n}$ such that
\begin{enumerate}
    \item $\bar P_{n}$ is a distribution on $\Ohh^n$ (not necessarily a product distribution) with marginals given by $\phh$
    \item The mean-field weights $W = (\wxh, \whh, \why)$ are such that, if $(\theta(j))_j \sim \bar P_n$,  then for every $n$ with $i,j \in \{1,\dots,n\}$:
    \begin{equ}
      \text{Law}(\wxh(\theta(i)), \whh(\theta(i), \theta(j)), \why(\theta(j)))=\rho_n.
    \end{equ}
\end{enumerate}
The above definition decomposes the concept of neural network weights to two parts: the first part is a deterministic function of possibly continuous arguments and the second part consists of a random map $\theta$ transforming the index $i$ to a (random) argument of the weight function $W$. A finite-width network is then seen as a choice of the map $\theta$ and weight function $W$. The evolution of the weights is captured, for a choice of $\theta$, by the dependence of $W$ in time (the time evolution will be detailed in the next section). Specifically, we couple $\mathbf W^0$ and $W^0$ as follows:
\begin{enumerate}
    \item Given a family of initialization laws $I$, we choose $(\Ohh, \phh, W^0)$ to be a neuronal embedding of $I$ and initialize the dynamical quantities $W^0(\cdot)$.
    \item Given $n \in \mathbb N$ and the sampling rule $\bar P_n$, we sample $(\theta(1),\dots, \theta(n))\sim \bar P_n $ and set $\wwhh^0(i,j) = \whh^0(\theta(i), \theta(j))$, $\wwxh^0(i) = \wxh^0(\theta(i))$ and $\wwhy^0(j) = \why^0(\theta(j))$ for $\,j\in \{1,\dots , n\}$.
\end{enumerate}

  The key property of the neuronal embedding construction is the decomposition of the probability space generating an instance of the neural network into a product space over different layers. This decomposition captures the symmetry of the neural network's output under certain permutations of the indices of the neurons, thereby generalizing the representation as an empirical measure used in \cite{ChizatBach18, spiliopoulos18, RotVE18, MeiMonNgu18}. The following example helps clarify this analogy.

\begin{example}
In the case of the finite-width network discussed in Example~\ref{ex:1}, the sampling rule $\theta(i) = i + \omega$ with $\omega \in \Ohh$ common to the whole layer and distributed uniformly on $\Ohh$ satisfies the above conditions. We further notice that $\theta(i) = \iota(i)$ for any (random) permutation $\iota$ of $\{1,\dots, n\}$ realizes the same neural network, \ie a neural network with the \emph{same} weights $\ww$, up to permutation of the indices of its neurons.
\end{example}
While the example above illustrates the connection between neuronal embeddings and finite-width neural networks by using finite probability spaces, the same connection can be established more abstractly in the case of \abbr{iid} initializations for arbitrary and infinite-width networks by means of the Kolmogorov extension theorem.

\begin{example}
In the case of \abbr{iid} initialization, the neuronal embedding acquires a more explicit formulation. For a given probability space $(\Lambda, \mathcal G, P_0)$ we define $p_{xh}(c), p_{hh}(c,c')$ and $p_{hy}(c)$ which are respectively $\mathbb R^\din$-valued,  $\mathbb R$-valued and $\mathbb R$-valued random processes indexed by $(c,c') \in [0,1]\times [0,1]$. For any $n$ and any collection of indices $\{c^{(i)},(c')^{(i)}~:~i \in \{1,\dots,n\}\}$ let $S$ be the set of indices and $R$ be the set of pairs of indices, we let $\{p_{xh}(c)~:~c \in S\}$, $\{p_{hh}(c,c')~:~(c,c')\in R\}$, $\{p_{hy}(c)~:~c \in S\}$ be independent. Then we let
$
  \mathrm{Law}(p_{xh}(c)) = \rho_{xh}, \, \mathrm{Law}(p_{hh}(c,c')) = \rho_{hh},\, \mathrm{Law}(p_{hy}(c)) = \rho_{hy}
$
for all $c \in S$, $(c,c')\in R$. This space exists by Kolmogorov extension theorem. The desired neuronal embedding is obtained by  taking $\Omega_{h} = \Lambda \times [0,1]$, equipped with the measure $ P_{hh} = P_0\times \mathrm{Unif}([0,1])$ and we define the weight functions as
\begin{equs}
  \wxh((\lambda_1, c)) &= p_{xh}(c)(\lambda_1)\\
   \whh((\lambda_1, c),(\lambda_2,c')) &= p_{hh}(c,c')(\lambda_1,\lambda_2)\\
      \why((\lambda_2, c)) &=  p_{hy}(c)(\lambda_2).
\end{equs}
\end{example}
In order to state our results we assume that the dependence of $\theta(i)$ and $\theta(j)$ for $i\neq j$ is sufficiently weak, as stated in \aref{a:indep} below. While this condition is trivially satisfied by \abbr{iid} initialization introduced above, it makes our analysis applicable to more general -- not fully necessarily \abbr{iid} -- initialization procedures.


\subsection{Training dynamics}
A popular algorithm to minimize the MSE  \eref{e:mse1} is given by \emph{gradient descent}: starting from an initial condition
$\ww(0)$, 
we update the parameters $\ww$ in the direction of steepest descent of the loss function:
\begin{equ}\label{e:pgu1}
\mathbf W(j+1) := \mathbf W(j) - \beta D_{\mathbf W} \mathcal L(\mathbf W)\,,
\end{equ}
where $D_\ww$ represents the Fr\'echet derivative with respect to $\ww$, $j \in \mathbb N_0$ indexes the timesteps of the algorithm and $\beta$ denotes the stepsize of the discrete-time update.

In this work, we consider the  regime of asymptotically small constant
step-sizes, $\beta \to 0$. In this continuum limit, the stochastic
component of the dynamics is averaged before the parameters of the
model can change significantly.  This allows us to consider the parametric update as a deterministic dynamical system emerging from the
averaging of the underlying stochastic algorithm corresponding to the limit of infinite sample sizes.
This is known as the ODE
method \cite{Borkar:09} for analyzing stochastic approximations. We focus on the analysis
of this deterministic system to highlight the core dynamical properties of our training algorithm.

We denote
$$\mathbf W(t) := \{\wwxh(t; \cdot), \wwhh(t; \cdot,  \cdot), \wwhy(t; \cdot)\}$$
as the continuous-time, averaged trajectory of the finite-width weights with initial conditions
$\wwxh(0; \cdot)=\wwxh^0(\cdot)$, $\wwhh(0; \cdot,  \cdot)=\wwhh^0(\cdot,\cdot)$, $\wwhy(0; \cdot)=\wwhy^0(\cdot)$. The gradient descent dynamics for these quantities can be written as the following
\abbr{ode}s
\begin{equ}\label{e:gradientdescent}
    \partial_t \mathbf W(t) = - D_{\WW} \mathcal L(\mathbf W(t))\,.
\end{equ}
While the dynamics of both $\wwxh$ and $\wwhy$ will be described by the above equation, we truncate the evolution of $\wwhh$ in an interval of width $R>0$ as follows:
\begin{equs}\label{e:truncation1}
  \partial_t \mathbf W_{hh}(t) =& - \chi_R(\wwhh(t)) \odot D_{\WW_{hh}} \mathcal{L}(\mathbf W(t))
\end{equs}
where $\odot$ denotes the Hadamard product and  $\chi_R~:~\mathbb R\to \mathbb R$ is a smooth indicator function acting component-wise on its argument and such that $\chi_R(w) = w$ if $\|w\| \leq R/2$ and $\chi_R(w) \equiv 0$ if $\|w\| \geq R$. \aa{We comment on the reasons for this truncation in \rref{r:truncation}.}

Analogously, we denote
$$W(t) := \{\wxh(t; \cdot), \whh(t; \cdot,  \cdot), \why(t; \cdot)\}\,,$$
as the continuous-time trajectory of the mean-field weights with initial condition $\wxh(0; \cdot)=\wxh^0(\cdot)$, $\whh(0; \cdot,  \cdot)=\whh^0  (\cdot,\cdot)$, $\why(0; \cdot)=\why^0(\cdot)$, obeying the set of
\abbr{ode}s
\begin{equs}
 \partial_t \why(t;\theta) =& - \frac{\delta}{\delta \why} \mathcal L(W(t))\\
  \partial_t \whh(t;\theta, \theta') =& - \chi_R(\whh(t;\theta, \theta')) \frac{\delta}{\delta \whh} \mathcal L(W(t))\qquad\label{e:mfpde}\\
    \partial_t \wxh(t;\theta) = & - \frac{\delta}{\delta \wxh} \mathcal L(W(t))
\end{equs}
where $\frac\delta{\delta W}$ denotes the variational derivative (Fréchet derivative) with respect to $W$. While the explicit expressions for these dynamics are derived in Appendix~\ref{s:RHS}, we give here the update for the last layer of mean-field weights:
\begin{equs}\label{e:deltaw}
\partial_t  \why(t;\theta)  =& - \int  (\hat F(\mathbf x;W(t)) - F^*(\mathbf x))
\sigma_h\big(H_{hh}(\theta;\mathbf x,0) +  H_{xh}(\theta;\mathbf x_0)\big) \nu(\d \mathbf x)\,.
\end{equs}
In the next section we will leverage the fact that this quantity must be 0 at stationarity to establish the desired optimality result.
\section{Optimality results}\label{s:results}

To state the main results of this paper, \aa{denoting by $L_R^\infty(\phh)$ whe set of functions on $\Omega_{h}$ that are essentially bounded by $R>0$,} we formulate the following assumption:
\begin{assumption} \label{a:1} Consider a neuronal embedding $(\Ohh,\phh, W)$ and consider a mean-field limit associated with the neuronal ensemble $(\Ohh, \phh)$ with initialization $W(0) = W^0$. We assume that there exists $K>R$ such that
  \begin{enumerate}
    \item[a)] {\rm Regularity of $\sigma$:}
    $\sigma_h$ is bounded, differentiable, $\sigma_h(0) = 0$, $\sigma_h'(0)\neq 0$ and $D\sigma_h$ is $K$-bounded and $K$-Lipschitz.
    \item[b)] {\rm Universal approximation}: The span of $\{\sigma_h(\wxh\cdot \xx_0)\,:\,\wxh \in \mathbb R^\din \}$ is dense in $L^2(\nu_0)$.

    \item[c)] {\rm Diversity at initialization}: The support of the weight functions $\whh^0, \wxh^0$ at initialization satisfies
$$\supp(\wxh^0(\theta),\whh^0(\cdot,\theta),\whh^0(\theta,\cdot)) = \Pxh \times L_R^\infty(\phh) \times L_R^\infty(\phh)\,.$$
Throughout the paper we denote by $\whh(\cdot,\theta)$ the random (in $\theta$) mapping $\theta' \mapsto \whh(\theta',\theta)$.
\item[d)] {\rm Regularity at initialization}: 
The weight functions $\why^0, \whh^0, \wxh^0$ at initialization satisfy \aa{$\sup_{\theta, \theta'} |\whh^0(\theta, \theta')| \leq R$ and} given $E_1(m) = \mathbb E(|\wxh^0(\theta)^m|)^{1/m}$ and
$ E_2(m) =
\mathbb E(|\why^0(\theta)^m|)^{1/m}\quad $ then
\begin{equ}
    \sup_{m \geq 1} \frac 1 {\sqrt m}\pq{ E_1(m)
    \vee E_2(m)}< K\,.
\end{equ}
  \end{enumerate}
  \end{assumption}

 Most of the assumptions made above are standard in the literature on mean-field limits of neural networks, and were first formulated in similar terms in \cite{ChizatBach18} and \cite{pham20global}. \aref{a:1}a) gives technical conditions on the regularity of the nonlinearities, ensuring that the training dynamics are well-behaved. \aa{The condition on the nonvanishing derivative at the preimage of $0$, which without loss of generality is assumed to be at $0$ itself, is required to preserve expressivity of the network while allowing for uniform in time boundedness of the hidden weights.} \aref{a:1}b) demands sufficient expressivity of the activation function, required to approximate any function of a finite list of inputs $\{\xx_{-L},\dots,\mathbf x_0\}$. This condition replaces the convexity assumption from \cite{ChizatBach18}, and is satisfied by any nonlinearity for which the universal approximation theorem holds \cite{Cybenko:89,Barron:93}, \eg $\tanh$. \aref{a:1}c) guarantees that the initial condition is such that the expressivity from b) can actually be exploited. This property, which as we shall show is preserved by the network throughout training, ensures that the argument of the nonlinearity at each layer is sufficiently varied, and was first introduced in \cite{pham20global}. Combining this with \aref{a:1}b) ensures, by induction, that there is no information bottleneck throughout the depth of the unrolled network and that the model is highly expressive throughout training. Finally, \aref{a:1}d) is a technical assumption on the data and on the weights guaranteeing the well-posedness of the training dynamics.

 \begin{remark}\label{r:iid}
  We note that \aref{a:1}c) is significantly stronger than the analogous ``sufficient support'' assumption from \cite{ChizatBach18}. In particular, this assumption is \emph{not} satisfied if the weights of each layer are sampled \abbr{iid} from \emph{any} initialization law $\mu$. As we comment in the proof of our results, relaxing this assumption to include \abbr{iid} initialization would significantly reduce the expressivity of the untrained infinite-width network with respect to predictors $\xx_k$ at timesteps $k<0$. More specifically, an \abbr{iid} initialization of the weights combined with the infinite width limit we are considering results in a highly degenerate hidden state of the network. Because of the intrinsic depth of RNN structures, this generates in turn a bottleneck effect preventing information from values of the predictors in the distant past to propagate through the network.
 \end{remark}

 \begin{remark}\label{r:truncation} \aa{The truncation of the dynamics of the hidden layer weights \eref{e:truncation1} \eref{e:mfpde} was introduced in order to guarantee existence and uniqueness of the solution to both the finite-width and the mean-field equations. Indeed, in the absence of this cutoff, weight-sharing in this class of RNNs would result in a non-Lipschitz RHS for the dynamical equation \eref{e:gradientdescent}, as shown explicitly in  Appendix~\ref{s:RHS}. Given this lack of regularity, existence of the solution cannot be guaranteed by standard analytical tools. However, in practice the weights are stored using a floating-point representation which is intrinsically bounded, and we argue that in this sense the truncation of their trajectories is a relatively natural assumption.}
\end{remark}

We now proceed to present the main results of the paper, which we divide into two parts:

\subsection{Convergence}

The main result in this section is the convergence of the finite-width network trajectories to the mean-field limit, analogously to
Thm.~18 in \cite{pham20global}. More specifically, for a given neuronal ensemble $(\Omega, \pp)$ and sample $\mathbf W$ from $\pp$ we define the following distance or error metric
$\mathcal D_\tau(W, \mathbf W)$ for any $\tau >0$ as
\begin{equs}
  &\mathcal D_\tau(W, \mathbf W):= \sup_{t \in (0,\tau)}\left(\frac 1{n^2}\| \whh(t;\theta(i), \theta(j)) - \wwhh(t;i,j)\|_2 \right.\vee \frac 1 {n} \|\wxh(t;\theta(j))-\wwxh(t; j)\|_2\\
  & \qquad \qquad \qquad\qquad \qquad  \vee\left. \frac 1 {n} \|\why(t;\theta(j))-\wwhy(t; j)\|_2\right)
\end{equs}
where $\|\,\cdot\,\|_2$ denotes, depending on its argument, the Frobenius norm or the classical $\ell_2$ norm. 

\begin{theorem}\label{t:convergence}
For any $R>0$, let Assumptions~\ref{a:01a}, \ref{a:02}, \ref{a:1} and \ref{a:indep} hold. There exist constants $c,c'>0$ such that, for any $\delta>0$, any $L \in \mathbb N$ and $\tau >0$, there exists $n^*\in \mathbb N$ such that for any $n>n^*$ with probability at least $1-\delta-\bar K n\exp(-\bar K n^{c'})$ we have
\begin{equ}
  \mathcal D_\tau(W, \mathbf W) \leq \bar K n^{-c} \sqrt{\log\pc{n^2/\delta+e}}
\end{equ}
where $\bar K$ is a constant that depends on  $L$ and $R$.
\end{theorem}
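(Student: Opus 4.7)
The plan is to follow and adapt the strategy of \cite[Thm.~18]{pham20global}, with modifications to handle the weight sharing across the $L$ unrolled layers. By construction of the coupling we have $\mathcal D_0(W,\mathbf W) = 0$, so the discrepancy accumulates over time from two sources: (i) the finite-width forward pass computes empirical averages over the $n$ sampled neurons, whereas the mean-field one computes integrals against $\phh$; and (ii) this discrepancy is transported through the $L$-fold recursion of the hidden state and through the weight-sharing gradient flow \eref{e:gradientdescent}, \eref{e:truncation1}, \eref{e:mfpde}.

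First I would establish uniform a priori bounds on both trajectories. The truncation $\chi_R$ guarantees $\sup_{t\leq\tau}\sup_{\theta,\theta'}|\whh(t;\theta,\theta')| \vee |\wwhh(t;i,j)| \leq R$, which together with the boundedness of $\sigma_h$ (Assumption \ref{a:1}a)) bounds all hidden states $H_{hh}, \mathbf H_{hh}$ uniformly. Combining this with the fourth moment assumption on $\nu$ (Assumption \ref{a:01a}) and the subgaussian moment bounds on $\wxh^0, \why^0$ (Assumption \ref{a:1}d)), a Grönwall argument applied to \eref{e:deltaw} and its analogues yields that $\sup_{t\leq\tau}\mathbb E|\wxh(t;\theta)|^m$ and the analogous quantity for $\why$ grow at most exponentially in $\tau$ uniformly in $n$. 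These bounds make the right-hand sides of the coupled system locally Lipschitz and all relevant quantities integrable.

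Next I would run the layer-by-layer coupling. For each fixed $t$ and realization $(\theta(1),\ldots,\theta(n))$, the finite-width forward pass \eref{e:finitewidth} equals \eref{e:mfrnn} with $\phh$ replaced by the empirical measure $\frac 1 n \sum_i \delta_{\theta(i)}$. Unrolling from $k=L$ down to $k=0$, the difference $|\mathbf H_{hh}(\mathbf x,k)_i - H_{hh}(\theta(i);\mathbf x,k)|$ splits into a Lipschitz part bounded by $C(L,R)\,\mathcal D_t(W,\mathbf W)$ (from substituting $\wwhh(t;i,j)$ by $\whh(t;\theta(i),\theta(j))$ inside the sum) and a genuine empirical-vs-population fluctuation of the form $\frac 1 n \sum_j f_t(\theta(j)) - \int f_t\,\d\phh$ for a uniformly bounded $f_t$. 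The fluctuation is controlled by a McDiarmid/Bernstein inequality using Assumption \ref{a:indep} (near-independence of the $\theta(j)$'s), yielding a bound of order $\bar K n^{-c}\sqrt{\log(n^2/\delta+e)}$ on an event of probability at least $1-\bar K n\exp(-\bar K n^{c'})$. Inserting these layer-wise bounds into the integral form of the ODEs for $\whh-\wwhh$, $\wxh-\wwxh$, $\why-\wwhy$ and applying Grönwall on $[0,\tau]$ then closes the argument for $\mathcal D_\tau(W,\mathbf W)$.

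The main obstacle, absent from the feedforward analysis of \cite{pham20global}, is the weight sharing: the same entries of $\whh$ appear in all $L$ copies of the hidden-state recursion and in the gradient update for $\whh$ itself, so the Lipschitz dependence of the RHS on the weights involves a circular feedback that multiplies constants by factors depending on $L$ at each stage. This is precisely the reason for the truncation described in \rref{r:truncation}: without $\chi_R$, the local Lipschitz constant of the RHS would depend on $\whh$ itself in an unbounded fashion and the Grönwall closure would fail. A secondary technical difficulty, again specific to the unrolled structure, is that the fluctuation at layer $k$ involves hidden states at layer $k+1$ which have already accumulated their own fluctuation error; this nested structure forces the concentration argument to be carried out jointly across $k \in \{0,\dots,L\}$ via a union bound, which absorbs a factor depending on $L$ into the final constant $\bar K$.
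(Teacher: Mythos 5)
Your overall strategy is the same as the paper's: a priori bounds from the truncation and the boundedness of $\sigma_h$, a layer-by-layer decomposition of the hidden-state discrepancy into a Lipschitz part of order $\mathcal D_t(W,\mathbf W)$ plus an empirical-versus-population fluctuation controlled by concentration under Assumption~\ref{a:indep}, and a Gr\"onwall closure. This is exactly the adaptation of \cite[Prop.~25]{pham20global} that the paper carries out, including your observation that the weight-sharing feedback is tamed by $\chi_R$ and that the nested fluctuations across $k\in\{0,\dots,L\}$ force a joint induction over depth (the paper's Lemmas~\ref{l:Fbound} and \ref{l:Gbound}).

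Two steps that the paper needs and that your sketch as written does not supply. First, the concentration step: the summands $f_t(\theta(j))$ in your empirical-versus-population fluctuation involve the \emph{unbounded} weights $\wxh$ and $\why$ (only $\whh$ is truncated), so McDiarmid does not apply directly and moment bounds alone do not give the bounded-differences or bounded-increment structure you invoke. The paper first replaces $W^0,\mathbf W^0$ by $B$-truncated versions $\underline W,\underline{\mathbf W}$, proves the truncation error is $O(e^{-KB^2+K_T(1+B)})$ with high probability using the sub-Gaussian bounds of Assumption~\ref{a:1}d), runs the whole concentration argument for the truncated trajectories on the event $\max_T(W)\leq K_0(T)B$, and only at the end chooses $B=c_0\sqrt{\log n}$ to balance the two errors; this is where the $\sqrt{\log(n^2/\delta+e)}$ in the final bound comes from. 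Second, uniformity in time: your Gr\"onwall step needs the fluctuation bound to hold simultaneously for all $s\in[0,\tau]$ inside the integral, whereas the concentration inequality only gives it at each fixed time with some failure probability. The paper handles this by discretizing time with mesh $\xi=n^{-1/2}$, taking a union bound over the $T/\xi$ grid points, and controlling the off-grid error through a Lipschitz-in-time estimate on $\partial_t W$ (Lemma~\ref{l:xibound}). A minor further omission is that the gradient right-hand sides involve the backward-pass quantities $\Gamma_i$, which require their own depth induction and concentration (Lemma~\ref{l:Gbound}) in addition to the forward-pass hidden states you analyze; your phrase ``and its analogues'' covers this only implicitly.
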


The proof of the above result mimics the one in \cite{pham20global} and is provided in the appendix for completeness. The main argument of the proof is similar to classical propagation of chaos results \cite{sznitman91}. The first step of the argument establishes sufficient regularity of the gradient dynamics and guarantees existence and uniqueness of the solution to \eref{e:mfpde}. Then, one bounds the difference in differential updates for the particle system and the mean-field dynamics as a function of the distance $\mathcal D_t(W, \ww)$.
The proof is concluded by an application of Gr\"onwall's inequality.

\subsection{Optimality}

The main optimality result is presented in the following theorem.
\begin{theorem}\label{t:optimality}
  For any $R>0$ let Assumption~\ref{a:01a}, \ref{a:02} and \ref{a:1} hold and assume that the trajectory $W(t)$ solving \eqref{e:mfpde} converges to $\bar W$ \aa{in the following sense: for all $i \in \{1, \dots, L\}$ the following quantities vanish in the limit $t \to \infty$,
  \begin{equs}
    & \bullet \,\,\mathrm{ess\text{-}sup}_{\theta \in \mathrm{supp}(\phh)} |\partial_t W_{hy}(t;\theta)|  \\
    &\bullet \int |\bar W_{hy}(\theta) - W_{hy}(t;\theta)|^2 \phh(d\theta) \\
    &\bullet \int \bar W_{hy}(\theta^{(0)})^2 \pc{\prod_{j=1}^{i-1} \bar W_{hh}(\theta^{(j-1)},\theta^{(j)})}^2 \pc{\bar W_{hh}(\theta^{(i-1)},\theta^{(i)})- W_{hh}(t;\theta^{(i-1)},\theta^{(i)})}^2P_{hh}^{\otimes i+1}(\d \theta^{(0)},..., \d \theta^{(i)})   \\
    &\bullet \int \bar W_{hy}(\theta^{(0)})^2 \pc{\prod_{j=1}^{i-1} \bar W_{hh}(\theta^{(j-1)},\theta^{(j)})}^2\pc{\bar W_{xh}(\theta^{(L-1)})- W_{xh}(t;\theta^{(L-1)})}^2\phh^{\otimes i+1}(\d \theta^{(0)},..., \d \theta^{(i)})
    \end{equs}
    }
    Then $\lim_{t \to \infty} \mathcal L(W(t)) = 0$\,.
\end{theorem}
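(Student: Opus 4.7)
The proof plan has three phases: pass to the stationary limit of the $\why$ update, derive an orthogonality condition on the residual against outer-layer features, and show that these features span a dense subspace of $L^2(\nu)$, forcing the residual to vanish.

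First, I would take the $t \to \infty$ limit of \eqref{e:deltaw}. The first convergence bullet (vanishing essential supremum of $|\partial_t W_{hy}|$) forces the right-hand side of \eqref{e:deltaw} to vanish in the limit for $\phh$-a.e.\ $\theta$. The remaining bullets are tailored to give the exact weighted $L^2$ control needed to swap the limit with the integral: each path-weighted quantity of the form $\int \bar W_{hy}(\theta^{(0)})^2 \prod_j \bar W_{hh}(\theta^{(j-1)},\theta^{(j)})^2 (\bar W_\bullet - W_\bullet(t))^2$ is precisely the linearized sensitivity of $\hat F(\mathbf x; W(t))$ with respect to a perturbation of an inner weight along a depth-$i$ recursion path. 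Summing these contributions over $i=1,\dots,L$ and invoking the $L^\infty$ bound on $\sigma_h'$ from Assumption~\ref{a:1}(a) yields $\hat F(\cdot; W(t)) \to \hat F(\cdot; \bar W)$ in $L^2(\nu)$ together with the analogous convergence of $H_{hh}(\theta;\cdot,0)$ and $H_{xh}(\theta;\cdot)$. This produces the orthogonality relation
\begin{equation*}
\int \bigl(\hat F(\mathbf x; \bar W) - F^*(\mathbf x)\bigr)\, \sigma_h\bigl(\bar H_{hh}(\theta;\mathbf x,0) + \bar H_{xh}(\theta;\mathbf x_0)\bigr)\, \nu(d\mathbf x) = 0
\end{equation*}
for $\phh$-a.e.\ $\theta \in \supp(\phh)$.

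Second, I would show that the family $\{\sigma_h(\bar H_{hh}(\theta;\cdot,0) + \bar H_{xh}(\theta;\cdot,0)) : \theta \in \supp(\phh)\}$ is total in the subspace of $L^2(\nu)$ consisting of functions measurable with respect to $\sigma(\mathbf x_{-L},\dots,\mathbf x_0)$. Since $F^*$ lies in this subspace by Assumption~\ref{a:02}, the orthogonality above then forces the residual to vanish and hence $\mathcal L(\bar W) = 0$. Totality is established by a layer-by-layer induction over the unrolled depth, the workhorse being the deep universal approximation result for networks with uniformly bounded hidden weights advertised as contribution (3) of the paper. At the deepest layer one uses $\bar H_{xh}(\theta;\mathbf x_{-L}) = \bar W_{xh}(\theta) \cdot \mathbf x_{-L}$ together with Assumption~\ref{a:1}(b) to obtain denseness in $L^2$ of the marginal of $\nu$ on $\mathbf x_{-L}$; one then lifts this density through the recursion, exploiting the fact that $\bar W_{hh}(\cdot,\theta)$ ranges over the full $L_R^\infty(\phh)$-ball and so can produce arbitrary bounded kernels against the previous-layer features before composition with $\sigma_h$.

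The main obstacle is establishing that the full-support property of Assumption~\ref{a:1}(c) is preserved along the gradient flow, so that it still holds at the limit $\bar W$ and the layer-by-layer denseness induction actually applies. The truncation $\chi_R$ keeps $\whh$ inside $L_R^\infty$ and freezes the dynamics near the boundary, so strict preservation of the support boundary is routine; the delicate part is showing that for weights in the interior, the pushforward of the initial law under the flow retains full support, which amounts to a continuity/injectivity-in-initial-data statement for the gradient flow acting on $\whh$. The condition $\sigma_h'(0) \ne 0$ from Assumption~\ref{a:1}(a), flagged in the discussion following the assumption, enters precisely here by ensuring that the induced drift on $\whh$ remains non-degenerate even when deeper features are small, preventing a collapse of diversity as one propagates approximation guarantees up the layers of the unrolled network.
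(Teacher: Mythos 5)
Your proposal follows essentially the same route as the paper: use the vanishing of $\partial_t\why$ together with the path-weighted $L^2$ bullets to pass the stationarity relation to the limit $\bar W$, obtain the orthogonality of the residual against the outer-layer features, and conclude via the layer-by-layer density induction (the paper's Proposition~\ref{p:spanning}) whose hypothesis is preserved along the flow (Lemma~\ref{l:support}). The only small inaccuracy is where you locate the role of $\sigma_h'(0)\neq 0$: in the paper it enters the density lemma itself (a Taylor expansion of $\sigma_h$ at $0$ lets arbitrarily small bounded hidden weights act linearly, circumventing the $L_R^\infty$ truncation), not the non-degeneracy of the drift in the support-preservation argument.
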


This result asserts that if the gradient descent dynamics \eref{e:mfpde} converges to a stationary point $\bar W$, 
that point must be a global minimizer, \ie it must approximate the underlying function to arbitrary accuracy.
We prove the above result in three steps. First,
we show in \pref{p:spanning} that if the weights at initialization are sufficiently varied (\aref{a:1}c)) then the network enjoys a high level of expressivity, inherited from the properties of $\sigma_h$ \aref{a:1}a) and b). Such expressivity in turn implies that the mean-field vector fields evaluated at a suboptimal fixed point of the dynamics \eref{e:mfpde} cannot vanish everywhere
in neuronal embedding space.
 In other words, a network whose weights have sufficient support cannot correspond to a suboptimal stationary point of the gradient dynamics.

We then show in \lref{l:support} that such sufficient notion of support (\aref{a:1}c)) is preserved by the gradient descent dynamics \eref{e:mfpde} throughout training.
For any finite time, this is true by topological arguments: the full support property cannot be altered by a continuous vector field such as \eref{e:mfpde}.

Finally, we show that the gradient descent dynamics cannot converge to a spurious fixed point by combining the two partial results above.
In particular, we show that by the preserved expressivity of the network throughout the dynamics proven above, the fact that the time derivative of $W_{hy}(t)$ \eref{e:deltaw} must vanish almost-everywhere as $t \to \infty$ implies that the difference between the approximator and the target function $F^*$ must also vanish almost everywhere in the limit. In other words, combining the assumption on convergence of $W_{hy}(t)$ with the nondegeneracy of the $W_{hy}$-Jacobian of the network (following from expressivity) imples that the limiting point must be optimal.

There are multiple technical challenges that need to be addressed in this proof with respect to the proof techniques used in previous results. The most important one stems from the fact that the input structure of the (unrolled) network is different from a standard feedforward network or ResNet. The additive combination of the input with the hidden state of the previous ``layer'', together with weight sharing, results in possible degeneracies of the dynamics that need to be taken into account in the proof.
By studying the risk minimization problem in equation \eref{e:mse1} and considering exclusively the dynamics of $\why$ \eref{e:deltaw}, we bypass the problem of weight sharing in the unrolled network by leveraging the expressivity \aref{a:1}c).

\aa{Finally, we note that the boundedness of $\whh$ prevents us from using any of the classical expressivity results leveraging the vector space structure of the space of admissible weights. For this reason, we adapt our proof to bypass the boundedness of the hidden weights resulting from the truncation in \eref{e:mfpde}. To do so, we leverage the fact that, by \aref{a:1}a), the image under $\sigma_h$ of a function whose supremum is close to $0$ is close to the identity. Combining this with the possibility of choosing arbitrarily small hidden weights results in the network being able to propagate information throughout its layers. Finally, the unboundedness of $\why$ allows to recover this information and therefore to realize the expressive potential of the network.}

\section{Numerical experiments}\label{s:numerics}

In this section we numerically validate the
theoretical
optimality and convergence results in our paper with some simulations.

\begin{figure}[t!]
  \centering
    \begin{subfigure}{.49\textwidth}
    \centering
    \includegraphics[width = 0.99\linewidth]{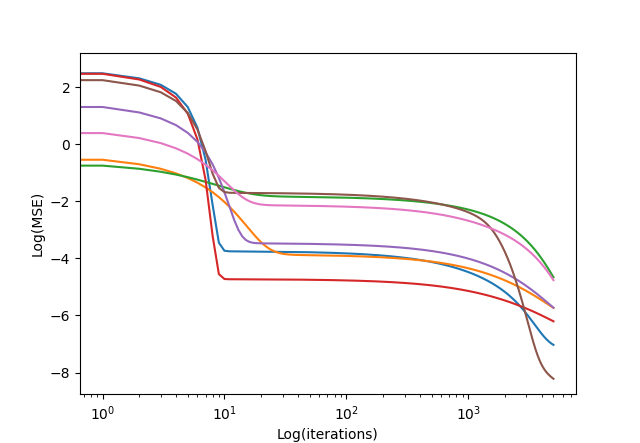}
    \caption{}
    \label{fig:sub1}
  \end{subfigure}
  \begin{subfigure}{.49\textwidth}
    \centering
    \includegraphics[width = 0.99\textwidth]{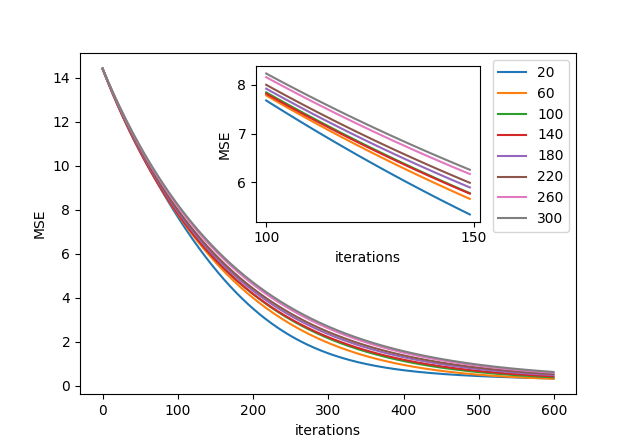}
    \caption{}
    \label{fig:sub3}
  \end{subfigure}
    \caption{{Results of numerical experiments. In \fref{fig:sub1} we plot evolution of the MSE $\tilde {\mathcal L}(\hat{\mathbf W})$ on a log-log scale as a function of training steps for predictors generated by a deterministic  
    dynamical system.
    Here, }different curves correspond to different initializations of the student network. For each experiment, the data, the weights of the teacher and the initialization weights of the student are generated anew independently from previous experiments. In \fref{fig:sub3}, we plot the evolution of the MSE for a network with growing size (the legend indicates the size of the student network). The inset zooms on the same plots for timesteps $k \in [100,150]$.}
    \label{f:numerics}
  \end{figure}

\paragraph{Network architecture  and model}

The network model we consider is a wide a RNN in the mean-field regime in a teacher-student scenario. For the optimality results we set the width to be
$n_s = 1000$ and for the convergence results we  train the student RNN with increasing size ($n_s \in \{20,60,100,140,\dots, 300\}$).

We train a so-called \emph{student} many-to-one RNN $\hat F$ with $d = 1$ and hidden layer width $n_s$ to learn the output of a \emph{teacher} many-to-one RNN $F^*$, with the same input and output size and hidden width $n_t = 15$. Both neural networks have hidden activation $\sigma_h(\cdot) = \tanh(\cdot)$
and their weights are initialized \abbr{iid} as follows:
\begin{equs}
  \text{teacher:} \qquad\qquad\qquad\qquad&\text{student:}\\\begin{cases}{\mathbf W}_{xh} \sim \mathcal N(1,1)\\ {\mathbf W}_{hh} \sim \mathcal N(0,n_t^{-2})\\ {\mathbf W}_{hy} \sim \mathcal N(0,n_t^{-2})\end{cases}\quad&
   \begin{cases}{\mathbf W}_{xh} \sim \mathcal N(0,5)\\ {\mathbf W}_{hh} \sim \mathcal N(0,10 \cdot n_s^{-2})\\ {\mathbf W}_{hy} \sim \mathcal N(0,10 \cdot n_s^{-2}).\end{cases}
\end{equs}

\paragraph{Simulated data}

        The predictors for our 
        simulation are generated as samples of length $L = 10$ from the stationary trajectories of the shift map $T(x) = x+1$ acting on the sphere $\mathbb X = S^1 = [0, 2\pi)$. To do so, we sample the initial point $\mathbf x_{-L}^{(j)}$ \abbr{iid} from the invariant measure $\nu_0(\d x) = \frac1{2\pi}\d x$ of $T$ supported on $\mathbb X$ and generate the corresponding input sequence as
        $\mathbf x_{-k+1}^{(j)} := T(\mathbf x_{-k}^{(j)}) $
        for $k \in (1,,\dots, L)$.

 \paragraph{Training specifications} The training of the student RNN is performed using the $\texttt{nn}$ package in pytorch \cite{pytorch}. We train the parameters $\hat{\mathbf W}$ to minimize the empirical  Mean Squared Error $\tilde {\mathcal L}(\hat{\mathbf W}):=\frac 1 {m} \sum_{j = 1}^m (F^*(\mathbf x^{(j)}) - \hat F(\mathbf x^{(j)},\mathbf W ))^2$ where $m = 2^{13}\approx 10^4$
 denotes the size of the database. Combining this with our sampling of $\mathbf x^{(j)}$ results in \abbr{iid} samples from the invariant measure $\nu(\mathbf x)$ of $T$, and therefore in the finite-sample equivalent of the population risk \eref{e:mse1}. The optimization is performed using stochastic gradient descent (\texttt{pytorch.optim.SGD}), which is called with a stepsize $\gamma = 3 10^{-3}$ and batch size of $m$, the full database size, thereby resulting in full-fledged gradient descent. The results of the simulation are shown in \fref{f:numerics}. Code is available at \cite{github}.

   \paragraph{Initialization for the convergence results}

    For the convergence results we need to
    initialize the family of student networks in a consistent way. Our procedure for enforcing consistency draws the weights without replacement from a reference student network of width $300$.

\section{Conclusions}\label{s:conclusion}

This work shows that, despite the increased complexity,  RNNs  share common optimality properties with simpler single-layer neural networks \cite{ChizatBach18}. Specifically we show that, under some conditions on the expressivity of the network at initialization, the fixed points of wide Elman-type RNNs' with gradient descent training dynamics in the mean-field regime are globally optimal, \ie that the neural network will perfectly learn the given function of the dynamical system's trajectories. In this sense, while extending previous results on the optimality properties of shallow and deep neural networks to novel architectures, this work contributes to the understanding of deep learning applied to dynamical systems data. The proof is carried out by unrolling the RNN structure and showing that the fixed points of the training dynamics, which preserve a certain notion of support in parameter space, can only be globally optimal.

Possible future developments include relaxing the assumption about the support of the weights at initialization \aref{a:1}c) to a condition that is simpler to realize in practice. Drawing an analogy with autoregressive processes, a promising insight towards solving this problem consists in injecting, at each iteration of the RNN, new directions in the function space spanned by the model by means of, \eg the network biases, so as to reduce the hidden layer's null space. Another possible avenue of future research consists of relaxing the adiabaticity assumption, \ie considering the stochastic approximation problem resulting from the finite number of samples and the finite gradient stepsize. We note that, because of this assumption, our analysis is immune to the exploding gradients problem \cite{pascanu13}. To prevent this problem to affect a finite timestep analysis, another important extension of the present work is to establish similar results for different RNN architectures, such as the LSTM, which given its extensive use in practice is of great interest.

From the theoretical standpoint, the most important
open question concerns establishing quantitative convergence of mean-field dynamics of neural networks: even in the single-layer, supervised setting, despite recent results in specific settings \cite{chizat19}, these guarantees still elude the community's research efforts.

\subsubsection*{Acknowledgments.}
 We thank the anonymous referee for pointing out a gap in our previous proof of \tref{t:eandu}. All authors acknowledge partial support of the TRIPODS NSF grant CCF-1934964. AA acknlwledges partial support of the University of Pisa, through project PRA $2022\_85$. JL acknowledges the partial support of the NSF grant DMS-2012286. SM acknowledges partial support of HFSP RGP005, NSF
DMS 17-13012, NSF BCS 1552848, NSF DBI 1661386, NSF IIS 15-46331, NSF DMS 16-
13261, as well as high-performance computing partially supported by grant 2016-IDG-1013 from the North Carolina Biotechnology Center. AA and SM thank Katerina Papagiannouli and Andrea Aveni for insightful discussions and acknowledge the hospitality of the Max Planck Institute for Mathematics in the Sciences and of the ScaDS institute of the University of Leipzig and Technical University of Dresden during the final part of this project.

\bibliographystyle{plain}

\bibliography{bib}

\appendix
\onecolumn

\numberwithin{equation}{section}
\renewcommand\thefigure{\thesection.\arabic{figure}}

\section{Computation of mean-field ODEs}\label{s:RHS}

In this section we explicitly compute the RHS of the mean-field ODEs \eref{e:mfpde}. Recall the definitions of the population risk $\mathcal L(W)$ from \eref{e:mse1} and of the mean-field approximator
\begin{equ}\label{e:amfrnn}
\begin{aligned}
\hat F(\mathbf x;W) & = 
H_{hy}(\mathbf x) \\
H_{hy}(\mathbf x) & = \int \why(\theta) \sigma_h(H_{hh}(\theta; \mathbf x,0) + H_{xh}(\theta; \mathbf x_0)) \phh(\d\theta)\\
H_{hh}(\theta; \mathbf x,k) & = \int \whh(\theta, \theta') \sigma_h(H_{hh}(\theta';\mathbf x,k+1) + H_{xh}(\theta';\mathbf x_{-(k+1)})) \phh(\d \theta')\\
H_{xh}(\theta; \mathbf x_{-k}) & = \wxh(\theta) \cdot \mathbf x_{-k}
\end{aligned}
\end{equ}
Further, for notational convenience, we define throughout the argument of the nonlinearity as
\begin{equ}\label{e:hh}
  \HH(\theta; \xx, k) := H_{hh}(\theta;\mathbf x,k) + H_{xh}(\theta;\mathbf x_{-k})
\end{equ}
and, when necessary, we will slightly abuse notation and  explicitly write the set of weights generating the hidden state $\HH$ in its argument as $\HH[W](\theta; \xx, k)$.
Furthermore, we define
\begin{equ}\label{e:Df}
  \Delta F(W, \xx) := \hat F(\mathbf x;W(t)) - F^*(\mathbf x)
\end{equ}
so that we can write
\begin{equs}
\frac \delta {  \delta \why}\mathcal L[W](\theta)  & = \int \Delta F(W, \xx)  
\sigma_h\big(  \HH[W](\theta; \xx, 0)\big) \nu(\d \mathbf x)
\end{equs}

We proceed to compute the derivative \abbr{wrt}  $\wxh$:
\begin{equs}
\frac \delta {  \delta \wxh}\mathcal L[W](\theta)  &= \int  \Delta F(W, \xx)  \pq{\frac \delta {\delta \wxh} \int \why(\theta') \sigma_h\big(H_{hh}(\theta';\mathbf x,0) + H_{xh}(\theta';\mathbf x_0)\big)\phh(\d \theta')}(\theta)  \nu(\d \mathbf x)\\
&= \int  \Delta F(W, \xx)
 \int \why(\theta') \Xi_0(\theta;\theta', \xx) \phh(\d \theta') \nu(\d \xx)
\end{equs}
where, denoting here and throughout by $\delta(\theta)$ the Dirac delta distribution,  we define recursively
\begin{equs}
  \Xi_i[W](\theta;\theta', \xx)&:= \frac \delta {\delta \wxh}\sigma_h\big(H_{hh}(\theta';\mathbf x,i) + H_{xh}(\theta';\mathbf x_{-i})\big)(\theta)\\
&  = \sigma_h'\big(  \HH(\theta'; \xx, i)\big)\pc{\pq{\frac \delta {\delta \wxh}H_{hh}(\theta';\mathbf x,i)}(\theta) + \xx_i\delta(\theta'-\theta)}\\
&  = \sigma_h'\big(  \HH(\theta'; \xx, i)\big)\pc{\int \whh(\theta', \theta'')\Xi_{i-1}(\theta; \theta'', \xx) \phh(\d \theta'') + \xx_i \delta(\theta'-\theta)}
\end{equs}
and throughout we slightly abuse notation by suppressing the dependency of $\Xi_i$ on $W$ when clear from the context.
Therefore,  we obtain
\begin{equs}\label{e:deltawhy}
\frac \delta {  \delta \wxh}\mathcal L[W](\theta)
&= \int  \Delta F(W, \xx)
 \left(
 \sum_{i=0}^L
\Gamma_i(W, \theta, \xx)\xx_{-i}\right)\nu(\d \mathbf x)
\end{equs}
where for $i \in \{0,1,\dots, L\}$ we define
\begin{equs}\label{e:gammaj}
  \Gamma_i(W, \theta,\xx) &= \int \why(\theta_0) \sigma_h'(\HH(\theta_0; \xx, 0))\int \whh(\theta_0,\theta_1)\sigma_h'(\HH(\theta_1; \xx, 1))\\
  &\qquad\qquad
  \dots\int \whh(\theta_i,\theta)\sigma_h'(\HH(\theta; \xx, i))\,\phh^{\otimes i+1}(\theta_0,\dots, \theta_i)
\end{equs}
Analogously, we proceed to compute the derivative \abbr{wrt} $\whh$:
\begin{equs}
\frac \delta {  \delta \whh}\mathcal L[W](\theta, \theta')  &= \int  \Delta F(W, \xx)  \pq{\frac \delta {\delta \whh} \int \why(\theta_0) \sigma_h\big(\HH[W](\theta_0;\mathbf x,0) \big)\phh(\d \theta_0)}(\theta,\theta')  \nu(\d \mathbf x)\\
&= \int  \Delta F(W, \xx)
 \int \why(\theta_0) \Xi_0'[W](\theta, \theta';\theta_0, \xx) \phh(\d \theta_0) \nu(\d \xx)
\end{equs}
where we define recursively
\begin{equs}
  \qquad\Xi_i'[W](\theta, \theta';\theta_i , \xx)&:= \frac \delta {\delta \whh}\sigma_h\big(H_{hh}[W](\theta_i ;\mathbf x,i) + H_{xh}[W](\theta_i;\mathbf x_{-i})\big)(\theta, \theta')\\
&  = \sigma_h'\big(\HH[W](\theta_i; \xx, i)\big)\pq{\frac \delta {\delta \whh}H_{hh}[W](\theta_i;\mathbf x,i)}(\theta, \theta') \\
&  = \sigma_h'\big(\HH[W](\theta_i; \xx, i)\big)\left(\Bigg.\int \whh(\theta_i, \theta_{i+1})\Xi_{i+1}'[W](\theta, \theta'; \theta_{i+1}, \xx) \phh(\d \theta_{i+1})\right. \\
&\quad \qquad\qquad \qquad\qquad \left. \Bigg. + \int \sigma_h\big(\HH[W](\theta_{i+1}; \xx, i+1)\big) \delta(\theta_i-\theta)\delta(\theta_{i+1}-\theta') \phh(\d \theta_{i+1})\right)
\end{equs}
and throughout we slightly abuse notation by suppressing the dependency of $\Xi_i'$ on $W$ when clear from the context.
Therefore, we obtain
\begin{equ}\label{e:deltawhh}
\frac \delta {  \delta \whh}\mathcal L[W](\theta,\theta')
= \int  \Delta F(W, \xx)
 \left(\sum_{i=0}^L\Gamma_i(W, \theta, \xx)\sigma_h(\HH(\theta'; \xx, i+1))\right)\nu(\d \mathbf x)
\end{equ}
for $\Gamma_i$ defined in \eref{e:gammai}.

\section{Existence and uniqueness of solutions to ODEs}\label{s:existenceuniqueness}

We now proceed to sketch the proof of existence and uniqueness of the solutions to the mean-field ODEs, stated below. To this aim, fixing throughout a value of the cutoff $R>0$ for \eref{e:mfpde}, we define the sub-Gaussian norm
\begin{equs}
  \llbracket W_{hh}  \rrbracket_{\psi,t} &:= \sqrt{50} \sup_{m \geq 1} \frac 1 {\sqrt{m}}\pc{\int \sup_{s<t} |W_{hh}(s,\theta,
  \theta')|^mP_{hh}^{\otimes 2}(d\theta, d\theta')}^{1/m}\\
  \llbracket W_{hy}  \rrbracket_{\psi,t} &:= \sqrt{50} \sup_{m \geq 1} \frac 1 {\sqrt{m}} \pc{\int\sup_{s<t} |W_{hy}(s,\theta)|^m \phh(d\theta)}^{1/m}\\
  \llbracket W_{xh}  \rrbracket_{\psi,t} &:= \sqrt{50} \sup_{m \geq 1} \frac 1 {\sqrt{m}} \pc{\int\sup_{s<t} |W_{xh}(s,\theta)|^m \phh(d\theta)}^{1/m}
\end{equs}
inducing the norm on the weights $W$
\begin{equ}
  \llbracket W \rrbracket_{\psi,t} := \max\pc{\llbracket W_{hh}\rrbracket_{\psi,t}, \llbracket W_{xh}\rrbracket_{\psi,t}, \llbracket W_{hy}  \rrbracket_{\psi,t}}\,.
\end{equ}
From these definitions we have that $\llbracket W_{hh}  \rrbracket_{\psi,t} \geq \|W_{hh}\|_t$, $\llbracket W_{hy}  \rrbracket_{\psi,t} \geq \|W_{hy}\|_t$,  $\llbracket W_{xh}  \rrbracket_{\psi,t} \geq \|W_{xh}\|_t$
where
\begin{equs}
   \|\whh\|_t& =\pc{\int \sup_{s\leq t} |\whh(s,\theta,\theta')|^{50}\phh^{\otimes 2}(\d\theta, \d \theta')}^{1/50}\\
  \|\why\|_t& = \pc{\int \sup_{s<t} |\wxh(s,\theta)|^{50}\phh(\d\theta)}^{1/50}\label{e:normw}\\
  \|\wxh\|_t& = \pc{\int \sup_{s<t} |\why(s,\theta)|^{50}\phh(\d\theta)}^{1/50}
\end{equs}
 so that
 \begin{equ}\label{e:supboundnorm}
   \llbracket W  \rrbracket_{\psi,t} \geq \|W\|_t
 \end{equ}
for
\begin{equs}
\|W\|_t := \|\whh\|_t \vee \|\wxh\|_t\vee \|\why\|_t\,.
\end{equs}
Note that by \aref{a:1} we have $\|\whh\|_t \leq  R < K$ uniformly in $t\geq 0$.

Furthermore, for a pair of mean-field weights $W, W'$, we define the analogous norm on differences (note the different exponent):
\begin{equs}
\|W- W'\|_t := \|\whh-\whh'\|_t \vee \|\wxh- \wxh'\|_t\vee \|\why- \why'\|_t
\end{equs}
for
\begin{equs}
 \|\whh- \whh'\|_t& :=\pc{\int \sup_{s\leq t} |\whh(s,\theta,\theta') - \whh'(s,\theta,\theta')|^{2}\phh^{
 \otimes 2
 }(\d\theta, \d \theta')}^{1/2}\\
\|\why- \why'\|_t& := \pc{\int \sup_{s<t} |\wxh(s,\theta)- \wxh'(s,\theta)|^{2}\phh(\d\theta)}^{1/2} \label{e:tnorm}\\
\|\wxh- \wxh'\|_t& := \pc{\int \sup_{s<t} |\why(s,\theta)-\why'(s,\theta)|^{2}\phh(\d\theta)}^{1/2}
\end{equs}

Throughout this section we fix an initialization $W^0$ for the mean-field weights of the network.
\begin{theorem}\label{t:eandu}
  Assume that the initialization of the MF ODEs satisfies $\llbracket W^0  \rrbracket_{\psi,0} < K$. Then under \aref{a:1} there exists a unique solution to the MF ODEs \eref{e:mfpde}.
  \end{theorem}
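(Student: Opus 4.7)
The plan is to carry out a Picard–Lindel\"of argument on a Banach space of weight trajectories equipped with the sub-Gaussian norm $\llbracket \cdot \rrbracket_{\psi, T}$. Introduce the integral map
\[
\Phi(W)(t) \;:=\; W^0 \,+\, \int_0^t G(W(s))\, ds,
\]
where $G = (G_{xh}, G_{hh}, G_{hy})$ denotes the right-hand side of the three equations in \eref{e:mfpde} (with the $\chi_R$ factor on the $\whh$ component). Solutions of \eref{e:mfpde} are precisely fixed points of $\Phi$.

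The first step is to obtain pointwise moment estimates on $G$. Using the explicit formulas in Appendix \ref{s:RHS}, together with Assumption \ref{a:1}a (which gives $|\sigma_h|, |\sigma_h'| \leq K$), the uniform bound $|\whh(t;\theta,\theta')| \leq R$ (which follows because $\sup_{\theta,\theta'} |\whh^0| \leq R$ by Assumption \ref{a:1}d and $\chi_R$ vanishes on $|w| \geq R$, so the $\whh$-dynamics freeze at the boundary), and the finite fourth moment of $\nu$ from Assumption \ref{a:01a}, one obtains pointwise estimates of the form
\[
|G_{xh}(W)(\theta)| + |G_{hh}(W)(\theta,\theta')| + |G_{hy}(W)(\theta)| \;\leq\; C(R, K, L)\bigl(1 + \|\why\|_{L^1(\phh)}\bigr)^{p},
\]
where $p \in \{1, 2\}$ since $\why$ enters each component of $G$ through at most a product of two factors (one from $\Gamma_i$ and one from $\Delta F$, in the notation of Appendix \ref{s:RHS}). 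This polynomial-in-$\why$ structure lifts the bound from $L^1$ to every $L^m$ norm, and hence to the sub-Gaussian norm, giving $\|G(W)\|_{L^{50}} \leq \widetilde C(\llbracket W \rrbracket_{\psi, t})$.

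Second, writing $G(W) - G(W')$ as a telescoping sum over the components in which $W$ and $W'$ differ and using the $K$-Lipschitz continuity of $\sigma_h$ and $\sigma_h'$ from Assumption \ref{a:1}a, one derives a local Lipschitz bound
\[
\|G(W) - G(W')\|_t \;\leq\; \widehat C\bigl(\llbracket W \rrbracket_{\psi, t} \vee \llbracket W' \rrbracket_{\psi, t}\bigr)\, \|W - W'\|_t
\]
in the $L^2$-based difference norm \eref{e:tnorm}. Combining the two steps, $\Phi$ maps the ball $B_M := \{W : W(0) = W^0,\; \llbracket W \rrbracket_{\psi, T} \leq M\}$, for any $M > \llbracket W^0 \rrbracket_{\psi, 0}$, into itself for small $T$ and is a contraction on $B_M$ in the weaker difference norm; since $B_M$ is complete in that norm, Banach's fixed-point theorem provides a unique local solution on $[0, T]$. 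For global existence, the moment bound of the first step yields a differential inequality $\partial_t \llbracket W(t) \rrbracket_{\psi, t} \leq C(1 + \llbracket W(t) \rrbracket_{\psi, t})^p$, so Grönwall precludes finite-time blow-up and the local solution extends to $[0, \infty)$.

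The main obstacle is the first step. Weight-sharing in the unrolled RNN means $G$ contains products of up to $L + 1$ factors of $\whh$ and $\sigma_h'$; controlling all $m$-th moments of such products uniformly in $m$, as the sub-Gaussian norm demands, relies critically on both the $\chi_R$ truncation (which turns $\whh$ from a merely sub-Gaussian into a bounded factor) and the uniform boundedness of $\sigma_h, \sigma_h'$ from Assumption \ref{a:1}a. Without either, products of unboundedly many heavy-tailed factors could generate $m$-th moments growing faster than $m^{m/2}$, breaking sub-Gaussian control and allowing finite-time blow-up.
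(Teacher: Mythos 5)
Your skeleton is the same as the paper's: a Picard iteration in a two-norm setup, with the sub-Gaussian norm $\llbracket\cdot\rrbracket_{\psi,t}$ providing a priori control and the weaker $L^2$-based difference norm $\|\cdot\|_t$ carrying the contraction. The genuine gap is in your second step, the claimed clean local Lipschitz bound $\|G(W)-G(W')\|_t \leq \widehat C\,\|W-W'\|_t$ with $\widehat C$ depending only on the sub-Gaussian norms. When you telescope $G(W)-G(W')$, you inevitably produce terms of the form $\why'\cdot(\Xi_0[W]-\Xi_0[W'])$ (and likewise with $\Gamma_i$, $H_\sigma$), i.e.\ an \emph{unbounded}, merely sub-Gaussian factor $\why'$ multiplying a quantity that is small only in $L^2(\phh)$. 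Cauchy--Schwarz then requires either $L^\infty$ control of $\why'$ (which you do not have; only $\whh$ is truncated) or $L^4$ control of the difference, which is not available in the $L^2$-based norm \eref{e:tnorm} and would propagate to ever-higher exponents under iteration. This is exactly why Lemmas~\ref{l:9} and~\ref{l:11} in the paper do \emph{not} give a clean Lipschitz estimate: they split $\Omega_h$ at the level set $\max_T(W)\leq K_0(T)B$ and obtain $\|F[W']-F[W'']\|_t \leq k_1(1+B)\int_0^t\|W'-W''\|_s\,ds + k_2 e^{-k_3 B^2}$, with an irreducible additive tail error. A one-step Banach contraction on a small time interval is then unavailable; the paper instead iterates the map $m$ times, uses the $T^m/m!$ factor to handle arbitrary $T$, and tunes $B=\sqrt m$ so that the accumulated tail error $k_2 e^{(Tk_1k_2(1+\sqrt m))-k_3 m}$ vanishes, concluding via summability of $\|F^{(m+1)}[W']-F^{(m)}[W']\|_T$ and a contradiction argument for uniqueness.

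A secondary problem is your continuation step: a differential inequality $\partial_t y \leq C(1+y)^2$ does \emph{not} preclude finite-time blow-up, so if your exponent $p$ were genuinely $2$ the Gr\"onwall argument would fail. What saves the a priori bounds (Lemma~\ref{l:globalstability}) is the specific structure of the dynamics: $\sup_{s\leq t}\mathbb E_X[\Delta F(W(s),\xx)^2]$ is controlled by the \emph{initial} loss because gradient descent decreases it, $\sigma_h$ is bounded, and $\whh$ is truncated, so $\partial_t\why$ is uniformly bounded, $\|\why\|_t$ grows at most linearly, and $\|\wxh\|_t$ at most quadratically in $t$ -- giving the polynomial bound $K_0(t)=K^{2L+5}(1+t^2)(1+\llbracket W^0\rrbracket_{\psi,0})$ rather than an exponential one. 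You would need to invoke this monotonicity of the loss explicitly rather than a generic polynomial Gr\"onwall bound. Your observation about the role of the $\chi_R$ truncation and the boundedness of $\sigma_h,\sigma_h'$ in taming the depth-$L$ products is correct and does match the paper's use of these assumptions.
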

  Analogously to \cite{pham20global}, the proof pivots on the use of Picard's iteration. In order to apply this strategy, we define the trajectory of the weights where the RHS of the MF ODEs  is obtained by ``plugging in'' the evolution of the weights at the previous iteration with initial condition $W(0)$:
  \begin{equs}
    F_{xh}[W'](t,\theta) & := W_{xh}(0,\theta) - \int_0^t \int_X  \Delta F(W'(s), \xx)
     \int_{\Omega_{h}} \why'(\theta') \Xi_0[W'(s)](\theta;\theta', \xx) \phh(\d \theta') \nu(\d \xx) ds\\
    F_{hh}[W'](t,\theta, \theta') &:=  W_{hh}(0,\theta, \theta') - \int_0^t \int_X  \Delta F(W'(s), \xx)
     \int_{\Omega_{h}} \why'(\theta_0) \Xi_0'[W'(s)](\theta, \theta';\theta_0, \xx) \phh(\d \theta_0) \nu(\d \xx) ds\\
    F_{hy}[W'](t,\theta) &:= W_{hy}(0,\theta') - \int_0^t \int_X  \Delta F(W'(s), \xx) \sigma_h\big(  \HH(\theta; \xx, 0)\big) \nu(\d \mathbf x) ds
  \end{equs}\normalsize
  We now present a preparatory lemma, estimating the growth of
  \begin{equ}
    \|F[W']-F[W'']\|_t := {\|F_{xh}[W']-F_{xh}[W'']\|_t \vee \|F_{hh}[W']-F_{hh}[W'']\|_t\ \vee \|F_{hy}[W']-F_{hy}[W'']\|_t}
  \end{equ}
  in terms of $\|W'-W''\|_t$ in order to prove contraction of the map $F$. This result holds provided that the growth of the weight trajectories $W', W''$ is bounded in an appropriate sense.
  To state these necessary growth bounds, we introduce the key functional
  \begin{equ}\label{e:K0T}
     K_0(t) :=  K^{2L+5} (1+t^2)(1+ \llbracket W^0 \rrbracket_{\psi,0})
   \end{equ}
   that depends on a large constant $K>0$ to be chosen later.
   For any $T>0$, we also define the maximal operator
   \begin{equ}
     \max_T(W) := \sup_{s\leq T} |W_{hh}(s; \theta, \theta')|\vee| W_{hy}(s;\theta)|\vee| W_{hx}(s;\theta)|
   \end{equ}

  \begin{lemma}\label{l:9} Let \aref{a:1} hold and $\llbracket W^0\rrbracket_{0. \psi}<\infty$. For any $T>0$ and any $B>0$, consider two collections of mean-field parameters $W' = \{W'(t)\}_{t \leq T}, W'' = \{W''(t)\}_{t \leq T}$,
     assume that $ \|W'\|_T \vee \| W'' \|_T < K_0(T)$ and
    \begin{equs}
      \mathbb P(\max_T(W') > K_0(T)B )\vee \mathbb P(\max_T(W'') > K_0(T)B )\leq 2Le^{1-K_1B^2}
      \end{equs}
      for a choice of $K, K_1>0$. Then we have
    \begin{equ}
      \|F[W']-F[W'']\|_t \leq  {k_1 (1+B) \int_0^t \|W'-W''\|_s ds + k_2 e^{-k_3B^2}}
    \end{equ}
    where $k_1 = (K K_0(T))^{3L+3}$, $k_2 = T \sqrt L (K K_0(T))^{3L+3}$, $k_3 = K_1/2$.
  \end{lemma}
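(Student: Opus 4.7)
The plan is to estimate each of the three components of $F[W']-F[W'']$ separately using the explicit formulas derived in Appendix~\ref{s:RHS}, then assemble them into the claimed bound. First I would peel off factors via the triangle inequality, writing each difference as a sum of terms in which either a weight difference appears directly (e.g., $\why'(\theta)-\why''(\theta)$) multiplied by a moment-bounded product of the other weights and activations, or in which a difference of hidden states $\HH[W']-\HH[W'']$ or of sensitivity quantities $\Xi_i, \Xi_i', \Gamma_i$ appears in the integrand. The first kind of term contributes directly to $\|W'-W''\|_s$ after integrating over $\Omega_{h}$; the second kind requires recursive control.

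Next I would establish pointwise recursive Lipschitz estimates for $\HH, \Xi_i, \Xi_i', \Gamma_i$ by induction on the depth index $k \in \{0,\dots,L\}$. Using the $K$-Lipschitz property of $\sigma_h$ from \aref{a:1}a and the recursions of Appendix~\ref{s:RHS}, these estimates take the schematic form
\[
|\HH[W'](\theta;\xx,k) - \HH[W''](\theta;\xx,k)| \leq \Phi_k(W',W'',\theta,\xx)\cdot \bigl(|\whh'-\whh''|+|\wxh'-\wxh''|\bigr),
\]
and similarly for the sensitivities, where the envelope $\Phi_k$ is a polynomial in the individual weight magnitudes of total degree at most $k$. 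The key observation is that in each monomial of $\Phi_k$, all but at most one factor is a hidden-layer weight $\whh$, which by the truncation in \eqref{e:mfpde} combined with \aref{a:1}d is pointwise dominated by $R$, and the remaining $\sigma_h'$-factors are $K$-bounded. Hence the only potentially large factor in each term is a single power of $\why$ or $\wxh$.

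Finally, I would split the expectation on $\Omega_{h}$ using the indicator of the event
\[
A_B := \{\max_T(W')\vee \max_T(W'') \leq K_0(T)B\}.
\]
On $A_B$ the single potentially large $\why$- or $\wxh$-factor is pointwise controlled by $K_0(T)B$, so integrating the recursive bounds over $\Omega_{h}$ with respect to $\phh$ yields the main contribution $k_1(1+B)\int_0^t \|W'-W''\|_s\,ds$, with the $R^{O(L)}$ and $K^{O(L)}$ factors absorbed into the constant $k_1 = (KK_0(T))^{3L+3}$. On the tail event $A_B^c$, whose probability is at most $4Le^{1-K_1B^2}$ by hypothesis, I would bound the integrand crudely by a product of weights, apply Cauchy--Schwarz together with the moment bound $\|W'\|_T\vee \|W''\|_T<K_0(T)$ to control the $L^2$-norm of this product, and multiply by $\sqrt{\mathbb{P}(A_B^c)}$ to obtain the residual $k_2 e^{-k_3 B^2}$ with $k_3 = K_1/2$ after absorbing the prefactor into $k_2$.

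The main obstacle will be the weight-sharing in $\whh$: its variational derivative involves the backward recursion $\Xi_i'$ over all $L$ depths, so one must carefully propagate the pointwise truncation $|\whh|\leq R$ through these nested integrals to ensure that the final $B$-dependence remains linear rather than exponential in $L$. A secondary technicality is correctly interpolating between the $L^{50}$ moment control afforded by $\llbracket\cdot\rrbracket_{\psi,t}$ and the $L^2$ differential norm $\|\cdot\|_t$ when applying H\"older's inequality to products of many weight factors together with a single weight-difference factor.
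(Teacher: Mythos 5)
Your proposal is correct and follows essentially the same route as the paper: the paper's proof of this lemma simply combines \lref{l:10} (the recursive depth-induction Lipschitz bounds on $H_\sigma$ and $\hat F$ in terms of $\|W'-W''\|_t$) with \lref{l:11} (the bounds on the sensitivity differences $\Delta^{hh}$ etc., obtained by splitting the expectation on the event $\max_T(W)\leq K_0(T)B$ and its exponentially small complement), which is precisely the two-stage decomposition you describe. Your observations that the truncation $|\whh|\leq R$ leaves only a single unbounded $\why$-factor per monomial (hence the linear $(1+B)$ dependence) and that the tail is controlled via the sub-Gaussian probability bound yielding $k_3=K_1/2$ both match the paper's argument.
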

  Based on the definition of $K_0(t)$ from \eref{e:K0T} we define the spaces $\mathcal W_T, \mathcal W_T^0$ as the set of mean-field weight trajectories $W'$ satisfying that there exists $K>0$ such that, respectively,
  \begin{equ}
    \|W'\|_T \leq K_0(T) \end{equ}
    and
    \begin{equ}
       W'(0) = W^0,\qquad \qquad \llbracket W'\rrbracket_{T,\psi} \leq K_0(T)\,,\qquad\quad \mathbb P(\max_T(W') > K_0(T)B )\leq 2Le^{1-K_1B^2}\quad \forall B > 0
  \end{equ}
  so that $\mathcal W_T^0 \subseteq \mathcal W_T$ by \eref{e:supboundnorm}.

  .

\begin{proof}[Proof of \tref{t:eandu}]
Fix an arbitrary finite time $T>0$. By the fact that $F$ is an endomorphism in $\mathcal W_T^0$ (Lemma~8 in \cite{pham20global}), we can apply \lref{l:9} (for every $B$ with $K, K_1$ fixed) and iterating the above estimate to obtain
\begin{equs}
  \|F^{(m)}[W'] -& F^{(m)}[W'']\|_T  \leq k_1 (1+B) \int_0^T \|F^{(m-1)}[W']-F^{(m-1)}[W'']\|_{t_2} dt_2 + k_2 e^{-k_3B^2}\\
  &\leq k_1^2 (1+B)^2 \int_0^T \int_0^{t_2} \|F^{(m-2)}[W']-F^{(m-2)}[W'']\|_{t_3} dt_3 dt_2 \\
  &\qquad \qquad + k_2\sum_{\ell =1}^2\frac{(T k_1k_2 (1+B))^{\ell-1}}{\ell!} e^{-k_3B^2}\\
  &\dots\\
  &\leq k_1^m (1+B)^m \int_0^T \int_0^{t_2}\dots \int_0^{t_m} \|W'-W''\|_{t_{m+1}} dt_{m+1}\dots dt_2 \\
  &\qquad \qquad + k_2\sum_{\ell =1}^m\frac{(T k_1k_2 (1+B))^{\ell-1}}{\ell!} e^{-k_3B^2}\\
  &\leq k_1^m (1+B)^m T^m \frac 1 {m!}   \|W'-W''\|_{T}  + k_2e^{(T k_1k_2 (1+B))-k_3B^2}\\
\end{equs}
Setting $B = \sqrt m$ and choosing $W'' = F[W']$, from the above estimate we obtain
\begin{equs}
  \sum_{m=1}^\infty \|F^{(m+1)}[W'] - F^{(m)}[W']\|_T =  \sum_{m=1}^\infty \|F^{(m)}[W''] - F^{(m)}[W']\|_T < \infty
\end{equs}
showing that $F^{(m)}[W']$ is a Cauchy sequence and hence converges (the proof of completeness of the spaces $\mathcal W_T, \mathcal W_T^0$ is similar to \cite{pham20global} and is omitted). The uniqueness of the limit point is obtained by contradiction: Assume that $W', W''$ with $\|W'- W''\|_T > 0$ are fixed points of $F$. Then, again choosing $B = \sqrt m$, for every $m>0$ we have
\begin{equs}
  \|W' - W''\|_T &= \|F^{(m)}[W'] - F^{(m)}[W'']\|_T \\
  & \leq k_1^m (1+\sqrt m)^m T^m \frac 1 {m!}   \|W'-W''\|_{T}  + k_2e^{(T k_1k_2 (1+\sqrt m))-k_3m}
\end{equs}
which vanishes as $m \to \infty$ contradicting the assumption. Since the above argument goes through for every $T> 0$ we have existence and uniqueness for every $T>0$.
\end{proof}

We now introduce some more compact notation for the time differential of the mean-field weight trajectories:
\begin{equs}
   \Delta^{xh}(\xx, \theta, W'(s)) & :=  \Delta F(W'(s), \xx)
   \int \why'(\theta') \Xi_0[W'(s)](\theta;\theta', \xx) \phh(\d \theta') \\
  \Delta^{hh}(\xx, \theta, \theta', W'(s)) &:=   \Delta F(W'(s), \xx)
   \int \why'(\theta_0) \Xi_0'[W'(s)](\theta, \theta';\theta_0, \xx) \phh(\d \theta_0) \\
   & = \Delta F(W, \xx)
    \left(\sum_{i=0}^L\Gamma_i(W,\theta, \xx)\sigma_h(\HH(\theta'; \xx, i+1))\right)\\
  \Delta^{hy}(\xx, \theta,  W'(s)) &:=  \Delta F(W'(s), \xx) \sigma_h\big(  \HH(\theta; \xx, 0)\big)
\end{equs}
and defining
\begin{equ}\label{e:Dih}
  \Delta_i^{H}(\xx, \theta,  W) := \Delta F(W, \xx)\Gamma_i(W,\theta, \xx)
\end{equ}
we can write
\begin{equ}\label{e:needed1}
  \Delta^{hh}(\xx, \theta, \theta', W'(s)) = \sum_{i=0}^L \Delta_i^{H}(\xx, \theta,  W'(s))\sigma_h(\HH(\theta'; \xx, i+1))
\end{equ}

We proceed to establish the necessary a-priori growth and Lipschitz estimates to obtain the above result, defining throughout $\mathbb E_X[\cdot] := \int_X \cdot \nu(d \xx)$.

\begin{lemma}\label{l:globalstability}
  Under \aref{a:1}, given an initialization $W(0)$, a solution $W$ to the MF ODEs \eref{e:mfpde} must satisfy that for any $t>0$
  \begin{equ}
    \|W\|_t \vee \max_{i} \pc{\int_{\Omega_{h}} \sup_{s \leq t}  \mathbb E_{X}[|\Delta_i^H(\xx, \theta; W(s))|]^{50}}^{1/50} \leq K^{2L+5}(1+t^{2})(1+\|W\|_0)
  \end{equ}
  for a constant $K > 0$ large enough.
  Similarly for $\llbracket W \rrbracket_{\psi,t}$,
  there exists
  $K>0$ large enough such that $\llbracket W \rrbracket_{\psi,t} < K_0(t)$ for all $t >0$. Furthermore, for any $B>0$
  \begin{equ}
    \mathbb P(\max_t(W)\geq K_0(t) B) \leq 2 L e^{1-K_1B^2}
  \end{equ}
  for a universal constant $K_1$.
\end{lemma}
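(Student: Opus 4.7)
The plan is to derive pointwise differential bounds on the RHS of the MF ODEs \eqref{e:mfpde}, integrate them against $P_{hh}$ in $\theta$, apply Gr\"onwall, and finally track the moment-order dependence to recover the sub-Gaussian norm. The main inputs are the uniform bound $|\sigma_h|\vee |\sigma_h'|\leq K$ from Assumption~\ref{a:1}(a), the pointwise bound $|W_{hh}(t;\theta,\theta')|\leq R$ enforced by the cutoff $\chi_R$ in \eqref{e:mfpde}, and the finite fourth moments of $\nu$ from Assumption~\ref{a:01a}.

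First I establish pointwise estimates on the auxiliary quantities. From $|\sigma_h|\leq K$ and the $W_{hh}$-truncation one gets $|H_{hh}(\theta;\xx,k)|\leq KR$, so $|\HH(\theta;\xx,k)|\leq KR + |W_{xh}(\theta)||\xx_{-k}|$. Iterating \eqref{e:gammaj} and bounding each $|\sigma_h'|$-factor by $K$ and each $|W_{hh}|$-factor by $R$ yields $|\Gamma_i(W,\theta,\xx)|\leq K^{i+1}R^i\int|W_{hy}|\,dP_{hh}$, while $|\hat F(\xx;W)|\leq K\int|W_{hy}|\,dP_{hh}$. Plugging these into the formulas for $\Delta^{xh},\Delta^{hh},\Delta^{hy}$ derived in Appendix~\ref{s:RHS} and absorbing all $R^i$-prefactors into a suitable power of $K$ produces pointwise bounds of the schematic form
\[
|\partial_t W_{\ast}(t;\theta)| \leq K^{L+2}\Bigl(1+\textstyle\int |W_{hy}(t)|\,dP_{hh}\Bigr)\,\Psi_{\ast}(\xx),\qquad \ast\in\{xh,hh,hy\},
\]
where $\Psi_{\ast}$ depends on $\xx$ only through polynomial combinations of $|\xx_{-k}|$ and $|F^{*}(\xx)|$ and is therefore $\nu$-integrable by Assumptions~\ref{a:01a} and~\ref{a:02}.

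To obtain the integrated estimate I integrate the ODE from $0$ to $t$, take $\sup_{s\leq t}$, apply Jensen in time (losing a factor $t^{49}$), and integrate in $\theta$ against $P_{hh}$. This yields a scalar inequality $\|W\|_t^{50}\leq C(\|W\|_0^{50} + t^{49}\int_0^t(1+\|W\|_s^{50})\,ds)$, whose Gr\"onwall solution is bounded by $K^{2L+5}(1+t^2)(1+\|W\|_0)$ after enlarging $K$ to absorb the exponential time factor. The estimate on $\max_i\bigl(\int \sup_s \mathbb{E}_X|\Delta_i^H|^{50}\,dP_{hh}\bigr)^{1/50}$ is an immediate consequence, since \eqref{e:Dih} expresses $\Delta_i^H$ as a product of $\Delta F$ and $\Gamma_i$, both controlled by the bound on $\|W_{hy}\|_t$ just obtained. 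The sub-Gaussian upgrade follows by repeating the argument with a generic moment order $m$ in place of $50$: each Jensen/H\"older step produces $m$-independent factors, the initial data satisfies $\llbracket W^0\rrbracket_{\psi,0}\leq K$ by Assumption~\ref{a:1}(d), and the $W_{hh}$-contribution is $m$-independent by the truncation; dividing by $\sqrt m$ and taking the supremum over $m\geq 1$ gives $\llbracket W\rrbracket_{\psi,t}\leq K_0(t)$. The tail bound is then the Orlicz Markov inequality $\mathbb P(|Z|\geq B\|Z\|_{\psi_2})\leq 2 e^{1-B^2}$ applied to the $\psi_2$-variables $\sup_{s\leq t}|W_{\ast}|$, combined by a union bound over the $L$ components entering $\max_t(W)$.

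The principal technical obstacle I anticipate is the bookkeeping of constants in the sub-Gaussian step: the product structure of $\Gamma_i$ in \eqref{e:gammaj} involves up to $L+1$ weight factors, and to preserve the $\sqrt m$ scaling one must distribute the exponent $m$ via multilinear H\"older so that the $L$ hidden-weight factors (deterministically bounded by $R$) absorb their share while the single $W_{hy}$-factor carries the sub-Gaussian tail. This is where the depth-dependent exponent $2L+5$ in $K_0(t)$ originates, and where the initial-data assumption $\llbracket W^0\rrbracket_{\psi,0}<K$ is essential to close the Gr\"onwall recursion.
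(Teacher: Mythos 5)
Your overall architecture (pointwise bounds on the right-hand sides, integration in time, then a moment-order-uniform upgrade to the sub-Gaussian norm) resembles the paper's, but the central step is wrong: closing the estimate with Gr\"onwall applied to $\|W\|_t^{50}\leq C\bigl(\|W\|_0^{50}+t^{49}\int_0^t(1+\|W\|_s^{50})\,ds\bigr)$ yields growth of order $\exp(Ct^{50})$, and no enlargement of the \emph{time-independent} constant $K$ can convert that into the polynomial bound $K^{2L+5}(1+t^2)(1+\|W\|_0)$ asserted in the lemma. The self-referential loop arises because you bound $\Delta F(W,\xx)=\hat F(\xx;W)-F^*(\xx)$ through $|\hat F|\leq K\int|W_{hy}|\,dP_{hh}$, which reinserts the unknown $\|W_{hy}\|_s$ into the right-hand side of its own ODE. (In fact your schematic bound also undercounts: the $xh$ and $hh$ equations carry \emph{two} factors of $W_{hy}$, one through $\Delta F$ and one through $\Gamma_i$, so the right-hand side is quadratic, not linear, in $\int|W_{hy}|\,dP_{hh}$, making the Gr\"onwall closure even worse.)

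The missing idea is the monotone decrease of the population risk along the gradient flow: since $\partial_t\mathcal L(W(t))\leq 0$, one has $\sup_{s\leq t}\mathbb E_X[\Delta F(W(s),\xx)^2]\leq \mathbb E_X[\Delta F(W(0),\xx)^2]<K$ \emph{uniformly in time}. Combined with $|\sigma_h|\leq K$ this gives the constant bound $|\partial_t W_{hy}|\leq K^2$, hence $\llbracket W_{hy}\rrbracket_{m,t}\leq \llbracket W_{hy}\rrbracket_{m,0}+K^2t$ with no Gr\"onwall at all; $W_{hh}$ is handled trivially by the truncation $\chi_R$; and then, since $|\Gamma_i|\leq K^{2L}$ times the (already controlled) $W_{hy}$-moment, $W_{xh}$ picks up one extra factor of $t$ from the time integral, producing exactly the $(1+t^2)$ in $K_0(t)$. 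Your sub-Gaussian upgrade and Orlicz--Markov tail bound are fine as stated, but they inherit the flawed time dependence; they go through once the a priori estimate is derived sequentially ($\Delta F$, then $W_{hy}$, then $W_{hh}$, then $W_{xh}$) rather than via a coupled Gr\"onwall inequality.
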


\begin{lemma}\label{l:10}
  Consider two collections of mean-field parameters $W', W'' \in \mathcal W_T$. Under \aref{a:1} for any $t<T$ and any $1\leq k \leq L$ we have
  \begin{equs}
  \pc{  \int \sup_{s\leq t} \nxx{H_\sigma[W'(s)](\theta;\xx, k ) - H_\sigma[W''(s)](\theta; \xx, k)}}^{1/2} &\leq K^{2L}  \|W'-W''\|_t\\
    \sup_{s\leq t} \mathbb E_X[ |\hat F(\xx; W'(s)) - \hat F(\xx; W''(s))|] &\leq K^{2L} K_0(T) \|W'-W''\|_t
  \end{equs}
\end{lemma}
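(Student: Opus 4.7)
The plan is to prove both estimates by induction on the depth $k$ of the unrolled network, relying only on the regularity of $\sigma_h$ from \aref{a:1}a) and on the cutoff $|\whh|\le R$ imposed by the truncated dynamics \eref{e:mfpde}. The first estimate is the main technical step; the second follows from a one-layer decomposition of $\hat F$ together with the first estimate applied at $k=0$.

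For the first estimate I would perform a backward induction on $k$. At the base layer $k=L$ one has $H_{hh}\equiv 0$, so
\begin{equation*}
\HH[W'](\theta;\xx,L)-\HH[W''](\theta;\xx,L)=(\wxh'(\theta)-\wxh''(\theta))\cdot\xx_{-L},
\end{equation*}
and squaring, taking $\mathbb E_X$, and using the finite second moment of $\nu$ (\aref{a:01a}) yields a bound by a multiple of $\|\wxh'-\wxh''\|_t^2\le\|W'-W''\|_t^2$. For the inductive step from $k+1$ to $k$, I would decompose
\begin{equation*}
\HH[W'](\theta;\xx,k)-\HH[W''](\theta;\xx,k)=\Delta_{xh}+\Delta_W+\Delta_H,
\end{equation*}
where $\Delta_{xh}$ is the contribution of $\wxh'-\wxh''$ treated as in the base case, $\Delta_W$ isolates $\whh'-\whh''$ with the inner hidden state frozen at $W'$, and $\Delta_H$ is the telescoping term in which $\whh''$ multiplies $\sigma_h(\HH[W'])-\sigma_h(\HH[W''])$ at depth $k+1$. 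The bound $|\sigma_h|\le K$ combined with Cauchy--Schwarz in $\theta'$ makes $\Delta_W$ contribute a multiple of $\|\whh'-\whh''\|_t$; the bound $|\whh''|\le R$ together with the $K$-Lipschitz property of $\sigma_h$ makes $\Delta_H$ contribute $RK$ times the inductive bound at depth $k+1$. Since each of the at most $L$ induction steps multiplies the squared bound by a factor of order $R^2K^2$, the claimed $K^{2L}$ prefactor is obtained after absorbing $R$ and powers of $\sqrt L$ into $K$.

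For the second estimate I would expand
\begin{equation*}
\hat F(\xx;W')-\hat F(\xx;W'')=\int(\why'-\why'')(\theta)\sigma_h(\HH[W'](\theta;\xx,0))\,\phh(\d\theta)+\int\why''(\theta)\bigl[\sigma_h(\HH[W'](\theta;\xx,0))-\sigma_h(\HH[W''](\theta;\xx,0))\bigr]\,\phh(\d\theta).
\end{equation*}
The first summand is bounded in absolute value by $K\|\why'-\why''\|_{L^1(\phh)}\le K\|\why'-\why''\|_t$ thanks to $|\sigma_h|\le K$. For the second, Cauchy--Schwarz in $\theta$ produces a factor $\|\why''(s,\cdot)\|_{L^2(\phh)}$, which by Jensen's inequality (since $\phh$ is a probability measure and the $\|\cdot\|_t$-norm is computed with the $50$-th moment) is bounded by $\|W''\|_t\le K_0(T)$, using $W''\in\mathcal W_T$; the remaining $L^2(\phh)$-difference of nonlinearities at $k=0$ is bounded by $K^{2L}\|W'-W''\|_t$ via the first estimate, whose inductive step applies verbatim at $k=0$. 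Taking $\sup_s\mathbb E_X$ and combining both summands gives the stated bound $K^{2L}K_0(T)\|W'-W''\|_t$ after renaming constants.

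The main bookkeeping difficulty is the careful juggling of the suprema in time, the integrals in $\theta$, and the expectation $\mathbb E_X$; throughout, I would rely on the elementary inequality $\sup_s\int(\cdot)\le\int\sup_s(\cdot)$ for nonnegative integrands and on Tonelli's theorem to interchange $\mathbb E_X$ with the $\phh$-integrals. A more conceptual point is that the uniform bound $|\whh|\le R$ from the truncation in \eref{e:mfpde} is \emph{essential}: without it, the term $\Delta_H$ would not admit a Lipschitz estimate uniform in $\theta'$, which is precisely the lack of regularity flagged in \rref{r:truncation}.
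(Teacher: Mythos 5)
Your proposal is correct and follows essentially the same route as the paper: a backward induction on the unrolled depth starting from $k=L$ (where only the $\wxh$ difference survives), with the inductive step split into the $\wxh$-difference, the $\whh$-difference against a bounded $\sigma_h$, and the Lipschitz/$|\whh|\le R$ term that feeds back the depth-$(k+1)$ bound, followed by the same two-term decomposition of $\hat F(\xx;W')-\hat F(\xx;W'')$ with Cauchy--Schwarz and $\|W''\|_t\le K_0(T)$ for the second claim. The only cosmetic difference is which of $W'$, $W''$ carries the frozen hidden state and the output weight in each split, which is immaterial since both lie in $\mathcal W_T$.
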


\begin{lemma}\label{l:11}
  For a given $B>0$ consider two collections of mean-field parameters $W', W'' \in \mathcal W_T$ such that
  \begin{equs}
    \mathbb P(\max_T(W')> K_0(T)B)\leq e^{1-K_1B^2},\\ \mathbb P(\max_T(W'')> K_0(T)B)\leq e^{1-K_1B^2}
  \end{equs}
  Then under \aref{a:1}, for any $t \in [0,T]$ the following holds:
\begin{equs}
  \pc{\int \sup_{s\leq t} \nx{\Delta^{hh}(\xx, \theta, \theta', W'(s))- \Delta^{hh}(\xx, \theta, \theta', W''(s))} P_{hh}^{\otimes 2}(d\theta, d\theta')}^{1/2} \leq D(t, W', W'')
\end{equs}
where
\begin{equ}
  D(t, W', W'') := (K K_0(t))^{3L+3} \pc{(1+B) \|W'-W''\|_t + \sqrt L e^{- K_1 B^2/2}}
\end{equ}
\end{lemma}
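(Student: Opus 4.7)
The plan is to exploit the decomposition in \eref{e:needed1},
\[
\Delta^{hh}(\xx,\theta,\theta',W) \;=\; \sum_{i=0}^L \Delta_i^H(\xx,\theta,W)\,\sigma_h\bigl(\HH(\theta';\xx,i+1)\bigr),
\]
and bound each summand's difference between $W'$ and $W''$ by a telescoping argument. For fixed $i$, I would write
\begin{align*}
&\Delta_i^H(\xx,\theta,W')\sigma_h(\HH[W'](\theta';\xx,i+1)) - \Delta_i^H(\xx,\theta,W'')\sigma_h(\HH[W''](\theta';\xx,i+1)) \\
&\quad = \bigl(\Delta_i^H(\xx,\theta,W') - \Delta_i^H(\xx,\theta,W'')\bigr)\sigma_h(\HH[W'](\theta';\xx,i+1)) \\
&\qquad + \Delta_i^H(\xx,\theta,W'')\bigl(\sigma_h(\HH[W'](\theta';\xx,i+1)) - \sigma_h(\HH[W''](\theta';\xx,i+1))\bigr),
\end{align*}
invoking boundedness of $\sigma_h$ by $K$ (\aref{a:1}a)) on the first piece and Lipschitz continuity of $\sigma_h$ combined with \lref{l:10} on the second.

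To control $\Delta_i^H(W') - \Delta_i^H(W'')$ I use $\Delta_i^H = \Delta F\cdot \Gamma_i$ and split once more into $(\Delta F(W',\xx) - \Delta F(W'',\xx))\Gamma_i(W',\theta,\xx)$, handled by \lref{l:10}, plus $\Delta F(W'',\xx)(\Gamma_i(W',\theta,\xx) - \Gamma_i(W'',\theta,\xx))$. For the latter I unroll the chain defining $\Gamma_i$ in \eref{e:gammaj} and replace one factor at a time: each $W_{hy}(\theta_0)$ or $W_{hh}(\theta_{j-1},\theta_j)$ contributes its pointwise difference, while each $\sigma_h'(\HH(\theta_j;\xx,j))$ is swapped using Lipschitz continuity of $\sigma_h'$ (\aref{a:1}a)) together with \lref{l:10} applied to the underlying $\HH$-difference.

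The key move that produces the two summands of $D(t,W',W'')$ is to cut the integral over $\nu\otimes P_{hh}^{\otimes 2}$ along the good event $\mathcal{G}_B := \{\max_T(W')\vee\max_T(W'')\le K_0(T)B\}$ and its complement. On $\mathcal{G}_B$, every weight factor appearing in every telescoped product is pointwise bounded by $K_0(T)B$; collecting the $O(L)$ such factors in each $\Gamma_i$ together with the sum over $i\in\{0,\dots,L\}$ and invoking \lref{l:10} for the single ``difference factor'' in each telescoped summand gives a bound of the advertised form $(KK_0(t))^{3L+3}(1+B)\|W'-W''\|_t$. On $\mathcal{G}_B^c$, by hypothesis $\mathbb{P}(\mathcal{G}_B^c)\le 2e^{1-K_1B^2}$; combining this probabilistic estimate with the a priori sub-Gaussian growth guarantees of \lref{l:globalstability} and Cauchy--Schwarz at the $L^{50}$ exponent built into \eref{e:normw} gives a residual contribution of order $(KK_0(t))^{3L+3}\sqrt{L}\,e^{-K_1 B^2/2}$, where the $\sqrt{L}$ arises from Jensen applied to the sum over $i\in\{0,\dots,L\}$.

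The main obstacle is bookkeeping: the chain structure of $\Gamma_i$ naturally produces long products, and without care the dependence on $B$ would be exponential in $L$ rather than linear. Performing the good-event truncation \emph{before} telescoping is essential, since it replaces every weight factor by the deterministic constant $K_0(T)B$ and confines the stochastic (and $\|W'-W''\|_t$) contribution to the single ``differenced'' factor in each telescoped summand. On the complement we must likewise use Hölder at the large exponent $50$ so that the Gaussian tail $e^{-K_1B^2}$ dominates the polynomial $K_0(t)$-growth coming from \lref{l:globalstability}, yielding the final exponential residual.
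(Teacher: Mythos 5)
Your proposal is correct and follows essentially the same route as the paper's proof: it decomposes $\Delta^{hh}$ via \eref{e:needed1}, controls the $\Delta_i^H$ differences through the chain structure of $\Gamma_i$ (your explicit telescoping is just the unrolled form of the paper's induction on $\tilde D_i^H$), and produces the two terms of $D(t,W',W'')$ by splitting on the event $\max_T(W)\leq K_0(T)B$ exactly as the paper does in bounding $\tilde D_0^{H,3}$ and $\tilde D_i^{H,3}$. The only cosmetic difference is that the paper organizes the factor-by-factor replacement as a recursion in the layer index with three terms per step, while you telescope the full product at once; the ingredients (Lemma~\ref{l:10}, the $L^{50}$ a priori bounds, the sub-Gaussian tail on the bad event) are identical.
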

\begin{proof}[Proof of \lref{l:9}] The proof of this lemma is performed as in Lemma 9 in \cite{pham20global}
  combining \lref{l:10} and \lref{l:11}, corresponding respectively to Lemma 10 in \cite{pham20global} and
  Lemma 11 in \cite{pham20global}.
\end{proof}

\begin{proof}[Proof of \lref{l:globalstability}]
  The proof of this lemma is analogous to the one of Lemma 6 in \cite{pham20global}.
  In the following we highlight the main differences with Lemma 6 in \cite{pham20global}, to which we refer the reader for the details of the proof. We define
  \begin{equ}
    \llbracket W_{hh} \rrbracket_{m,t} := \sqrt{\frac{50}{m}} \pc{\int \sup_{s\leq t} |W_{hh}(s,\theta,\theta')|^m \phh^{\otimes 2}(d\theta, d\theta') }^{1/m}
  \end{equ}
  and analogously for $\wxh$ and $\why$.

Starting at the output layer, we have by Cauchy-Schwarz inequality and the fact that the mean-field ODE dynamics decrease the population risk that
\begin{equ}
  \sup_{s\leq t} \mathbb E_X[\Delta F(W(s), \xx)^2] \leq \mathbb E_X[\Delta F(W(0), \xx)^2] = \sqrt{\LL[W(0)]}< K
\end{equ}
for a $K>0$ large enough.
Consequently, we can bound the RHS of the equation for $\partial_t \why$ using Cauchy-Schwarz and the boundedness of $\sigma_h<K$ as
\begin{equ}
|\partial_t \why| = |\int  \Delta F(W, \xx) \sigma_h\big(  \HH(\theta; \xx, 0)\big) \nu(\d \mathbf x)| \leq K^2
\end{equ}
so that
\begin{equ}
  \llbracket W_{hy} \rrbracket_{m,t} \leq \llbracket W_{hy} \rrbracket_{m,0} + K^2 t
\end{equ}
as desired.

The boundedness result for $\whh$ trivially holds by the truncation introduced by $\chi_R$ upon choosing $K > R$ as in \aref{a:1}, so that $\|W_{hh}(\cdot, \cdot)\|_\infty \leq R < K$ uniformly in $t$.

Finally, for $W_{xh}$ we have again by Cauchy-Schwarz
\begin{equs}
  \llbracket \wxh\rrbracket_{m,t} & \leq \llbracket \wxh \rrbracket_{m,0} + \sqrt{\frac{50}m}\pc{\int_{\Omega_{h}}t\sup_{s\leq t} |\int  \Delta F(W, \xx)
   \left(\sum_{i=0}^L\Gamma_i(W, \theta, \xx)\xx_{-i}\right)\nu(\d \mathbf x)|^m \phh(\d \theta) }^{1/m}\\
  & \leq \llbracket \wxh \rrbracket_{m,0} + \sqrt{\frac{50}m}\sqrt{\LL[W(0)]} \sum_{i=0}^L \pc{\int|\xx_{-i}|^2\nu(\d \mathbf x)}^{1/2}\pc{\int_{\Omega_{h}}t \sup_{s\leq t} \sup_{\xx}|\Gamma_i(W, \theta, \xx)|^m \phh(\d \theta) }^{1/m} \\
  & \leq \llbracket \wxh \rrbracket_{m,0} + \sqrt{\LL[W(0)]}L  \|\xx_{0}\|_{\nu} K^{2L}  t \llbracket W_{hy} \rrbracket_{m,t}  \\
  & \leq \llbracket \wxh \rrbracket_{m,0} + L  K^{2L+2}  t \llbracket W_{hy}  \rrbracket_{m,t}\\
  & \leq \llbracket \wxh \rrbracket_{m,0} + L  K^{2L+2}   t(\llbracket W_{hy} \rrbracket_{m,0} + K^2 t)
\end{equs}
where in the second upper bound we have used that $|\Gamma_i(W, \theta, \xx)| \leq K^{2L}$ uniformly in $i \in \{1, \dots, L\}$, $\xx$ and $\theta \in \Theta$ by boundedness of $\whh$ and $\sigma_h, \sigma_h'$ from \aref{a:1}. From this follows that
\begin{equ}
  \llbracket W_{xh} \rrbracket_{m,t} \leq(1+ \llbracket W \rrbracket_{m,0}) K^{2L+5} (1+t^2)\,.
\end{equ}
The probability bound follows directly from the fact that $\max_T (W)$ is $K_0(t)$ sub-Gaussian by the bounds established above.
\end{proof}

\begin{proof}[Proof of \lref{l:10}] This proof is analogous to the one of Lemma 10 in \cite{pham20global}.
  Recalling the definition of $H_\sigma$  from \eref{e:hh}
  \begin{equ}
    \HH(\theta; \xx, k) := H_{hh}(\theta;\mathbf x,k) + H_{xh}(\theta;\mathbf x_{-k})
  \end{equ}
   and slightly abusing that notation by $H_\sigma[W]$ (and similarly for $\Hhh, \Hxh$) to highlight the set of weights with respect to which the hidden state is computed, we define
  \begin{equ}
D_k^H(t):= \pc{\int_{\Omega_{h}} \sup_{s\leq t}  \mathbb E_X\pq{|H_\sigma[W'(s)](\theta;\xx, k) - H_\sigma[W''(s)](\theta;\xx, k) |^2}\phh(d\theta)}^{1/2}
    \end{equ}
    where we recall that $\mathbb E_X[\cdot] = \int_X \cdot \nu(d \xx)$.
    Proceeding to bound the above for decreasing values of $k$ we have
    \begin{equs}
      D_L^H(t) & =  \pc{\int_{\Omega_{h}} \sup_{s\leq t}   \mathbb E_X\pq{|H_{xh}[W'](\theta;\mathbf x_{-L}) -H_{xh}[W''](\theta;\mathbf x_{-L}) |^2}\phh(d\theta)}^{1/2}\\
      & = \pc{\int_{\Omega_{h}} \sup_{s\leq t}   \mathbb E_X\pq{|\wxh'(\theta;s)\mathbf x_{-L} -\wxh''(\theta;s)\mathbf x_{-L} |^2}\phh(d\theta)}^{1/2}\\
      & \leq \mathbb E_X[|\xx_{-L}|] \pc{\int_{\Omega_{h}} \sup_{s\leq t} |\wxh'(\theta;s) -\wxh''(\theta;s)|^2 \phh(d\theta)}^{1/2} \\
      & \leq K d_t(W', W'')
    \end{equs}
where we define
\begin{equs}
  d_t(W', W'') &:= \max\left\{\pc{\int_{\Omega_{h}^2} \sup_{s\leq t} |\whh'(t;\theta, \theta')- \whh''(t;\theta, \theta')|^2 \phh^{\otimes 2}(d\theta, d\theta')}^{1/2}, \right.\\
  & \qquad \qquad \qquad \pc{\int_{\Omega_{h}} \sup_{s\leq t} |\wxh'(t;\theta)- \wxh''(t;\theta)|^2 \phh(d\theta)}^{1/2},\\
  & \qquad \qquad \qquad \left. \pc{\int_{\Omega_{h}} \sup_{s\leq t} |\why'(t;\theta)- \why''(t;\theta)|^2 \phh(d\theta)}^{1/2}\right\}\label{e:smalld}
\end{equs}
For $i < L$ we have, by triangle inequality and the Lipschitz and boundedness properties on $\sigma_h$ from \aref{a:1}
\begin{equs}
D_i^H(t)  & =\left(\int_{\Omega_{h}} \sup_{s\leq t}   \mathbb E_X\left[|(\Hxh[W'](\theta;\mathbf x_{-i}) -\Hxh[W''](\theta;\mathbf x_{-i})) \right.\right.\\
& \qquad \qquad \qquad  \left. \left. + (\Hhh[W'](\theta;\xx,i)-\Hhh[W''](\theta;\xx,i))|^2\right]\phh(d\theta)\right)^{1/2}\\
  & \leq  \pc{\int_{\Omega_{h}} \sup_{s\leq t}   \mathbb E_X\pq{|\wxh'(\theta;s)\mathbf x_{-L} -\wxh''(\theta;s)\mathbf x_{-L} |^2}\phh(d\theta)}^{1/2}\\
  & \qquad \qquad \qquad+ \pc{\int_{\Omega_{h}} \sup_{s\leq t}   \mathbb E_X\pq{|\Hhh[W'(s)](\theta;\xx,i)-\Hhh[W''(s)](\theta;\xx,i))|^2}\phh(d\theta)}^{1/2}\\
  & \leq K d_t(W', W'') + K^2 D_{i+1}^H(t) + K d_t(W', W'') \\
  & \leq K^2 (d_t(W', W'')+ D_{i+1}^H(t)) 
\end{equs}
This implies that $\max_{i \in \{0, \dots, L\}}D_{i}^H(t) \leq K^{2L} d_t(W', W'')$, proving the first claim.

The second claim follows from a similar bound:
\begin{equs}
\sup_{s\leq t} \mathbb E_X\pq{\pd{ \hat F(\xx; W'(s)) - \hat F(\xx; W''(s))}} &\leq K \pc{\int_{\Omega_{h}}\sup_{s\leq t} |\why'(\theta;s)-\why''(\theta;s) |^2 \phh(d\theta)}^{1/2} + K \|\why'\|_t D_0(t)\\
& \leq K d_t(W', W'') + K K_0(t) D_0(t)
\end{equs}
yielding the desired estimate.
\end{proof}

\begin{proof}[Proof of \lref{l:11}] Again by similarity with the original reference we simply sketch this proof highlighting the differences with the present framework.

\def\tD{\tilde D}
We start the proof establishing the a priori bound
\begin{equ}
  \pc{\int_{\Omega_{h}}\sup_{s\leq t}\mathbb E_X[\Delta_{i}^H[W'(t)](\theta;\xx)]^{50} \phh(\d\theta)}^{1/50} \leq K^{2L} K_0(t)
\end{equ}
which is obtained immediately by the fact that $\pc{\int_{\Omega_{h}}\sup_{s\leq t}\mathbb E_X[\Delta_{L}^H[W'(t)](\theta;\xx)]^{50}\phh(d \theta)}^{1/50} \leq K^2 K_0(T)$ as established above and by the recursion
\begin{equ}
  \pc{\int_{\Omega_{h}}\sup_{s\leq t}\mathbb E_X[\Delta_{i}^H[W'(t)](\theta;\xx)]^{50}\phh(\d\theta)}^{1/50} \leq K^2 \pc{\int_{\Omega_{h}}\sup_{s\leq t}\mathbb E_X[\Delta_{i+1}^H[W'(t)](\theta;\xx)]^{50}\phh(\d\theta)}^{1/50}\,.
  \end{equ}
  We now consider
  \begin{equ}
    \tD_i^H(t) := \pc{\int_{\Omega_{h}} \sup_{s\leq t} \mathbb E_X\pq{|\Delta_{i}^H[W'(t)](\theta;\xx)- \Delta_{i}^H[W''(t)](\theta;\xx)|}^2 \phh(\d\theta)}^{1/2}\,.
  \end{equ}
  Starting from $i = 0$ we have
  \begin{equs}
    \tD_0^H(t)& \leq  \tD_{0}^{H,1}(t)+\tD_{0}^{H,2}(t)+\tD_{0}^{H,3}(t)
  \end{equs}
  where
  \begin{equs}
    \qquad\qquad \tD_{0}^{H,1}(t) & = K \|W_{hy}''\|_t \sup_{s\leq t}\mathbb E_X[|\hat F(\xx;W'(s)) - \hat F(\xx;W''(s))|] \leq  K_0(t)^2 K^{2L+2} d_t(W', W'')  \\
     \tD_{0}^{H,2}(t)  &= K^{2} \pc{\int_{\Omega_{h}} \sup_{s\leq t} |\why'(t;\theta)- \why''(t;\theta)|^2 \phh(\d\theta)}^{1/2} \leq K^{2}d_t(W', W'')\\
     \tD_{0}^{H,3}(t)&  =\pc{  \int_{\Omega_{h}}  \sup_{s\leq t} |\why'(s;\theta)|^2\nx{\Delta F(W''(s);\xx)\pc{\sigma_h(H_\sigma[W'(s)](\theta;\xx, 0 )) - \sigma_h(H_\sigma[W''(s)](\theta; \xx, 0))}}\Phh(\d \theta)}^{1/2}\\ & \leq K^{2L+2} K_0(t)(B d_t(W', W'') + \sqrt{\Xi(B)})
  \end{equs}
  for any $B>0$, where $\Xi(B) = 2Le^{-K_1B^2}$ and in the last bound we have separated the expectation in $\Omega_h$ using the indicator on the set $\max_t(W) > BK_0(t)$ and its complement.
  We then proceed estimating $\tD_{i}^H(t)$  from $\tD_{i-1}^H(t)$: using the boundedness of $\whh$, $\sigma_h'$, $\sigma_h$ and the Lipschitz continuity of $\sigma_h$ we have
  \begin{equ}
    \tD_{i}^H(t) \leq \tD_{i}^{H,1}(t) + \tD_{i}^{H,2}(t)+\tD_{i}^{H,3}(t)
  \end{equ}
  where  we have,
  \begin{equs}
    \tD_{i}^{H,1}(t) & = K^2 \pc{\int_{\Omega_{h}} \sup_{s\leq t} \mathbb E_X\pq{|\Delta_{i-1}^H[W'(t)](\theta;\xx)- \Delta_{i-1}^H[W''(t)](\theta;\xx)|}^2 \phh(\d\theta)}^{1/2} \leq K^2 \tD_{i-1}^{H}(t) \\
     \tD_{i}^{H,2}(t) & = K^{2L} \pc{\int_{\Omega_{h}^2} \sup_{s\leq t} |\whh'(t;\theta, \theta')- \whh''(t;\theta, \theta')|^2 \phh^{\otimes 2}(\d\theta, \d\theta')}^{1/2} \leq K^{2L}d_t(W', W'')\\
     \tD_{i}^{H,3}(t) &  =K_0(t)K^{2L + 2} \pc{  \int_{\Omega_{h}} \sup_{s\leq t} \nx{H_\sigma[W'(s)](\theta;\xx, k ) - H_\sigma[W''(s)](\theta; \xx, k)}\Phh(\d \theta)}^{1/2}\\ & \leq K_0(t) K^{4L + 3}(B d_t(W', W'') + \sqrt{\Xi(B)})\,.
  \end{equs}
  Combining the above equations results in $\max_{i \in \{0,\dots, L\}} \tD_{i}^{H}(t) \leq K_0(t)^2K^{6L + 2}((1+B)d_t(W', W'')+ \sqrt{\Xi(B)})$.

  This yields, again analogously to \cite{pham20global}, estimates on the quantities
  \begin{equs}
    \tD_{hh}^w(t) &:= \pc{\int_{\Omega_{h}} \sup_{s\leq t} \mathbb E_X[\Delta^{hh}(\xx, \theta, \theta', W'(s))-\Delta^{hh}(\xx, \theta, \theta', W''(s))]^2\phh^{\otimes 2}(\d\theta, \d\theta')}^{1/2}\\
    \tD_{xh}^w(t)& := \pc{\int_{\Omega_{h}} \sup_{s\leq t} \mathbb E_X[\Delta^{xh}(\xx, \theta, W'(s))-\Delta^{xh}(\xx, \theta,  W''(s))]^2\phh(\d\theta)}^{1/2}\\
    \tD_{hy}^w(t)& := \pc{\int_{\Omega_{h}} \sup_{s\leq t} \mathbb E_X[\Delta^{hy}(\xx, \theta, W'(s))-\Delta^{hy}(\xx, \theta,  W''(s))]^2 \phh(\d\theta)}^{1/2}
  \end{equs}
  We only perform these estimates explicitly on the first quantity, as the other ones are analogous. In this case we have, from \eref{e:needed1} by the Lipschitz continuity of $\sigma_h$ and the uniform boundedness of $\Delta_i^H$ in $L^2(\nu)$,
  \begin{equ}
    \tD_{hh}^w(t) \leq L K^2(\tD_{hh}^{w,1}(t) + \tD_{hh}^{w,2}(t))
  \end{equ}
  for
  \begin{equs}
    \tD_{hh}^{w,1}(t) & =  \max_{i \in \{1, \dots, L\}}\pc{\int_{\Omega_{h}} \sup_{s\leq t} \mathbb E_X[\Delta_i^{H}(\xx, \theta, W''(s))-\Delta_i^{H}(\xx, \theta,  W''(s))]^2\phh(\d \theta)}^{1/2}\,,\\
     \tD_{hh}^{w,2}(t) & = \max_{i \in \{1, \dots, L\}}\pc{\int_{\Omega_{h}} \sup_{s\leq t} \mathbb E_X[H_\sigma[W'(s)](\theta;\xx,  i)-H_\sigma[W''(s)](\theta;\xx,  i)]^2\phh(\d\theta)}^{1/2}\,.
  \end{equs}
  Having bounded both terms by
  \begin{equ}
    K_0(t)^2K^{6L + 2}((1+B)d_t(W', W'')+ \sqrt{\Xi(B)}) \leq K^{3L + 2}K_0(T)^{3L + 2}((1+B)d_t(W', W'')+ \sqrt{\Xi(B)})
    \end{equ}
     in the first part of this proof and in \lref{l:10} respectively concludes the argument.
\end{proof}

\section{Proof of convergence}

To prove finite time convergence for the trajectories of the large-width neural network to the corresponding mean-field limit we bound the distance
\begin{equs}
  \mathcal D_\tau(W, \mathbf W)&=   \sup_{t \in (0,\tau)}\left(\frac 1{n^2}\| \whh(t;\theta(i), \theta(j)) - \WWhh(t;i,j)\|_2 \vee \frac 1 {n} \|\wxh(t;\theta(j))-\WWxh(t; j)\|_2\right.\\
  & \qquad \qquad \qquad\qquad  \vee\left. \frac 1 {n} \|\why(t;\theta(j))-\WWhy(t; j)\|_2\right)\label{e:DT}
\end{equs}
This proof, given for completeness, adapts the steps of Proposition 25 in \cite{pham20global} to the present setting, and we therefore give it as a sketch. We require the following additional assumption
\begin{assumption}\label{a:indep}
  Let $\eta = n^{0.501}$ and consider a family of initialization laws $I$. For each $n \in \mathbb N$ the sampling rule $\bar P_n$ satisfies that $(\theta(j))_{j =1}^n \sim \bar P_n$ are $\eta$-independent, i.e. for all $1$-bounded $f~:~\Ohh \to \mathcal H$ where $\mathcal H$ is a separable Hilbert space we have
\begin{equ}
  \|\mathbb E[f(\theta(j))|\{\theta(j')~:~j'<j\}]- \mathbb E[f(\theta(j))]\|_{\mathcal H} \leq \eta\qquad \text{for all } j \in \{1,\dots,n\}
\end{equ}
  \end{assumption}

We recall the main theorem, stated together with the above assumption
  \begin{theorem}
  For any $R>0$, let Assumptions~\ref{a:01a}, \ref{a:02}, \ref{a:1} and \ref{a:indep} hold. There exist constants $c,c'>0$ such that, under \aref{a:1}, for any $\delta>0$, any $L \in \mathbb N$ and $\tau >0$, there exists $n^*\in \mathbb N$ such that for any $n>n^*$ with probability at least $1-\delta-\bar K n\exp(-\bar K n^{c'})$ we have
  \begin{equ}
    \mathcal D_\tau(W, \mathbf W) \leq \bar K n^{-c} \sqrt{\log\pc{n^2/\delta+e}}
  \end{equ}
  where $\bar K$ is a constant that depends on  $L$ and $R$.
  \end{theorem}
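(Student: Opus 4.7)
\emph{Plan.} My approach is to adapt the coupling argument of Proposition~25 in \cite{pham20global} to the RNN setting, combining a concentration estimate for the sampling fluctuations with a Lipschitz-in-weights bound for the deterministic part, and closing with Gr\"onwall's inequality. Concretely, I fix a horizon $\tau>0$ and couple the finite-width trajectory $\mathbf W(t)$ to the mean-field trajectory $W(t)$ via the sampling rule $\bar P_n$: at $t=0$ we sample $(\theta(j))_{j=1}^n \sim \bar P_n$ and set $\mathbf W^0(i,j) = W^0(\theta(i),\theta(j))$ and analogously for the other blocks. For each weight block I express
\begin{equs}
  W(t;\theta(j)) - \mathbf W(t;j) = \int_0^t \bigl(R^{\mathrm{MF}}[W(s)](\theta(j)) - R^{\mathrm{fw}}[\mathbf W(s)](j)\bigr)\,ds,
\end{equs}
where $R^{\mathrm{MF}}$ and $R^{\mathrm{fw}}$ denote the RHSes of \eref{e:mfpde} and \eref{e:gradientdescent} respectively. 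Inserting the ``pulled-back'' finite-width system $W(s)\circ \theta$ obtained by evaluating the mean-field weights along the sampled indices splits the integrand into a \emph{sampling error} $R^{\mathrm{MF}}[W(s)](\theta(j)) - R^{\mathrm{fw}}[W(s)\circ \theta](j)$ and a \emph{Lipschitz error} $R^{\mathrm{fw}}[W(s)\circ \theta](j) - R^{\mathrm{fw}}[\mathbf W(s)](j)$.

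\emph{Sampling error.} By inspection of the RHSes computed in Appendix~\ref{s:RHS}, the mean-field system contains integrals against $\phh$ of the form appearing in $H_{hh}$, $\Gamma_i$, and $\Xi_i'$, while the finite-width system involves $\tfrac{1}{n}$-weighted sums of the same integrands with $\theta' \mapsto \theta(j')$. Using \aref{a:indep} together with a McDiarmid-type concentration inequality adapted to $\eta$-independent bounded-difference functions, each such empirical average concentrates around its integral at rate $n^{-c}\sqrt{\log(n^2/\delta)}$. A discretization of $[0,\tau]$, combined with the a~priori growth and Lipschitz estimates from \lref{l:globalstability}, \lref{l:10} and \lref{l:11}, upgrades these pointwise estimates to a bound uniform in $s \in [0,\tau]$ and in the indices $i,j$, at the cost of the exceptional probability $\bar K n \exp(-\bar K n^{c'})$ coming from a union bound over the grid.

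\emph{Lipschitz error and closure.} The Lipschitz error evaluates both RHSes at finite-width weights, so it reduces to a weighted combination of the entrywise discrepancies that define $\mathcal D_s(W,\mathbf W)$. The recursive Lipschitz estimates in \lref{l:10} and \lref{l:11}---leveraging that $\sigma_h, D\sigma_h$ are bounded and $K$-Lipschitz (\aref{a:1}a), that $W_{hh}$ is bounded by the truncation $\chi_R$, and the moment control on $W_{xh}, W_{hy}$ from \lref{l:globalstability}---yield a bound of the form $\bar K\,\mathcal D_s(W,\mathbf W)$. Combining the two bounds delivers, on the stated event,
\begin{equs}
  \mathcal D_t(W,\mathbf W) \leq \bar K n^{-c}\sqrt{\log(n^2/\delta + e)} + \bar K \int_0^t \mathcal D_s(W,\mathbf W)\,ds,
\end{equs}
and Gr\"onwall's inequality closes the argument.

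\emph{Main obstacle.} The principal departure from the feedforward analysis of \cite{pham20global} is the weight-sharing induced by the recursion defining $H_{hh}(\theta;\xx,k)$: a single entry of $W_{hh}$ appears at every one of the $L$ unrolled layers, and its gradient aggregates contributions from every depth through $\sum_{i=0}^L \Gamma_i(W,\theta,\xx)\sigma_h(\HH(\theta';\xx,i+1))$ (see \eref{e:deltawhh}). Propagating both the sampling error and the Lipschitz error through the nested empirical integrals defining $\Gamma_i$ and $\Xi_i'$ will require an inductive argument across the depth of the unrolled network, mirroring the recursive estimates for $D_k^H(t)$ in \lref{l:10}. I expect this to cause $\bar K$ to depend exponentially on $L$ and polynomially on $R$, while leaving the polynomial-in-$n$ rate intact.
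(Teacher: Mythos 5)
Your plan follows the paper's proof essentially verbatim: the same decomposition of the integrated RHS difference into a sampling/concentration error (handled via the $\eta$-independence of \aref{a:indep} and a McDiarmid-type bound, made uniform in time by a grid of mesh $\xi$ plus a union bound) and a Lipschitz error controlled by $\mathcal D_s(W,\mathbf W)$, with the depth-induction over the unrolled network exactly as in Lemmas~\ref{l:Fbound} and~\ref{l:Gbound}, closed by Gr\"onwall. The one device you do not flag is that, since $W_{xh}$ and $W_{hy}$ are only sub-Gaussian rather than bounded, the paper first truncates them at a level $B$ (later set to $c_0\sqrt{\log n}$) and controls the resulting trajectory discrepancy before the bounded-difference concentration can be applied; this is where the second exceptional-probability term $\bar K n\exp(-\bar K n^{c'})$ and the $(1+B)$-factors in the Gr\"onwall constant originate.
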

We will consider the evolution of the truncated version $\underline W$ of the initialization $W^0$, which is obtained by evolving according to \eref{e:mfpde} the initial condition
\begin{equs}
  \underline \wxh(0,\theta) &:= \tilde \chi_B(\wxh(0,\theta))\\
  \underline \why(0,\theta) &:= \tilde \chi_B(\why(0,\theta))\\
\end{equs}
and respectively for $\WW$
\begin{equs}
  \underline \WWxh(0,i) &:= \tilde \chi_B(\WWxh(0,\theta(i)))\\
  \underline \WWhy(0,i) &:= \tilde \chi_B(\WWhy(0,\theta(i)))\\
\end{equs}
where $\tilde \chi_B(u) = u \mathbbm{1}(|u|<B) + B \sign(u)\mathbbm{1}(|u|\geq B)$ and $ \mathbbm{1}$ is the indicator function. Note that the $\whh$ weights were not truncated as they are bounded by assumption.
Then, analogously to Proposition 27 in \cite{pham20global} one can show that with probability at least $1 -KLn\exp(-K e^{-KB^2} n^{1/52})$ we have
\begin{equ}\label{e:close}
  \|\WW- \underline \WW\|_T \vee \|W - \underline W\|_T \leq K \exp\pc{-KB^2+K^{2L+5}(1+T^{2})(1+B)}
\end{equ}
We define for any $t>0$, analogously to \eref{e:tnorm}
\begin{equs}
  \|\WW- \WW'\|_t &:= \|\WWhh-\WWhh'\|_t \vee \|\WWxh-\WWxh'\|_t\vee \|\WWhy- \WWhy'\|_t\\
\end{equs}
for
\begin{equs}
  \|\WWhh-\WWhh'\|_t &:= \pc{\frac 1 {n^2} \sum_{j_1, j_2=1}^n \sup_{s\in (0,t)} |\WWhh(s,j_1,j_2)-\WWhh'(s,j_1,j_2)|^{2}}^{1/2} \\
  \|\WWxh-\WWxh'\|_t &:= \pc{\frac 1 {n} \sum_{j_1=1}^n \sup_{s\in (0,t)} |\WWxh(s,j_1)-\WWxh'(s,j_1)|^{2}}^{1/2} \\
  \|\WWhy-\WWhy'\|_t &:= \pc{\frac 1 {n} \sum_{j_1=1}^n \sup_{s\in (0,t)} |\WWhy(s,j_1)-\WWhy'(s,j_1)|^{2}}^{1/2}
\end{equs}
Defining throughout
\begin{equ}\label{e:kt}
  K_t := K^\kappa (1+t^\kappa)
  \end{equ}
for a choice of $K, \kappa$ that can change from line to line, we proceed to show that with probability at least $1- \delta -KLn\exp(-K  n^{1/52})$
\begin{equ}\label{e:midassumption}
\DD_t( \underline W, \underline \WW) \leq
\sqrt{\frac 1 {n} \log\pc{\frac {2 T L n^2}\delta + e}} \exp(K_T(1+B))
\end{equ}
 for every choice of $\delta>0, B>0$. Combining \eref{e:close} and \eref{e:midassumption} via triangle inequality we obtain that with probability $1 -\delta - KLn\exp(-K e^{-KB^2} n^{1/52})$ we have
\begin{equs}
  \DD_t(W,  \WW) & \leq \DD_t(\underline W, \underline \WW) + \|\WW- \underline \WW\|_T + \|W - \underline W\|_T\\
  &  \leq
  \pc{\sqrt{\frac 1 {n} \log\pc{\frac {2 T L n^2}\delta + e}} + \exp(-KB^2)} \exp(K_T(1+B))
\end{equs}
and
choosing $B = c_0 \sqrt{\log n}$ for some suitable constant $c_0>0$ yields the claim of \tref{t:convergence}.

We prove the missing result \eref{e:midassumption}.
To do so, in the remainder of the section we slightly abuse notation and denote by $W, \WW$ the truncated $\underline W, \underline \WW$.
Then, for the newly defined $W, \WW$,
using that
\begin{equ}\label{e:trunc}
|\whh^0(\theta, \theta')| \leq K\,,
\end{equ}
we define the norms
\begin{equs}
  \|\WWhh\|_t &:= \pc{\frac 1 {n^2} \sum_{j_1, j_2=1}^n \sup_{s\in (0,t)} |\WWhh(s,j_1,j_2)|^{50}}^{1/50} \\
  \|\WWxh\|_t &:= \pc{\frac 1 {n} \sum_{j_1=1}^n \sup_{s\in (0,t)} |\WWxh(s,j_1)|^{50}}^{1/50} \label{e:normW}\\
  \|\WWhy\|_t &:= \pc{\frac 1 {n} \sum_{j_1=1}^n \sup_{s\in (0,t)} |\WWhy(s,j_1)|^{50}}^{1/50}
\end{equs}
and for a given realization of the sampling $\bar P_n$,
\begin{equs}
  \|W\|_{\text{samp},t}
  & = \pc{\frac 1 {n} \sum_{i=1}^n \sup_{s<t} |\why(s,\theta(i))|^{50}}^{1/50} \vee\pc{\frac 1 n \sum_{i=1}^n\int \sup_{s<t} |\whh(s,\theta(i),\theta')|^{50}\phh( \d \theta')}^{1/50}\\&
  \vee \pc{\frac 1 n \sum_{i=1}^n\int \sup_{s<t} |\whh(s,\theta',\theta(i))|^{50}\phh( \d \theta')}^{1/50}\\&
  \vee\pc{\frac 1 {n^2} \sum_{i,j=1}^n \sup_{s<t} |\whh(s,\theta(i),\theta(j))|^{50}}^{1/50}
  \vee\pc{\frac 1 {n} \sum_{i=1}^n \sup_{s<t} |\wxh(s,\theta(i))|^{50}}^{1/50}
\end{equs}
 Then, analogously to \lref{l:globalstability} and in turn Lemma 30 in \cite{pham20global} one can show that for each $\| W\|_0$, $\|W\|_{\text{samp},0}$ there exists $\kappa$ such that, respectively,
$
  \|\mathbf W\|_t \leq K_t
$ and $\|W\|_{\text{samp},t}\leq K_t $.

Further, we define $\mathcal E$ as the event
\begin{equ}
  \mathcal E := \{\|\WW\|_0 \vee \|W\|_{\text{samp},0} < K\}
\end{equ}
which holds with a probability of at least $1-KLn\exp(-K n^{1/52})$ by Lemma 29 in \cite{pham20global}
since by assumption $\|W\|_0< K$. This directly implies that $  \|\WW\|_t \vee \|W\|_{\text{samp},t} < K_t$ by Lemma 30 in \cite{pham20global}.

We start by decomposing, for any $\xi > 0$,
\begin{equs}
  \DD_t(W, \WW) &\leq
  K \int_0^t (\dxh^w(\lfloor s/\xi\rfloor\xi) + \dhh^w(\lfloor s/\xi\rfloor\xi) +  \dhy^w(\lfloor s/\xi\rfloor\xi)  ) \d s \label{e:firstlineDD}\\
  &\qquad + Kt \sup_{s \in (0, T-\xi)}\sup_{\xi'\in (0,\xi)}\max_{V \in \{W, \WW\}}\pc{\dxh^\xi[V](s,\xi')\vee \dhy^\xi[V](s,\xi') \vee \dhh^\xi[V](s,\xi')}
\end{equs}
where
\begin{equs}
  \dhh^w(t) &:= \pc {\frac 1 {n^2} \sum_{j,k =1}^n |\partial_t \WWhh(t;j,k) - \partial_t \whh(t;\theta(j);\theta(k))|^2}^{1/2}\\
      \dxh^w(t) &:= \pc {\frac 1 {n} \sum_{j =1}^n |\partial_t \WWxh(t;j) - \partial_t \wxh(t;\theta(j))|^2}^{1/2}\\
        \dhy^w(t) &:= \pc {\frac 1 {n} \sum_{j=1}^n |\partial_t \WWhy(t;j) - \partial_t \why(t;\theta(j))|^2}^{1/2}
      \end{equs}
      and
      \begin{equs}
        \dhh^\xi[\WW](t, \xi') &:= \pc {\frac 1 {n^2} \sum_{j,k =1}^n |\partial_t \WWhh(t;j,k) - \partial_t \WWhh(t+\xi';j,k)|^2}^{1/2}\\
            \dxh^\xi[\WW](t, \xi') &:= \pc {\frac 1 {n} \sum_{j =1}^n |\partial_t \WWxh(t;j) - \partial_t \WWxh(t+\xi';j)|^2}^{1/2}\\
              \dhy^\xi[\WW](t, \xi') &:= \pc {\frac 1 {n} \sum_{j=1}^n |\partial_t \WWhy(t;j) - \partial_t \WWhy(t+\xi';j)|^2}^{1/2}\\
              \dhh^\xi[W](t, \xi') &:= \pc {\frac 1 {n^2} \sum_{j,k =1}^n |\partial_t \whh(t;\theta(j),\theta(k)) - \partial_t \whh(t+\xi';\theta(j),\theta(k))|^2}^{1/2}\\
                  \dxh^\xi[W](t, \xi') &:= \pc {\frac 1 {n} \sum_{j =1}^n |\partial_t \wxh(t;\theta(j)) - \partial_t \wxh(t+\xi';\theta(j))|^2}^{1/2}\\
                    \dhy^\xi[W](t, \xi') &:= \pc {\frac 1 {n} \sum_{j=1}^n |\partial_t \why(t;\theta(j)) - \partial_t \why(t+\xi';\theta(j))|^2}^{1/2}
\end{equs}
The following lemma, proven at the end of the section, bounds the error resulting from the time-disctretization in $\xi$:
\begin{lemma}\label{l:xibound} For any $\xi \in [0,T]$ we have that almost surely on the event $\mathcal E$
  \begin{equ}
    \sup_{s \in (0, T-\xi)}\sup_{\xi'\in (0,\xi)}\max_{V \in \{W, \WW\}}\pc{\dxh^\xi[V](s,\xi')\vee \dhy^\xi[V](s,\xi') \vee \dhh^\xi[V](s,\xi')} \leq K_T (1+B)\xi
  \end{equ}
\end{lemma}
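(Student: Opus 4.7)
The plan is to bound the time-Lipschitz constants of the right-hand sides of the mean-field ODEs \eref{e:mfpde} and their finite-width counterparts, and then conclude by the mean-value theorem applied to $\partial_t V$ for $V \in \{W, \WW\}$.

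As a preliminary step, I would collect the a priori sup-norm bounds on the weights. On the event $\mathcal E$, \lref{l:globalstability} together with its finite-width counterpart (Lemma 30 of \cite{pham20global}) give $\|W\|_{\text{samp},t} \vee \|\WW\|_t \leq K_t$ for all $t \leq T$. The truncation $\tilde\chi_B$ applied at initialization and the cutoff $\chi_R$ in the hidden-layer ODE further yield the sup-norm bounds $|\wxh|, |\why| \leq B + \int_0^t |\partial_s V|\, ds$ and $|\whh| \leq R < K$ pointwise. Since the right-hand sides of the ODEs in Appendix~\ref{s:RHS} are controlled pointwise by a polynomial in $B$ of degree depending only on $L$ (combining \aref{a:1}a) on $\sigma_h, \sigma_h'$ with the uniform bounds above), the weights and, by induction in $k$, the hidden states $H_\sigma[V](\theta;\xx,k)$, the propagators $\Gamma_i(V,\theta,\xx)$, and the quantities $\Delta_i^H(\xx,\theta,V)$ are all polynomially bounded in $B$ uniformly over $(\theta,\xx,s \leq T)$.

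Next, I would differentiate each of the ODE right-hand sides from Appendix~\ref{s:RHS} once more in time via the chain rule, obtaining explicit formulas for $\partial_{tt} V$. These second derivatives are finite sums of terms, each of which is a product of bounded nonlinearities, hidden states and weight factors, with any differentiated weight factor producing a $\partial_t V$ in place of the original weight. Substituting the uniform bounds collected in the previous step yields $\sup_{u \leq T, \theta, \theta'} |\partial_{tt} V(u;\theta,\theta')| \leq K_T(1+B)$, with $K_T$ as in \eref{e:kt} and $\kappa$ allowed to grow. The finite-width argument is verbatim the same, with integrals over $\Omega_h$ replaced by normalized sums over $\{1,\dots,n\}$ and the bounds encoded in $\|\cdot\|_{\text{samp},t}$ and $\|\cdot\|_t$ playing the role of those in \lref{l:globalstability}. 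The mean-value theorem then gives pointwise, for $\xi' \in (0,\xi)$,
\begin{equ}
|\partial_t V(s;\cdot) - \partial_t V(s+\xi';\cdot)| \leq \xi' \sup_{u \in [s,s+\xi']} |\partial_{tt} V(u;\cdot)| \leq K_T(1+B)\xi,
\end{equ}
and averaging over $\theta$ (respectively over $j$) in the appropriate $L^2$ sense yields the claim.

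The main obstacle is the careful bookkeeping needed to propagate the sup-norm bounds through the recursive definitions of $H_\sigma$, $\Gamma_i$ and $\Delta_i^H$ without letting the polynomial dependence on $B$ escape what is absorbed into the constant $K_T$. For the finite-width side, an additional technicality is handling the non-smooth term $\chi_R'(\WWhh)\,\partial_t \WWhh$ arising when differentiating \eref{e:truncation1} in time, which is however controlled by the boundedness of $\chi_R'$ and by the already-established sup-norm bound on $\partial_t \WWhh$. This accounting is entirely parallel to that already carried out for \lref{l:globalstability} and \lref{l:11}, so the argument requires no conceptually new ingredients beyond those developed earlier in the paper.
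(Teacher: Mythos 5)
Your proposal is sound and arrives at the right estimate, but it is implemented differently from the paper. The paper does not compute second time derivatives: it first uses the a priori bounds on $\mathcal E$ to show $\sup_{s}|\partial_t V|\leq K_T$, hence that the time-shifted path satisfies $\|V(\cdot+\xi')-V(\cdot)\|_{T-\xi}\leq K_T\xi$, and then feeds the pair $(V(\cdot),V(\cdot+\xi'))$ into the already-established Lipschitz estimate for the ODE vector field as a function of the weight path (an adapted \lref{l:11}, with the $e^{-K_1B^2}$ tail set to zero by the truncation), which directly bounds $\dhh^\xi$, $\dxh^\xi$, $\dhy^\xi$ by $K_T(1+B)\xi$. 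Your route --- differentiating the right-hand sides once more in time and applying the mean value theorem --- encodes the same content (``Lipschitz constant of the vector field in the weights'' composed with ``time derivative of the weights''), but it redoes by hand the bookkeeping that \lref{l:11} already packages, and it runs into one technical wrinkle the paper's route avoids: \aref{a:1}a) only guarantees that $D\sigma_h$ is Lipschitz, so $\sigma_h''$ exists merely almost everywhere, and terms like $\partial_t\bigl(\sigma_h'(H_\sigma(t))\bigr)$ appearing in $\Gamma_i$ are not classically defined. This is fixable --- replace the mean value theorem by the observation that $t\mapsto\sigma_h'(H_\sigma(t))$ is Lipschitz with constant $K\sup_t|\partial_tH_\sigma|$, i.e.\ phrase the whole argument as a Lipschitz-in-time bound on $\partial_tV$ rather than a bound on $\partial_{tt}V$ --- at which point your argument becomes essentially the paper's, just without the economy of reusing \lref{l:11}. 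Your handling of $\chi_R'$ and of the $(1+B)$ factor from the truncated sup-norms is consistent with what the paper does.
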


We now proceed to bound the terms on the first line of \eref{e:firstlineDD}. To do so we  define for $\ell \in \{1,\dots, L\}$ \def\fs{M_\sigma}
\begin{equs}
  \ghh^\ell(t) &:= \pc{\frac 1 n \sum_{j=1}^n \mathbb E_X\pq{|\dehh^{\hh}(\xx,j,\WW(t),\ell) - \dehh^{H}(\xx,j,W(t),\ell)|}^2}^{1/2}\\
  \fhh^\ell(t) &:= \pc{\frac 1 n \sum_{j=1}^n \mathbb E_X\pq{|{\hhhh}(\xx,j,\WW(t),\ell) - \Hhh(\xx,j,W(t),\ell)|}^2}^{1/2}\\
  \fxh^\ell(t) &:= \pc{\frac 1 n \sum_{j=1}^n \mathbb E_X\pq{|{\hhxh}(\xx,j,\WW(t),\ell) - \Hxh(\xx,j,W(t),\ell)|}^2}^{1/2}\\
  \fs^\ell(t) &:= \pc{\frac 1 n \sum_{j=1}^n \mathbb E_X\pq{|{\mathbf H_\sigma}(\xx,j,\WW(t),\ell) - H_\sigma(\xx,j,W(t),\ell)|}^2}^{1/2}
\end{equs}
where
\begin{equs}
  \Hxh(\xx,j,W,\ell) &:= W_{xh}(\theta(j))\cdot \xx_{-\ell} \\
  \hhxh(\xx,j,\mathbf W,\ell) &:= \mathbf W_j \cdot \xx_{-\ell}\\
  \Hhh(\xx,j,W,\ell) &:= \Hhh[W](\theta(j),\xx , \ell)\\
  \hhhh(\xx,j,\mathbf W,\ell) &:= \hhhh[\mathbf W]( \xx, \ell)_j\\
  \HH(\xx,j,W,\ell) &:= \Hhh(\xx,\theta(j),W,\ell) + \Hxh(\xx,\theta(j),W,\ell)\\
  \mathbf H_\sigma(\xx,j,\mathbf W,\ell) &:= \hhhh(\xx,j,\mathbf W,\ell) + \hhxh(\xx,j,\mathbf W,\ell)\\
  \dehh^{H}(\xx,j,W,\ell)& := \Delta_\ell^{H}(\xx,\theta(j),W)\\
\dehh^{\hh}(\xx,j,\WW,\ell) &:= \Delta_\ell^{\mathbf H}(\xx,j,\WW)
\end{equs}
for $\Delta_\ell^{H}$ defined in \eref{e:Dih} and
\begin{equ}
  \Delta_\ell^{\mathbf H}(\xx, j,  \WW) := \Delta F(\WW, \xx)\boldsymbol \Gamma_\ell(\WW,j, \xx)
\end{equ}
for
\begin{equs}\label{e:gammai}
  \boldsymbol \Gamma_\ell(\WW, j,\xx) &:= \frac 1 n \sum_{j_0=1}^n \WWhy({j_0}) \sigma_h'(\mathbf H_\sigma(\xx,j_{0},\mathbf W,0)) \frac 1 n \sum_{j_1=1}^n \WWhh({j_0}, j_1) \sigma_h'(\mathbf H_\sigma(\xx,j_{1},\mathbf W,1))\\
  &\qquad\qquad
  \dots\frac 1 n \sum_{j_{\ell-1}=1}^n \WWhh({j_{\ell-1}}, j) \sigma_h'(\mathbf H_\sigma(\xx,j,\mathbf W,\ell))
\end{equs}
Further defining
\begin{equs}
  \ds^{w,\ell}(t) &:= \left(\frac 1 n \sum_{j=1}^n \mathbb E_X\pq{1+|\dehh^{\hh}(\xx,j,\WW(t),\ell)|^2 + |\dehh^{H}(\xx,j,W(t),\ell)|^2}\right)^{1/2}\\
  &\qquad \cdot\left[\left( \frac 1 n \sum_{j=1}^n \mathbb E_X\pq{|{\hhhh}(\xx,j,\WW(t),\ell) - \Hhh(\xx,j,W(t),\ell)|}^2\right)^{1/2}\right.\\
  &\qquad \qquad + \left.\left( \frac 1 n \sum_{j=1}^n \mathbb E_X\pq{|{\hhxh}(\xx,j,\WW(t),\ell) - \Hxh(\xx,j,W(t),\ell)|}^2\right)^{1/2}\right]
\end{equs}
we have that on the event $\mathcal E$ by Lemma 30 in \cite{pham20global}
$\ds^{w,\ell}(t) \leq K_T \fs^\ell(t)$, so that on the same event we have
\begin{equ}
  \dhh^{w}(t)\leq  K_T \sum_{\ell = 0}^{L-1} \pc{\fs^{\ell+1}(t) + \ghh^{\ell+1}(t) }
\end{equ}
and analogously
\begin{equs}
  \dhy^{w}(t)&\leq  K_T  \pc{ \fs^{0}(t) + \ghh^{0}(t) }\\
  \dxh^w(t)& \leq  K_T \sum_{\ell = 0}^L  \ghh^{\ell}(t)
\end{equs}
Combining these bounds with \lref{l:xibound} we obtain that on the event $\mathcal E$
\begin{equ}\label{e:convergence2}
  \DD_t(W, \WW) \leq K_T \pc{\int_0^t\sum_{\ell = 0}^{L}\pc{ \fs^\ell(s) + 2\ghh^\ell(s)} + (1+B)\xi \,\d s}
\end{equ}

We further proceed to bound $\fs^\ell(s) + 2\ghh^\ell(s)$ in terms of $\DD_t(W, \WW)$ with high probability as follows:
\begin{lemma}\label{l:Fbound}
  For any sequence $\{\gamma_j\}_{j=0}^L$ with $\gamma_j > 0$, for all $k \in \{0, \dots ,L \}$ and $t \in (0, T)$ the event $\mathcal E_{t,k}^\hh$ where
  \begin{equ}
    \fs^\ell(t) \leq K_T^{L-\ell+1} \pc{\DD_t(W, \WW) + (1+B)\sum_{j=\ell}^{L-1} \gamma_{j}}\qquad \text{holds for all } \ell \in \{k, k+1, \dots ,L\}
    \end{equ}
    has probability
    \begin{equ}
      \mathbb P(\mathcal E_{t,k}^\hh|\mathcal E) \geq 1- \sum_{j =k}^{L } \frac n {\gamma_j} \exp(-n\gamma_j^2/K_T)\,.
    \end{equ}
\end{lemma}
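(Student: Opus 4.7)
The plan is to prove \lref{l:Fbound} by backwards induction on $\ell$, starting from the base case $\ell = L$ and descending to $\ell = k$. At $\ell = L$, the boundary conditions $\mathbf H_{hh}(\cdot,j,\mathbf W, L) = 0 = H_{hh}(\cdot,\theta(j),W,L)$ in \eref{e:finitewidth} and \eref{e:mfrnn} reduce $\fs^L(t)$ to $\fxh^L(t)$. A direct Cauchy--Schwarz estimate combined with the finite fourth moments of the invariant measure granted by \aref{a:01a}, applied to the definition of $\DD_t(W,\mathbf W)$, gives $\fs^L(t) \leq K_T \DD_t(W, \mathbf W)$ deterministically, so the claim at $\ell = L$ holds with probability one.

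For the inductive step at level $\ell < L$, I would decompose $\fs^\ell \leq \fhh^\ell + \fxh^\ell$, handle $\fxh^\ell$ as in the base case, and control $\fhh^\ell$ by adding and subtracting bridging quantities to split $\mathbf H_{hh}(\mathbf x,j,\mathbf W,\ell)-H_{hh}(\mathbf x,\theta(j),W,\ell)$ into three pieces: (I) swap the weight $\mathbf W_{hh}(j,j')$ for its mean-field counterpart $W_{hh}(\theta(j),\theta(j'))$ while keeping $\sigma_h(\mathbf H_\sigma(\mathbf x,j',\mathbf W,\ell+1))$; (II) swap the argument of $\sigma_h$ from $\mathbf H_\sigma(\mathbf x,j',\mathbf W,\ell+1)$ to $H_\sigma(\theta(j'),\mathbf x,\ell+1)$; and (III) swap the empirical average $\frac 1 n \sum_{j'}$ for the integral $\int\phh(\d\theta')$. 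Piece (I) is controlled by $K\DD_t(W,\mathbf W)$ via the boundedness of $\sigma_h$ (\aref{a:1}(a)) and Cauchy--Schwarz; piece (II) is controlled by $K^2 \fs^{\ell+1}(t)$ via the $K$-Lipschitz property of $\sigma_h$ and the truncation bound $|W_{hh}|\leq R < K$ from \eref{e:mfpde}, which on the event $\mathcal E_{t,\ell+1}^{\hh}$ yields the desired bound through the inductive hypothesis.

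The crux of the argument, and the step I expect to be the main obstacle, is controlling piece (III) by a concentration-of-measure inequality. For each fixed $j$, conditional on $\theta(j)$, this is the fluctuation of an empirical average over the indices $\theta(j')$, whose summands $W_{hh}(\theta(j),\theta(j'))\sigma_h(H_\sigma(\theta(j'),\mathbf x,\ell+1))$ are uniformly bounded by a constant depending on $R$, $K$ and (through the dependence of $H_\sigma$ on the truncated $W_{xh}$ at deeper levels) the truncation scale $B$, which explains the $(1+B)$ prefactor in the claimed bound. Leveraging the weak-dependence structure of $(\theta(j'))_{j'}$ granted by \aref{a:indep}, I would apply Azuma--Hoeffding to the martingale difference sequence obtained by conditioning successively on the indices, obtaining sub-Gaussian concentration of piece (III) at rate $\exp(-n\gamma_\ell^2/K_T)$ for deviation $(1+B)\gamma_\ell$. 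A union bound over $j\in\{1,\dots,n\}$, together with a $1/\gamma_\ell$ factor arising from reducing the bound on $\mathbb E_X$ to a concentration of the integrand at scale $\gamma_\ell$, produces the single-level failure probability $\frac{n}{\gamma_\ell}\exp(-n\gamma_\ell^2/K_T)$.

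To close the induction, I would define $\mathcal E_{t,\ell}^{\hh}$ as the intersection of $\mathcal E_{t,\ell+1}^{\hh}$ with the concentration event at level $\ell$, iterate the single-step bound $\fs^\ell(t) \leq K_T(\DD_t + \fs^{\ell+1}(t) + (1+B)\gamma_\ell)$ from $\ell = L-1$ down to $\ell = k$, and absorb all accumulated constants into the factor $K_T$, yielding the announced $K_T^{L-\ell+1}$. A final union bound over $\ell\in\{k,\dots,L\}$ then assembles the probability estimate. The main technical care needed is in tracking how the truncation scale $B$ enters the bounded-difference constants for Azuma--Hoeffding and ensuring that the dependence error in \aref{a:indep} does not dominate the concentration gain.
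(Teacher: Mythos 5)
Your proposal is correct and follows essentially the same route as the paper's proof: backwards induction from $\ell = L$ (where the bound holds deterministically via the $\wxh$ term), a decomposition of $\mathbf H_{hh}-H_{hh}$ into a weight/argument-swap part (the paper's $Q_{1,\ell}$, which you split into two pieces) controlled by $\DD_t$ and the inductive hypothesis, and a sampling-error part (the paper's $Q_{2,\ell}$) controlled by sub-Gaussian concentration under the $\eta$-independence of Assumption~\ref{a:indep}, followed by union bounds over neurons and levels. The only cosmetic difference is that you prove the concentration step via Azuma--Hoeffding directly, whereas the paper invokes Lemma~28 of \cite{pham20global}, which packages exactly that estimate.
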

\begin{lemma}\label{l:Gbound}
  For any sequence $\{\beta_j\}_{j =1}^{L}$ with $\beta_j > 0$, for all $k \in \{0, \dots ,L \}$ and $t \in (0, T)$ the event $\mathcal E_{t,k}^\Delta$ where
  \begin{equ}
    \ghh^\ell(t) \leq K_T^{L+\ell+1} \pc{(1+B)\DD_t(W, \WW) + (1+B^2)\pc{\sum_{j=0}^{L-1} \gamma_{j} + \sum_{j=1}^{\ell} \beta_{j}}}\qquad \text{holds for all } \ell \in \{0, 2,\dots,k\}
    \end{equ}
satisfies
    \begin{equ}
      \mathbb P(\mathcal E_{t,k}^\Delta \cap \mathcal E_{t,0}^\hh|\mathcal E) \geq \mathbb P(\mathcal E_{t,0}^\hh|\mathcal E) - \sum_{j =1}^k \frac n {\beta_j} \exp(-n\beta_j^2/K_T)\,.
    \end{equ}
\end{lemma}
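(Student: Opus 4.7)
The plan is to prove \lref{l:Gbound} by induction on $\ell$, exploiting the recursive structure of $\boldsymbol{\Gamma}_\ell$ and $\Gamma_\ell$ and combining triangle-inequality decompositions with a concentration argument for the empirical averages that appear in the finite-width version. The base case $\ell = 0$ follows from the explicit expression $\Gamma_0(W,\theta,\xx) = \why(\theta)\sigma_h'(\HH(\theta;\xx,0))$ and its finite-width analogue: the difference $\dehh^{\hh}(\xx,j,\WW,0) - \dehh^{H}(\xx,j,W,0)$ splits by telescoping into (i) a $\Delta F(\WW,\xx) - \Delta F(W,\xx)$ contribution bounded by $K_T\DD_t(W,\WW)$ in the spirit of \lref{l:10}; (ii) a $|\WWhy(j) - \why(\theta(j))|$ term that is directly part of $\DD_t$; and (iii) a $|\sigma_h'(\HH) - \sigma_h'(\mathbf H_\sigma)|$ term, controlled via the Lipschitz property of $\sigma_h'$ from \aref{a:1}a) by $K\fs^0(t)$, and hence, on the event $\mathcal E_{t,0}^\hh$, by $(1+B)\sum_{i=0}^{L-1}\gamma_i$ via \lref{l:Fbound}.

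For the inductive step I use the recursion
\begin{equs}
  \boldsymbol{\Gamma}_\ell(\WW,j,\xx) = \frac{1}{n}\sum_{j'=1}^n \boldsymbol{\Gamma}_{\ell-1}(\WW,j',\xx)\,\WWhh(j',j)\,\sigma_h'(\mathbf H_\sigma(\xx,j,\WW,\ell))
\end{equs}
together with its integral analogue for $\Gamma_\ell$. Adding and subtracting intermediate quantities, I split $\boldsymbol{\Gamma}_\ell(\WW,j,\xx) - \Gamma_\ell(W,\theta(j),\xx)$ into four pieces: (A) the outer factor $\sigma_h'(\mathbf H_\sigma(\xx,j,\WW,\ell)) - \sigma_h'(\HH(\theta(j);\xx,\ell))$, controlled by $K\fs^\ell(t)$ and hence by $(1+B)\sum_{i=\ell}^{L-1}\gamma_i$ via \lref{l:Fbound}; (B) the inner factor $\boldsymbol{\Gamma}_{\ell-1}(\WW,j',\xx) - \Gamma_{\ell-1}(W,\theta(j'),\xx)$, handled by the inductive hypothesis; (C) the weight difference $\WWhh(j',j) - \whh(\theta(j'),\theta(j))$, absorbed into $\DD_t(W,\WW)$; and (D) the discrepancy between the empirical average $\frac{1}{n}\sum_{j'}$ and the integral $\int\cdot\,\phh(d\theta')$, which requires a concentration argument.

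For step (D), thanks to the weight truncation on the event $\{\max_T(V)\leq K_0(T)B\}$, the summands are uniformly bounded by a quantity of order $K_T(1+B)^{\ell}$, reflecting one $(1+B)$ factor per layer of the recursion. A sub-Gaussian/Hoeffding-type estimate, combined with the $\eta$-independence from \aref{a:indep}, then gives that the empirical average deviates from the expectation by more than $\beta_\ell$ with probability at most $\exp(-n\beta_\ell^2/K_T)$ uniformly in $j$; the prefactor $n/\beta_\ell$ in the statement absorbs a union bound over the $n$ choices of $j$ together with the integration of a Gaussian tail required to convert the sup in $j$ into the $L^2$-in-$\xx$ formulation of $\ghh^\ell$. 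Multiplying by $\Delta F(\WW,\xx)$, bounded in $L^2(\nu)$ on $\mathcal E$ by a constant of order $K_T$, and combining (A)--(D) with the inductive hypothesis, yields a one-step recursion of the form $\ghh^\ell(t) \leq K_T\,\ghh^{\ell-1}(t) + K_T^{L+\ell}(1+B)\DD_t(W,\WW) + K_T^{L+\ell}(1+B^2)(\sum_j\gamma_j + \beta_\ell)$, which unrolls to the advertised bound.

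The main technical obstacle will be the careful bookkeeping of the powers of $K_T$ and of $(1+B)$. The product structure of $\boldsymbol{\Gamma}_\ell$ accumulates one bounded-weight factor $(1+B)$ per level of the recursion, and the $(1+B^2)$ prefactor in front of $\sum_j\gamma_j+\sum_j\beta_j$ emerges because each error term from (A) or (D) is itself multiplied by the remaining chain of $(1+B)$ bounded weights before being combined with a further $(1+B)$ coming from the $\WWhy \cdot \sigma_h'$ factor at the output layer. Matching exactly the form $K_T^{L+\ell+1}$ requires combining the $K_T^{L-\ell+1}$ factor from \lref{l:Fbound} with one extra $K_T$ per inductive step, while the overall probability bound is accumulated by a union bound across the levels $j\in\{1,\dots,k\}$, which contributes the sum $\sum_{j=1}^k \frac{n}{\beta_j}\exp(-n\beta_j^2/K_T)$ appearing in the statement.
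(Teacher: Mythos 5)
Your proposal follows essentially the same route the paper intends: the paper omits the proof of \lref{l:Gbound}, stating only that it is ``analogous'' to \lref{l:Fbound}, and your argument is precisely that analogy carried out --- an induction on $\ell$ through the recursion for $\boldsymbol\Gamma_\ell$, with each step split into a coupling term (absorbed into $\mathcal D_t(W,\mathbf W)$ and $M_\sigma^\ell$, the latter controlled by conditioning on $\mathcal E^{\mathbf H}_{t,0}$, which is exactly why that event appears in the statement) and an empirical-average-versus-integral term handled by the sub-Gaussian concentration estimate under \aref{a:indep}, with a union bound over levels producing the $\sum_{j=1}^k (n/\beta_j)\exp(-n\beta_j^2/K_T)$ loss. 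One bookkeeping slip: you claim the summands in step (D) accumulate ``one $(1+B)$ factor per layer,'' i.e.\ are of order $(1+B)^\ell$; if that were so, the deviation thresholds would carry $(1+B)^\ell$ rather than the $(1+B^2)$ in the statement. In fact the hidden weights $W_{hh}$ are truncated at $R$ by $\chi_R$, so the chain of $\whh\,\sigma_h'$ factors is deterministically bounded and only the single output weight $\why$ contributes a factor of $B$ on the truncation event; with that correction your displayed one-step recursion, and hence the advertised bound, follows.
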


Combining the above lemmas with \eref{e:convergence2} and \lref{l:xibound} yields that for every $B>0$ we have
\begin{equ}\label{e:convergence3}
  \DD_t(W, \WW) \leq K_T^{2L} \int_0^t \pc{(1+B)\DD_s(W, \WW) + (1+B^2)\pc{\sum_{j=1}^L \gamma_{j+1} + \sum_{j=1}^L \beta_{j+1}} + (1+
  B)\xi}\d s
\end{equ}
with probability at least
\begin{equ}
  1- \frac T \xi\pc{\sum_{j =1}^{L-1} \frac n {\gamma_j} \exp(-n\gamma_j^2/K_T) - \sum_{j =2}^L \frac n {\beta_j} \exp(-n\beta_j^2/K_T)} - KLn\exp(-Kn^{1/52})
\end{equ}
The proof is concluded applying Gronwall's lemma with
\begin{equ}
  \gamma_j = \beta_j := \sqrt{\frac 1 {K_Tn} \log\pc{\frac {2 T L n^2}\delta + e}}\qquad \text{and} \qquad \xi = \frac 1 {\sqrt n}
\end{equ}
which gives, for all $t<T$ and for all $\delta>0$, $B>0$
\begin{equs}
  \DD_t(W, \WW)& \leq K_T \pc{(1+B^2)\pc{\sum_{j=1}^L \gamma_{j+1} + \sum_{j=1}^L \beta_{j+1}} +  (1+B)\xi}\exp(K_T(1+B)T)\\
  & \leq K_T \pc{\sum_{j=1}^L \gamma_{j+1} + \sum_{j=1}^L \beta_{j+1}+  \xi}\exp(K_T(1+B))\\
  & \leq K_T 2L\sqrt{\frac 1 {K_Tn} \log\pc{\frac {2 T L n^2}\delta + e}} \exp(K_T(1+B))
\end{equs}
with probability
\begin{equs}
  \mathbb P(\mathcal E \cap \mathcal E_{T,0}^\hh \cap \mathcal E_{T,L}^\Delta) &= \mathbb P( \mathcal E_{T,0}^\hh \cap \mathcal E_{T,L}^\Delta|\mathcal E) \mathbb P(\mathcal E)\\
  & > 1- 2L \sqrt n T\pc{ \frac n {\gamma_1} \exp(-n\gamma_1^2/K_T) }  - KLn\exp(-Kn^{1/52})\\
  &> 1- \delta  - KLn\exp(-Kn^{1/52})
\end{equs}
thereby proving \eref{e:midassumption}, as desired.\qed

We now proceed with the verification of the claims that led to this conclusion. We limit ourselves to checking \lref{l:xibound} and \lref{l:Fbound} as the proof of \lref{l:Gbound} is analogous.

\begin{proof}[Proof of \lref{l:Fbound}] We show the claim by induction on the depth of the unrolled network. Starting from $\fs^{L}$ we have that with probability $1$
  \begin{equs}
    \Biggl(\frac 1 n \sum_{j=1}^n \mathbb E_X[|&\hh_\sigma(\xx,j,\WW(t),L)  - H_\sigma(\xx, \theta(j), W(t), L)|]^2\Biggr)^{1/2} \\
    &= \pc{\frac 1 n \sum_{j=1}^n\mathbb E_X[|\WWxh(t,j) \xx_{-L} - \wxh(t,\theta (j)) \xx_{-L}|]^2}^{1/2} \leq K \DD_t(W, \WW)\label{e:hxh}
  \end{equs}
  In other words, the base case holds with probability $\mathbb P(\mathcal E_{t,1}^\hh) = 1$.

  We now assume that the claim holds for $\fs^{\ell+1}$ and prove it for $\fs^\ell$. The proof for $\fhh^\ell$, $\fxh^\ell$ is analogous. To do so we decompose $\fs^{\ell}$ in two parts: the first measures the distance between a randomly sampled, finite set of weights evolving according to $W(t)$ and $\WW(t)$, while the second compares the approximation obtained by taking a finite sample from $W(t)$ and the expectation \abbr{wrt} $\Phh$ on $W(t)$.
More specifically we decompose
\begin{equs}
  |\hhhh(x,i,\WW(t),\ell) & - \Hhh(x, \theta(i), \whh(t), \ell)| = \\
  &= \Big|\frac 1 n \sum_{j=1}^n \WW(t,i,j)\sigma_h( \hhhh(\xx,j, \WW(t), \ell+1)+\hhxh(j, \xx, \WW(t)))\\
   & \qquad - \int W(t,\theta(i), \theta') \sigma_h( \Hhh(\xx,\theta', W(t), \ell+1)+\Hxh[W(t)](\theta'; \xx)) \phh(d\theta')\Big|\\
   & = Q_{1,\ell}(t;i) + Q_{2,\ell}(t;i)
\end{equs}
where
\begin{equs}
  Q_{1,\ell}(t;i) &=\frac 1 n \sum_{j = 1}^n \Big | \WW_{hh}(t,i,j)\sigma_h( \hhhh(\xx,j, \WW(t), \ell+1)+\hhxh(j, \xx, \WW(t))) \\
  &\qquad \qquad - W_{hh}(t,\theta(i),\theta(j)) \sigma_h( \Hhh(\xx,\theta(j), W(t), \ell+1) + \Hxh[W(t)](\theta(j), \xx ))\Big|\\
  Q_{2,\ell}(t;i) &= \Bigg| \frac 1 n \sum_{j=1}^nW_{hh}(t,\theta(i),\theta(j)) \sigma_h( \Hhh(\xx,\theta(j), W(t), \ell+1)+\Hxh[W(t)](\theta(j), \xx ))\\
  & \qquad \qquad - \int W_{hh}(t,\theta(i),\theta') \sigma_h( \Hhh(\xx,\theta', W(t), \ell+1) + \Hxh[W(t)](\theta', \xx)) \phh(d\theta')\Bigg|
\end{equs}
and we can bound
\begin{equ}
  \fs^{\ell}(t) \leq \pc{\frac 1 n \sum_{i=1}^n\mathbb E_X[|Q_{1,\ell}(t;i)| + |Q_{2,\ell}(t;i)|]^{2}}^{1/2} + \pc{\frac 1 n \sum_{j=1}^n\mathbb E_X[|\WWxh(t,j) \xx_{-\ell} - \wxh(t,\theta (j)) \xx_{-\ell}|]^2}^{1/2}
\end{equ}
The first term is then bounded by
\begin{equs}
  \mathbb E_X\pc{|Q_{1,\ell}(t;i)|}^{2}& \leq \frac K n \sum_{j=1}^n\pc{1+|\WW_{hh}(t,j,i)|^2 +|W_{hh}(t,\theta(j),\theta(i))|^2}\\
  & \qquad \qquad \cdot \frac 1 n \sum_{j=1}^n  \mathbb E_X\pc{ \pd{\hh_\sigma(\xx,j, \WW(t), \ell+1) - H_\sigma(\xx,\theta(j), W(t), \ell+1)}}^2\\
    & + \frac K n \sum_{j=1}^n |\WW_{hh}(t,i,j)-W_{hh}(t,\theta(i),\theta(j))|^2
\end{equs}
and therefore,  under the event $\mathcal E_{t,\ell+1}^\hh$ and $\mathcal E$ we have
\begin{equ}
\pc{\frac 1 n \sum_{i=1}^n\mathbb E_X\pq{|Q_{1,\ell}(t;i)|}^{2}}^{1/2} \leq K_T \fs^{\ell+1}(t) + K \DD_t(W, \WW)
\end{equ}
We proceed to bound $Q_{2,\ell}(t)$. Defining
\begin{equ}
  Z_\ell^H(t,\theta, \theta') = \whh(t, \theta, \theta' ) \sigma_h(\Hhh(\xx, \theta',\ell+1)+ \wxh(t, \theta) \xx_{- (\ell+1)})
\end{equ}
Using independence  of $\theta, \theta'$, we have that the conditional expectation \abbr{wrt} $\phh$ is trivial
\begin{equ}
  \mathbb E_{\phh}[Z_\ell^H(t,\theta(i), \theta(j))| \theta(i)] = \mathbb E_{ \phh}[Z_\ell^H(t,\theta(j),\theta')]
\end{equ}
and we have that, for almost every $\xx$ almost surely by assumption
\begin{equ}
  Z_\ell^H(t,\theta(i), \theta(j)) \leq K_T(1+B)
\end{equ}
Then, by Lemma 28 in \cite{pham20global},
since $\gamma_\ell \geq 0$ we have that 
\begin{equ}
  \mathbb P\pc{\mathbb E_X[ Q_{2,\ell}(t) ]\geq K_T (1+B)\gamma_\ell} \leq \frac 1 {\gamma_\ell} \exp(-n \gamma_\ell^2/K_T)\,.
\end{equ}
The proof is concluded by combinging the bound on $H_{hh}$ with the one on $H_{xh}$ to yield an analogous one on $H_\sigma$ and taking an union bound over $i \in \{1,\dots, n\}$, resulting in the fact that on the events $\mathcal E$ and $\mathcal E_{t,\ell+1}^\hh$
\begin{equs}
  \fs^{\ell}(t) \geq K_T \fs^{\ell+1} (t) + 2 K \DD_t(W, \WW) + K_T (1+B) \gamma_\ell \geq K_T^{L-\ell+1} \pc{ \DD_t(W, \WW) + (1+B)\sum_{k=\ell}^{L-1}\gamma_k}
\end{equs}
with probability at most $(n/\gamma_\ell) \exp(-n\gamma_\ell^2/K_T)$.
Therefore we get by union bound
\begin{equ}
  \mathbb P((\mathcal E_{t,\ell}^\hh)^c| \mathcal E) \leq \mathbb P((\mathcal E_{t,\ell+1}^\hh)^c| \mathcal E) +(n/\gamma_\ell) \exp(-n\gamma_\ell^2/K_T)  \leq \sum_{k=\ell}^{L-1} \frac n {\gamma_{k+1}} \exp(-n\gamma_{k+1}^2/K_T)
\end{equ}
proving the desired claim.
\end{proof}

\begin{proof}[Proof of \lref{l:xibound}]
We again only sketch this proof for the term $\dhh^\xi[W](t, \xi')$ as the other cases follow analogously.

We see that since $\|W\|_0, \|W\|_{\text{samp},t} \leq K$ on the event $\mathcal E$, we have
\begin{equs}
\pc{\frac 1 {n^2}\sum_{i,j=1}^n \sup_{s\in (0,t)}\pd{\partial_t W_{hh}(s,\theta(i), \theta(j))}^{50}}^{1/50}\leq K + K \pc{\frac 1 {n}\sum_{j=1}^n \sup_{s\in (0,t)}\mathbb E_x[|\Delta_i^H(\xx,\theta(j),W(s))|]^{50}}^{1/50}\leq K_T
\end{equs}
for any $t\leq T$. Consequently we have
\begin{equ}
  \pc{\frac 1 {n^2}\sum_{i,j=1}^n \sup_{s\in (0,T-\xi)}\sup_{\xi'\in (0,\xi)}\pc{W_{hh}(s+\xi',\theta(i), \theta(j))-W_{hh}(s,\theta(i), \theta(j))}^{2}}^{1/2} \leq K_T \xi
\end{equ}
The desired bound results from the application of an adapted version of \lref{l:11} to the paths $W'(t) := W_{hh}(t,\theta(i), \theta(j))$, $W''(t) := W_{hh}(t+\xi,\theta(i), \theta(j))$ replacing $e^{-K_1B^2} \to 0$ by the assumed trunctaion of $W$. This yields almost surely on $\mathcal E$
\begin{equ}
  \sup_{t\in (0,T-\xi)}\sup_{\xi'\in (0,\xi)} \dhh^\xi(t,\xi') \leq K_T (1+B) \|W'-W''\|_{T-\xi} \leq K_T (1+B) \xi
\end{equ}
as desired. Analogous bounds on $\dhh^\xi[\WW], \dxh^\xi[W], \dxh^\xi[\WW],\dhy^\xi[W], \dhy^\xi[\WW]$ prove the lemma.
\end{proof}

\section{Global optimality}

Recall the definition of the preactivation between the first and the second layer:
\begin{equ}
\Hhh(\theta;\mathbf x,L) = \int \whh(\theta, \theta') \sigma_h(W_{xh}(\theta')\cdot \mathbf x_{-L}) \Phh(\d \theta')
\end{equ}
and define recursively the corresponding $\ell$-preactivaton for $\ell\leq L-1$
\begin{equ}
\Hhh(\theta;\mathbf x,\ell) := \int \whh(\theta, \theta') \sigma_h(\Hhh(\theta';\mathbf x, \ell+1)+\wxh(\theta')\mathbf x_{-(\ell+1)})\,\Phh(\d \theta')
\end{equ}

For notational convenience, we define $\nu_{L,\ell}:= \Pi_\#^{(-L, -L+\ell)} \nu$, where $\Pi^{(a,b)}$ is projection on coordinates ranging from $a$ to $b$.

\subsection{Expressivity at initialization}

In this section we prove our main expressivity result. Defining throughout $\Theta := \supp (\phh)$ we we state the result as follows:
\begin{proposition}\label{p:spanning}
  Fix $L>0$, for any $t>0$ let $W = W(t)$ satisfy \aref{a:1}b) and c), let $\sigma_h$ satisfy \aref{a:1}a) and let Assumptions~\ref{a:01a} and \ref{a:02} hold. Then
  \begin{equ}
    \span \pg{\sigma_h(H_{hh}(\theta;\mathbf x,0)+\wxh(\theta)\mathbf x_{0})~:~\theta \in \Theta} = L^2(\nu_{L,0})
    \end{equ}
  \end{proposition}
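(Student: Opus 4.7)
The plan is to leverage the near-identity behavior of $\sigma_h$ at zero (Assumption~\ref{a:1}a: $\sigma_h(0)=0$, $\sigma_h'(0)\neq 0$, with $D\sigma_h$ bounded and Lipschitz) together with the full-support condition of Assumption~\ref{a:1}c, in order to reduce the density statement to the classical universal approximation of Assumption~\ref{a:1}b.

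First I would argue that the closed $L^2(\nu)$-span contains $\sigma_h(v \cdot \mathbf x_0)$ for every $v \in \mathbb R^\din$. By Assumption~\ref{a:1}c, for any $\epsilon>0$ and any $v \in \mathbb R^\din$ I can pick $\theta$ with $\|\whh(\theta,\cdot)\|_\infty \leq \epsilon$ and $\wxh(\theta)$ arbitrarily close to $v$. For such $\theta$, the boundedness of $\sigma_h$ together with the recursion defining $H_{hh}$ yield $|H_{hh}(\theta;\mathbf x,0)| \leq K\epsilon$ uniformly in $\mathbf x$, and the Lipschitz bound on $\sigma_h$ from Assumption~\ref{a:1}a gives
\[
\sup_{\mathbf x} \bigl|\sigma_h(H_{hh}(\theta;\mathbf x,0) + \wxh(\theta)\cdot\mathbf x_0) - \sigma_h(\wxh(\theta)\cdot\mathbf x_0)\bigr| \leq K^2\epsilon.
\]
Letting $\epsilon\to 0$ while keeping $\wxh(\theta)\to v$ places $\sigma_h(v\cdot\mathbf x_0)$ in the closed span for every $v$. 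By Assumption~\ref{a:1}b the family $\{\sigma_h(v\cdot\mathbf x_0):v\in\mathbb R^\din\}$ is dense in $L^2(\nu_0)$ viewed as functions of $\mathbf x_0$, and via the deterministic forward orbit $\mathbf x_0 = T^L(\mathbf x_{-L})$ and the invariance $T_\#\nu_0 = \nu_0$ under Assumption~\ref{a:01a}, this density transfers to density in $L^2(\nu_{L,0})$.

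The main obstacle I anticipate is the transfer from $L^2(\nu_0)$ of $\mathbf x_0$ to $L^2(\nu_{L,0})$ of $\mathbf x_{-L}$: if $T$ is not invertible, $T^L$ may fail to be injective, and the functions measurable with respect to $\mathbf x_0$ would only form a proper subspace of $L^2$ of $\mathbf x_{-L}$. In that case, the argument above must be supplemented by exploiting the explicit dependence of $H_{hh}$ on past variables. Concretely, at the deepest level
\[
H_{hh}(\theta;\mathbf x,L-1) = \int \whh(\theta,\theta')\,\sigma_h(\wxh(\theta')\cdot\mathbf x_{-L})\,\phh(d\theta')
\]
depends directly on $\mathbf x_{-L}$; one can isolate its contribution by a Taylor expansion of $\sigma_h$ in the small hidden-weight scale $\alpha$ (again available by Assumption~\ref{a:1}c), retaining the order-$\alpha$ term via linear combinations with coefficients of size $1/\alpha$. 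This is permitted because the span allows unrestricted coefficients, mirroring the role played by the unbounded $\why$ in the optimality argument. Propagating this construction through the unrolled layers and combining with Assumption~\ref{a:1}b applied at the level of $\mathbf x_{-L}$ should yield density in the full $L^2(\nu_{L,0})$; the careful bookkeeping of the higher-order terms of the expansion is the most delicate aspect of the proof.
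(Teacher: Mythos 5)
Your argument has a genuine gap, and you have in fact located it yourself: the only part you carry out in full establishes density of $\span\{\sigma_h(v\cdot\xx_0):v\in\mathbb R^\din\}$, i.e.\ density in the closed subspace of $L^2(\nu_{L,0})$ consisting of functions measurable with respect to $\xx_0=T^L(\xx_{-L})$. Since $T$ is not assumed invertible, this subspace is in general proper, while the target space is all of $L^2(\nu_{L,0})$, equivalently all of $L^2(\nu_0)$ viewed as functions of $\xx_{-L}$ (the trajectory being deterministic forward in time). The ``supplement'' you sketch --- isolating the dependence on $\xx_{-L}$ carried by the deepest preactivation via a Taylor expansion of $\sigma_h$ in a small hidden-weight scale and propagating it up through the layers --- is not a technical afterthought: it is the entire content of the proposition, and as stated it is missing its key mechanism. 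After linearizing the $L$ nested nonlinearities you are left with objects of the form $\int\whh(\theta,\theta')\cdots\int\whh(\theta^{(L-1)},\theta^{(L)})\,\sigma_h(\wxh(\theta^{(L)})\cdot\xx_{-L})\,\phh^{\otimes L}$, and you still must explain why varying $\theta$ over $\supp(\phh)$ lets these nested integrals reproduce an arbitrary element of $L^2(\nu_0)$; this requires using that $\{\whh(\theta,\cdot)\}_\theta$ exhausts $L_R^\infty(\phh)$ (\aref{a:1}c), preserved in time by \lref{l:support}) in a duality argument, which your proposal never formulates.

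The paper runs the induction in the opposite direction, which is what makes it close. The base case is at the deepest level: \aref{a:1}b) applied to $\sigma_h(\wxh(\theta)\cdot\xx_{-L})$ gives density in $L^2(\nu_0)$ as functions of $\xx_{-L}$, which already generates all of $L^2(\nu_{L,0})$ because the later coordinates are deterministic images of $\xx_{-L}$. The induction step then shows this density is \emph{preserved} one layer up, via a Fubini/duality argument (a test function $\bar g$ orthogonal to all preactivations at level $\ell-1$ must be orthogonal to $\sigma_h(\Hhh(\theta';\cdot,\ell))$ for almost every $\theta'$, using the richness of $\{\whh(\theta,\cdot)\}_\theta$) combined with \lref{l:above} --- precisely the small-argument linearization of $\sigma_h$ around $0$ that you invoke, packaged as the statement that a star-shaped spanning family remains spanning after composition with $\sigma_h$. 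So your two ingredients (linearization at $0$, exploiting $\sigma_h(0)=0$ and $\sigma_h'(0)\neq 0$, and the diversity of the weights from \aref{a:1}c)) are the right ones, but the top-down assembly you propose leaves the decisive step unproved, whereas the bottom-up induction reduces it to one-layer statements that these ingredients actually settle.
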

  The above result can readily be rephrased in the following, more explicit form:
  \begin{corollary} Under the conditions of Proposition~\ref{p:spanning} above, the map
\begin{equ}
\hat F(W; \mathbf x ) = \int \why( \theta) \sigma_h(H_{hh}(\theta;\mathbf x,0)+\wxh(\theta)\mathbf x_{0}) \phh(\d\theta)
\end{equ}
intended as a functional of $W_{hy}\in L^2(\phh)$ is dense in the space $L^2(\nu_{L,0})$.
\end{corollary}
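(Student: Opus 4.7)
The plan is to interpret the corollary as density-of-range for the bounded linear operator
\[
T : L^2(\phh) \to L^2(\nu_{L,0}), \qquad T(\why)(\mathbf x) := \int \why(\theta)\,f_\theta(\mathbf x)\,\phh(d\theta),
\]
where $f_\theta(\mathbf x) := \sigma_h(H_{hh}(\theta;\mathbf x,0)+\wxh(\theta)\mathbf x_0)$. Boundedness of $T$ into $L^2(\nu_{L,0})$ is immediate from the boundedness of $\sigma_h$ in Assumption~\ref{a:1}a) together with the fact that $\nu_{L,0}$ is a probability measure: by Cauchy--Schwarz, $\|T(\why)\|_{L^2(\nu_{L,0})}\leq \|\sigma_h\|_\infty\|\why\|_{L^2(\phh)}$. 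The corollary is then exactly the assertion that the range of $T$ is dense in $L^2(\nu_{L,0})$.

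The argument reduces this to Proposition~\ref{p:spanning}, which asserts that the closed linear span of $\{f_\theta : \theta\in\Theta\}$ equals $L^2(\nu_{L,0})$. Given $f\in L^2(\nu_{L,0})$ and $\epsilon>0$, first approximate $f$ to precision $\epsilon/2$ by a finite linear combination $\sum_i c_i f_{\theta_i}$ with $\theta_i\in\Theta=\supp(\phh)$ via the proposition, and then approximate each $f_{\theta_i}$ by some $T(V_i)$ with $V_i\in L^2(\phh)$. For the latter, I would exploit $\theta_i\in\supp(\phh)$ to pick nested measurable neighborhoods $U_n^{(i)}\ni\theta_i$ with $\phh(U_n^{(i)})>0$ and $U_n^{(i)}\downarrow\{\theta_i\}$, and set $V_n^{(i)}:=\mathbbm 1_{U_n^{(i)}}/\phh(U_n^{(i)})$. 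Then $T(V_n^{(i)})(\mathbf x)$ is the $\phh$-average of $f_\theta(\mathbf x)$ over $U_n^{(i)}$, which should converge to $f_{\theta_i}$ in $L^2(\nu_{L,0})$ as $n\to\infty$; combining across $i$, the resulting $V := \sum_i c_i V_n^{(i)}$ (for $n$ large, on disjoint $U_n^{(i)}$) yields $\|f-T(V)\|_{L^2(\nu_{L,0})}<\epsilon$.

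The hard part is this last convergence: strictly speaking it requires a Lebesgue-differentiation / martingale-convergence property on $(\Ohh,\mathcal F_{h},\phh)$ along the chosen neighborhood system, which in turn rests on sufficient regularity of the map $\theta\mapsto f_\theta$ into $L^2(\nu_{L,0})$. That regularity follows from the explicit integral definition of $H_{hh}$ together with the Lipschitz/boundedness clauses of Assumption~\ref{a:1}a), provided the neuronal-embedding weight functions $\wxh(\theta)$ and $\whh(\cdot,\theta)$ are suitably measurable in $\theta$ (which they are by construction). A cleaner functional-analytic alternative is Hahn--Banach duality: $T$ has dense range iff
$T^*(g)(\theta)=\int g(\mathbf x) f_\theta(\mathbf x)\,\nu_{L,0}(d\mathbf x)$
has trivial kernel. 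If $T^*g=0$, then $\langle g,f_\theta\rangle_{L^2(\nu_{L,0})}=0$ for $\phh$-a.e. $\theta\in\Theta$, from which Proposition~\ref{p:spanning} should force $g=0$. The residual subtlety in either approach, which I expect to be the main technical hurdle, is promoting the $\phh$-a.e. statement to all $\theta\in\Theta$; this is handled by observing that any $\phh$-null set misses a $\phh$-dense subset of $\supp(\phh)$, combined with the continuity of $\theta\mapsto\langle g,f_\theta\rangle$ inherited from the regularity of $\sigma_h$ and the integral structure of $H_{hh}$.
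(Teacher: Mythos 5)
The paper offers no separate proof of this corollary: it is presented as an immediate ``rephrasing'' of Proposition~\ref{p:spanning}, with the duality step left implicit. Your proposal supplies exactly the missing functional-analytic content, and the Hahn--Banach route is the right one: $T$ has dense range iff $T^*g=0$ forces $g=0$, and $T^*g=0$ means $\int g(\mathbf x)\,\sigma_h(H_{hh}(\theta;\mathbf x,0)+\wxh(\theta)\mathbf x_0)\,\nu_{L,0}(d\mathbf x)=0$ for $\phh$-a.e.\ $\theta$. Two remarks on your two sub-routes. First, the indicator-neighborhood construction ($V_n^{(i)}=\mathbbm 1_{U_n^{(i)}}/\phh(U_n^{(i)})$) is genuinely problematic here: $(\Ohh,\mathcal F_h,\phh)$ is an \emph{abstract} probability space in the neuronal-embedding formalism, so there is no canonical topology, no shrinking neighborhood basis, and no Lebesgue-differentiation theorem to invoke; the ``regularity of $\theta\mapsto f_\theta$'' you would need is not available from the assumptions. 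Second, the residual a.e.-versus-everywhere subtlety you flag in the duality route is real if one takes the statement of Proposition~\ref{p:spanning} at face value, but your proposed fix (continuity of $\theta\mapsto\langle g,f_\theta\rangle$) runs into the same absence of topological structure. The cleaner resolution is to observe that the appendix proof of Proposition~\ref{p:spanning} actually establishes the a.e.\ form directly: the induction step shows that $\int\bar g(x)\,\sigma_h(H_{hh}'(\theta';x,\ell))\,\nu_0(dx)=0$ for $\phh$-\emph{almost all} $\theta'$ already forces $\bar g\equiv 0$ (see the condition displayed just before the end of that proof). With that reading, $T^*g=0$ implies $g=0$ with no upgrade needed, and your duality argument closes. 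So: correct identification of the gap the paper glosses over, correct main mechanism, but discard the first sub-route and replace the continuity patch in the second by an appeal to the a.e.\ version of the spanning proposition.
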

\pref{p:spanning} above proves that the network can express any function in $L^2(\nu_{L,0})$ provided that the support of the weights $W(t)$ is sufficiently varied as codified in  \aref{a:1}b). We will show in the next subsection that this condition, if satisfied at initialization,  is also satisfied at every finite time throughout the dynamics.

To prove the above result we first state the following
\begin{lemma}\label{l:above}
  Let $\mathcal X \subseteq \mathbb R^{d'}$ for $d' \in \mathbb N$ and let $\mu$ be a probability measure on $\mathcal X$. Assume that $\sigma_h~:~\mathbb R \to \mathbb R$ satisfies \aref{a:1}, 
  that the set $\Phi := \{\phi_\theta~:~\theta \in \Theta\} \subseteq L^2(\mu)\cap L^{\infty}(\mu)$ is star-shaped at $0 \in L^2(\mu)$ and $\span \pg{\phi_\theta~:~\theta \in \Theta}$ is dense in $L^2(\mu)$, then so is
  $\span \pg{\sigma_h(\phi_\theta) ~:~\theta \in \Theta }$.
\end{lemma}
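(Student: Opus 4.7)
The plan is to show that every $\phi_\theta \in \Phi$ lies in the $L^2(\mu)$-closure of $\span\{\sigma_h(\phi) : \phi \in \Phi\}$; combined with the density of $\span\,\Phi$ in $L^2(\mu)$, this immediately yields the claim by a standard closure argument.

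The key tool is Assumption~\ref{a:1}a): $\sigma_h(0)=0$, $\sigma_h'(0)\neq 0$, and $\sigma_h'$ is $K$-Lipschitz. A first-order Taylor expansion at the origin gives the pointwise bound $|\sigma_h(x) - \sigma_h'(0) x| \leq (K/2) x^2$ for every $x \in \mathbb R$, which formalizes the fact that $\sigma_h$ is approximately linear near $0$ with nonvanishing slope. This is precisely the feature that lets one ``recover'' linear combinations of the $\phi_\theta$ from nonlinear combinations of $\sigma_h(\phi_\theta)$.

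For any $\theta \in \Theta$ and $t \in (0,1]$, star-shapedness of $\Phi$ at $0$ gives $t\phi_\theta \in \Phi$, hence $\sigma_h(t\phi_\theta)/t$ lies in $\span\{\sigma_h(\phi) : \phi \in \Phi\}$. Applying the Taylor bound pointwise to $x = t\phi_\theta(y)$, dividing by $t$, and integrating against $\mu$ yields
\begin{equ}
\left\| \frac{\sigma_h(t\phi_\theta)}{t} - \sigma_h'(0)\phi_\theta \right\|_{L^2(\mu)} \leq \frac{Kt}{2}\,\|\phi_\theta\|_{L^\infty(\mu)}\,\|\phi_\theta\|_{L^2(\mu)},
\end{equ}
which vanishes as $t \to 0^+$ thanks to $\phi_\theta \in L^2(\mu)\cap L^\infty(\mu)$. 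Taking this $L^2$-limit places $\sigma_h'(0)\phi_\theta$ in the closure of $\span\{\sigma_h(\phi) : \phi \in \Phi\}$, and dividing by the nonzero scalar $\sigma_h'(0)$ places $\phi_\theta$ itself there.

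Since this holds for every $\theta \in \Theta$, the closed span of $\{\sigma_h(\phi):\phi\in\Phi\}$ contains $\span\,\Phi$, which is dense in $L^2(\mu)$ by hypothesis; therefore it equals all of $L^2(\mu)$. The argument is essentially routine; the only mildly delicate point is controlling the quadratic Taylor remainder in $L^2$, which is precisely why the $L^\infty$-boundedness assumption on $\Phi$ is imposed — without it the error term $\int \phi_\theta^4 \, d\mu$ could be infinite and the limiting procedure above would fail.
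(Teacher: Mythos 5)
Your proof is correct and rests on exactly the same idea as the paper's: use star-shapedness to scale $\phi_\theta$ toward $0$, linearize $\sigma_h$ there via $\sigma_h(0)=0$, $\sigma_h'(0)\neq 0$, and control the quadratic Taylor remainder in $L^2(\mu)$ using the $L^\infty$ bound on $\Phi$. The only difference is packaging: you argue directly that the closed span of $\{\sigma_h(\phi_\theta)\}$ contains each $\phi_\theta$ and hence the dense set $\mathrm{span}\,\Phi$, whereas the paper runs the dual argument by contradiction, showing no nonzero $f^*\in L^2(\mu)$ can annihilate all $\sigma_h(\phi_\theta)$ --- these are equivalent in a Hilbert space, and your direct version is if anything slightly cleaner.
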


\begin{proof}[Proof of \lref{l:above}]
  Assume towards a contradiction that there exists $f^* \in L^2(\mu)$ such that for any sequence $\{\phi_n\}_n$ with $\phi_n \in L^2(\mu)$ we have $\int f^* \sigma_h(\phi_n) \mu(d x) = 0$ for all $n \in \mathbb N$.   By the spanning assumption there exists $\phi^* \in \Phi$ with  $ \delta^*:=|\int \phi^* f^* \mu(dx)|>0$.  We now consider the sequence of functions $\phi_n = \pc{n\|\phi^*\|_\infty}^{-1} \phi^*$. By assumption on the star-like structure of $\Phi$, $\phi_n \in \Phi$ for all $n>0$.
  The result of the lemma follows by the Taylor expansion of $\sigma_h$ around the point $0$:
  \begin{equ}
    \sigma_h(\phi_\theta(x) ) = 0 + \sigma_h'(0) \phi_\theta(x) + R[\phi_\theta](x)
  \end{equ}
  where, denoting by $\mathcal B_\rho^\infty(0)$ the ball of radius $\rho$ in the $L^\infty(\mu)$ norm around $0$, there exists a constant $C>0$ such that the remainder term satisfies $|R[\phi](x)| < C \phi(x)^2$ uniformly in $\phi \in \mathcal B_\rho^\infty(0)$ and $x$ for $\rho$ small enough. Then, along the sequence $\{\phi_n\}_n$ we have
  \begin{equ}\label{e:triangle}
  \pd{  \int  f^*(x) \sigma_h( \phi_n(x) ) \mu(d x)} \geq \pd{\sigma_h'(0) \int f^*(x) \phi_n(x) \mu(dx)} -  \pd{\int f^*(x)R[ \phi_n](x) \mu(dx)}
  \end{equ}
  We notice that for $n \in \mathbb N$ sufficiently large we have,
  \begin{equs}
    \pd{\sigma_h'(0) \int f^*(x) \phi_n(x) \mu(dx)} & = \frac {\sigma_h'(0)} {n\|\phi^*\|_\infty} \delta^*\\
    \pd{ \int f^*(x)R[\phi_n](x) \mu(dx)} \leq \|f^*\|_2 \|R[\phi_n](x)\|_2 &\leq \frac {C\|(\phi^*)^2\|_2} {n^2\|\phi^*\|_\infty^2}  \|f^*\|_2 \leq \frac 1 2 \frac {\sigma_h'(0)} {n\|\phi^*\|_\infty} \delta^*
  \end{equs}
  so that the first term in the expansion dominates the second. Combining this with \eref{e:triangle} implies that there exists $n$ large enough such that
  \begin{equ}
    \pd{\int  f^*(x) \sigma_h( \phi_n(x)) \mu(d x)} >\frac 1 2 \frac {\sigma_h'(0)} {n\|\phi^*\|_\infty} \delta^* > 0
  \end{equ}
  contradicting the fact that $\int f^* \phi_n \mu (d x) = 0$ for all $n \in \mathbb N$.
\end{proof}

 \begin{proof}[Proof of \pref{p:spanning}]
  We want to show that, for any $\ell \in \{0,\dots, L\}$,
    \begin{equ}\label{e:induction}
      \span \{\sigma_h(\Hhh (\theta; \xx,\ell) + \wxh \xx_{-\ell})~:~\theta \in \supp(\phh)\} = L^2(\nu_{-L,-\ell})
    \end{equ}
   By the deterministic nature of the dynamical system $T$ the measure $\nu_{-L,-\ell}$ can be written, in the sense of distributions, as
  \begin{equs}
    \nu_{-L,-\ell}(\d \xx) &= \nu(\d\xx_{-\ell}|\xx_{-\ell})\dots \nu(\d\xx_{-L+1}|\xx_{-L})\nu_0(\d \xx_{-L})\\
    & = \nu_0(\d \xx_{-L}) \prod_{j=1}^\ell\delta(\xx_{L-j} - T^j(\xx_{-L}))
    \end{equs}
  so that, integrating on $\xx_{-L}, \dots, \xx_{-\ell}$ we write $$\Hhh(\theta; x,\ell) := \Hhh(\theta;(x, T(x),\dots, T^{L-\ell+1}(x)),\ell)\,.$$  Then, condition \eref{e:induction} can be written as
  \begin{equ}
      \span \{\sigma_h(\Hhh (\theta; x,\ell) + \wxh T^{L-\ell}(x))~:~\theta \in \supp(\phh)\} = L^2(\nu_{0})
    \end{equ}
  which, since $\{0\} \times L_R^\infty(\phh) \subset \supp\{\wxh(t;\theta), \whh(t;\theta,\cdot)~:~\theta \in \Theta\}$ by \lref{l:support} follows if
  \begin{equ}\label{e:toshow}
    \span_\theta \pq{\int \sigma\pc{\whh(\theta, \theta')\sigma\pc{\whh(\theta', \theta'') \sigma \pc{\dots \sigma_h(\wxh(\theta^{(L)})x)}}} \Phh^{\otimes L}(d \theta',\dots, d \theta^{(L)})} = L^2(\nu_{0})
  \end{equ}
We prove \eref{e:toshow} by induction on the depth of the unrolled network.

{\bf Base case $\ell=L$:} In this case we simply need to show that $\span\{\sigma_h(\wxh(\theta) \xx_{-L})~:~\theta  \in \Theta\}$ is dense in $L^2(\nu_{-L}) = L^2(\nu_0)$. This, however, is immediately true by the global approximation property \aref{a:1}b).

{\bf Induction step  $\ell\to \ell-1$:}
By \lref{l:above} it is sufficient to show that
\begin{equ}
  \Hhh'(\theta; x, \ell) := \int \whh(\theta, \theta')\sigma_h\pc{\whh(\theta', \theta'') \sigma_h \pc{\dots \sigma_h(\wxh(\theta^{(L)})x)}} \Phh^{\otimes (L-\ell+1)}(d \theta^{(\ell)},\dots, d \theta^{(L)})
  \end{equ}
spans the desired space.
  This claim is true if having
\begin{equ}
 \int\bar g(x)  \Hhh'(\theta; x, \ell-1)  \nu_{0}(\d  x) =  \int \bar g(x) \int \whh(\theta, \theta') \sigma_h(\Hhh(\theta'; x,\ell)
 ) \Phh(\d \theta') \nu_{0}(\d x)=0
\end{equ}
for almost all $\theta \in \Ohh$ implies that the function $\bar g~:~\mathbb R^{\din} \to \mathbb R$ must satisfy $\bar g(x) \equiv 0$.
Using \lref{l:support} to establish that $\{\whh(t;\theta,\cdot)\}_\theta$ is dense in \aa{$L_R^\infty(\phh)$} we can rewrite the above condition as
\begin{equs}
 \int \bar g(x) \Hhh'(\theta; x,\ell-1)  \nu_{0}(\d  x)
 &=\int \bar g(  x) \int f(\theta') \sigma_h(\Hhh'(\theta'; x,\ell)
 ) \Phh(\d \theta') \nu_0(\d x)\\
&=\int f(\theta') \int  \bar g(x) \sigma_h(\Hhh'(\theta'; x,\ell)
) \nu_{0}(\d  x)\,\Phh(\d \theta')=0
\end{equs}
for all \aa{$f \in L_R^\infty(\phh)$}, where 
in the last line we have applied Fubini's theorem. This is true only if
\begin{equ}\label{e:fubinii}
\int \bar g( x) \sigma_h(\Hhh'(\theta'; x,\ell)
) \nu_{0}(\d  x)=0\qquad \text{for $\phh$-almost all } \theta'\in \Ohh\,.
\end{equ}
which, by the induction assumption, is only true if $\bar g(x) \equiv 0$, showing \eref{e:toshow} and therefore the claim.

\end{proof}

\subsection{Preservation of expressivity during training}

Recalling the definition of $L_R^\infty(\phh) = \{f \in L^2(\phh)~:~\sup_\Theta |f| \leq R\} $ we have
\begin{lemma}[Bidirectional diversity, Step 1 in \cite{pham20global}, proof of Thm.~46]\label{l:support}
Let $\whh(t;\cdot,\cdot), \wxh(t;\cdot)$ be the mean-field parameter functions solving \eref{e:mfpde} with initial condition $\whh^0(\cdot,\cdot), \wxh^0(\cdot)$. If \aref{a:1} holds, then at any time $t>0$ we have that
\begin{equs}
\supp(\wxh(t;\theta),\whh(t;\cdot,\theta),\whh(t;\theta,\cdot)~:~\theta \in \Theta) &= \Pxh \times \aa{L_R^\infty(\phh)\times L_R^\infty(\phh)}
\end{equs}
\end{lemma}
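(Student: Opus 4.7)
The strategy is to reinterpret the dynamics \eqref{e:mfpde} as a continuous flow on the parameter space $\mathcal{V} := \mathbb{R}^d \times L_R^\infty(\phh) \times L_R^\infty(\phh)$ and to show that this flow preserves the full-support property at initialization. Define the slice map
\[
\psi_t \colon \Theta \to \mathcal{V}, \qquad \psi_t(\theta) := (\wxh(t;\theta),\, \whh(t;\theta,\cdot),\, \whh(t;\cdot,\theta)).
\]
The claim of the lemma is then equivalent to $\supp((\psi_t)_*\phh) = \mathcal{V}$ for every $t \geq 0$; at $t = 0$ this holds by \aref{a:1}c).

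The key observation is that, with the full mean-field trajectory $W(\cdot)$ (well-defined by \tref{t:eandu}) regarded as a prescribed time-dependent input, the slice $\psi_t(\theta)$ satisfies an evolution equation
\[
\frac{d}{dt}\psi_t(\theta) \;=\; G\bigl(t,\, \psi_t(\theta)\bigr)
\]
where $G \colon \mathbb{R}_+ \times \mathcal{V} \to \mathcal{V}$ does not depend on $\theta$ explicitly. Verifying this requires unpacking the formulas in Appendix~\ref{s:RHS}: the factor $\Gamma_i(W,\theta,\xx)$ in every update depends on $\theta$ only through the hidden state $H_\sigma(\theta;\xx,i)$ (a functional of $\wxh(\theta)$ and $\whh(\theta,\cdot)$) and through the innermost kernel $\whh(\cdot,\theta)$ in the backpropagation chain; all remaining ingredients are either functions of a dummy variable $\theta'$ integrated against $\phh$, or depend on $W(t)$ alone. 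Combining \aref{a:1}a) with the a priori estimates of \lref{l:globalstability} then shows that $G$ is locally Lipschitz in the Banach-space variable $v \in \mathcal{V}$, uniformly on compact time intervals.

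By Picard--Lindel\"of, $G$ generates a forward flow $\Phi_t \colon \mathcal{V} \to \mathcal{V}$ with $\psi_t = \Phi_t \circ \psi_0$. The cutoff $\chi_R$, which vanishes where $|\whh(\theta,\theta')|=R$, ensures that $\mathcal{V}$ is forward invariant and that interior trajectories cannot reach the boundary in finite time, while boundary points are fixed pointwise; running the argument in reverse time yields that $\Phi_t$ is a homeomorphism of $\mathcal{V}$. Since $(\psi_t)_*\phh = (\Phi_t)_*(\psi_0)_*\phh$ and homeomorphisms map supports onto supports, we conclude
\[
\supp\bigl((\psi_t)_*\phh\bigr) \;=\; \Phi_t\bigl(\supp((\psi_0)_*\phh)\bigr) \;=\; \Phi_t(\mathcal{V}) \;=\; \mathcal{V},
\]
which is the desired identity. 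The principal technical obstacle is the bookkeeping in the second step: the chain-rule structure induced by weight sharing across time steps in the RNN must be unpacked carefully to confirm that every $\theta$-dependent quantity appearing in the backpropagated gradient is indeed a functional of the triple $\psi_t(\theta)$ alone. This is the main departure from the feed-forward argument of Step~1 in the proof of Theorem~46 of \cite{pham20global}, where the absence of weight sharing makes the analogous decoupling transparent.
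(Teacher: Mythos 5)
Your proposal is correct and follows essentially the same route as the paper's proof: the paper likewise lifts the $\theta$-slice $(\wxh(t;\theta),\whh(t;\theta,\cdot),\whh(t;\cdot,\theta))$ to a flow $(\axh^+,\ahf^+,\ahs^+)$ on $\mathbb R^d\times L_R^\infty(\phh)\times L_R^\infty(\phh)$ driven only by the background trajectory $W(\cdot)$ and the triple itself, establishes well-posedness forward and backward in time together with invariance of the truncated set via $\chi_R$, and concludes that the resulting (bi-continuous) solution map carries the full support at initialization onto full support at time $T$. Your packaging via ``homeomorphisms map supports onto supports'' is a slightly cleaner statement of the paper's pull-back-to-$u^-$-and-use-continuity step, but the content is identical.
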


To prove the bidirectional diversity result we will consider the flow induced by \eref{e:mfpde} on any value of the (parametric) initial condition. From now on we denote by $\dtp{f,g}$ the inner product in $L^2(\phh)$.

\begin{proof}[Proof of \lref{l:support}] Consider a MF trajectory $W(t)$ and a triple $u = (u_1,u_2,u_3) \in \mathbb R^d\times L_R^\infty(\phh)\times L_R^\infty(\phh)$, representing respectively values of $(\wxh(\theta), \whh(\cdot, \theta), \whh(\theta, \cdot))$. To characterize the evolution of a triple $u$ we consider the flow
\begin{equs}
  \frac{\partial}{\partial t} \axh^+ (t; u) &= - \beta(t) \int \Delta F(W(t), \xx)\\&\qquad\sum_{i = 1}^{L+1}\dtp{ \Gamma_{i-1}(W(t), \cdot, \xx), \ahf^+ (t, \cdot; u)} \sigma_h'(\HH[W](\xx,\ahs^+ (t,\cdot ; u),\axh^+ (t ; u),i)) \xx_{i}\, \nu(\d \xx)\\
  \frac{\partial}{\partial t} \ahf^+ (t, \theta; u) &= - \beta(t) \chi_R(\ahf^+ (t, \theta; u)) \int \Delta F(W(t),\xx) \\&\qquad\sum_{i = 1}^{L+1} \dtp{ \Gamma_{i-1}(W(t),\cdot , \xx), \ahf^+ (t, \cdot; u)} \sigma_h(\HH[W](\xx,\ahs^+ (t,\cdot ; u),\axh^+ (t ; u),i)) \nu(\d \xx)\\
  \frac{\partial}{\partial t} \ahs^+ (t, \theta'; u) &= - \beta(t) \chi_R(\ahs^+ (t, \theta'; u))\int \Delta F(W(t),\xx)\\&\qquad \sum_{i = 1}^{L+1} \dtp{ \Gamma_{i-1}(W(t),\cdot , \xx), \ahf^+ (t, \cdot; u)} \sigma_h(\HH[W](\theta',\xx,W,i+1)) \nu(\d \xx)\\
  \end{equs}
  \normalsize
with initial conditions $\axh(0; u)^+ = u_1$, $\ahs(0,\cdot; u)^+ = u_2$, $\ahf(0,\cdot; u)^+ = u_3$, where $\Gamma_i(W,\theta, \xx), \Delta F(W,\xx)$ were defined in Appendix~\ref{s:RHS} and
\begin{equ}
  \HH[W](\xx,\ahs^+ (t,\cdot ; u),\axh^+ (t ; u),i) := \dtp{\ahs^+ (t,\cdot ; u)\,,\,\sigma_h(\HH[W](\cdot, \xx, i+1))} + \axh^+ (t ; u) \xx_{-i}
\end{equ}
These flows track the evolution of mean-field parameters in the space where their evolution is naturally embedded: we see that the MF trajectory solving  \eref{e:mfpde} satisfies
\begin{equs}
  \wxh(t,\theta) &= \axh^+(t;\wxh(0;\theta), \whh(0;\theta, \cdot),\whh(0;\cdot, \theta))\\ \whh(t,\theta, \cdot ) &= \ahf^+(t;\wxh(0;\theta), \whh(0;\theta, \cdot),\whh(0;\cdot, \theta))\label{e:aflow}\\
  \whh(t,\cdot, \theta ) &= \ahs^+(t;\wxh(0;\theta), \whh(0;\theta, \cdot),\whh(0;\cdot, \theta))
\end{equs}

We proceed construct, for all finite $T>0$ and every $u^+ = (u_1^+,u_2^+,u_3^+) \in \mathbb R^d\times L_R^\infty(\phh)\times L_R^\infty(\phh)$ an initial condition $u^- = (u_1^-,u_2^-,u_3^-) \in \mathbb R^d\times L_R^\infty(\phh)\times L_R^\infty(\phh)$ that reaches $u^+$ after time $T$, \ie such that
\begin{equ}\label{e:endpoint}
  \axh^+(T;u^-) = u_1^+ \qquad \ahf^+(T, \cdot ;u^-) = u_2^+ \qquad \ahs^+(T, \cdot ;u^-) = u_3^+
\end{equ}
To do so we consider the reverse-time dynamics on the interval $(0,T)$, described by the flow
\small
\begin{equs}
  \frac{\partial}{\partial t} \axh^- (t; u)
  &= - \beta(T-t) \int \Delta F(W(T-t), \xx)
  \\
  & \qquad \sum_{i = 1}^{L+1}\dtp{ \Gamma_{i-1}(W(T-t), \cdot, \xx), \ahf^- (t, \cdot; u)} \sigma_h'(\HH(\xx,\ahs^- (t,\cdot ; u),\axh^- (t ; u),i)) \xx_{i}\, \nu(\d \xx)
  \\
  \frac{\partial}{\partial t} \ahf^- (t, \theta; u)
  &= - \beta(T-t) \chi_R(\ahf^- (t, \theta; u)) \int \Delta F(W(T-t),\xx)
  \\
  & \qquad\sum_{i = 1}^{L+1} \dtp{ \Gamma_{i-1}(W(T-t),\cdot , \xx), \ahf^- (t, \cdot; u)} \sigma_h(\HH(\xx,\ahs^- (t,\cdot ; u),\axh^- (t ; u),i)) \nu(\d \xx)
  \\
  \frac{\partial}{\partial t} \ahs^- (t, \theta'; u)
  &= - \beta(T-t) \chi_R(\ahs^- (t, \theta'; u)) \int \Delta F(W(T-t),\xx)
  \\
  &\qquad \sum_{i = 1}^{L+1} \dtp{ \Gamma_{i-1}(W(T-t),\cdot , \xx), \ahf^- (t, \cdot; u)} \sigma_h(\HH(\theta',\xx,W,i)) \nu(\d \xx)\\
  \end{equs}
  \normalsize
initialized at $\axh^- (0; u) = u_1$, $\ahf^- (0, \cdot; u) = u_2$ and $\ahs^- (0, \cdot; u) = u_3$. Note that, by construction, $\tilde \ahf^-(t) = \ahf^- (T-t, \theta; u) $, $\tilde \ahs^-(t) = \ahs^- (T-t, \theta; u) $ and $\tilde \axh^-(t) = \axh^- (t; u)$ solve the same equation as $\axh^+(t; u)$, $\ahf^+(t,\cdot; u)$, $\ahf^+(t,\cdot; u)$ with initial condition $\tilde \ahf^-(0, \cdot ) = \ahf^-(T,\cdot; u^+)$, $\tilde \ahs^-(0, \cdot ) = \ahs^-(T,\cdot; u^+)$, $\tilde \axh^-(0 ) = \axh^-(T; u^+)$.
By existence and uniqueness of the solution of this system of ODEs both forward and backward in time proven in \sref{s:existenceuniqueness}, we must have that, setting $u^- = (u_1^-,u_2^-, u_3^-) := (\axh^-(T,\cdot; u^+),\ahf^-(T,\cdot ; u^+), \ahs^-(T,\cdot; u^+))$ as the initial condition of
 \eref{e:aflow}, the endpoint of the trajectory of satisfies \eref{e:endpoint} as desired. \aa{Finally, we show that the point $u^-$ is in $\mathbb R \times L_R^\infty(\phh)\times L_R^\infty(\phh)$. This follows immediately upon showing that the set $\mathbb R \times L_R^\infty(\phh)\times L_R^\infty(\phh)$ is invariant with respect to the flow maps $(\axh^+,\ahf^+, \ahs^+)$, $(\axh^-,\ahf^-, \ahs^-)$ induced by the ODEs. The forward invariance of $\mathbb R$ for $\wxh$ under both forward and backward flow maps follows from the Lipschitz bounds on the RHS of the corresponding ODEs, established in \sref{s:existenceuniqueness}. It remains to prove forward invariance of $L_R^\infty(\phh)$, which we now do by contradiction. Assuming that $L_R^\infty(\phh)$ is not invariant with respect to $(\axh^+,\ahf^+, \ahs^+)$, $(\axh^-,\ahf^-, \ahs^-)$, then by the continuity of the flow maps, there must exist $\theta, \theta' \in \supp\, \phh$ with $|\whh(t;\theta, \theta')| = K$ such that $\partial_t |\whh(t;\theta, \theta')| >0$, which is impossible given that $\partial_t |\whh(t;\theta, \theta')| = 0$, since $\chi_R(\whh(\theta, \theta')) = 0$, for all such $\theta, \theta'$ .  }

By continuity of the solution map $u \mapsto (\axh^+(T; u), \ahf^+(T, \cdot; u), \ahs^+(T, \cdot; u))$, for any $\epsilon > 0$ there exists a neighborhood $U$ of $u^- \in  \mathbb R \times L_R^\infty(\phh) \times L_R^\infty(\phh)$ such that
\begin{equ}
  \|(\axh^+(T;u), \ahf^+(T,\cdot;u), \ahs^+(T,\cdot;u)) - u^+\| < \epsilon
\end{equ}
for all $u \in U$.
This finally implies, by  \aref{a:1}c), that $(\wxh(T;\theta), \whh(T; \theta, \cdot), \whh(T; \cdot, \theta))$ has full support in $\mathbb R^d\times L_R^\infty(\phh)\times L_R^\infty(\phh)$, which in turn proves the claim.

\end{proof}

\subsection{Proof of \tref{t:optimality}}

The proof of  \tref{t:optimality} is carried out by adapting the argument from  Theorem 50 in \cite{pham20global}, to the present setting. We recall that, writing $\Delta F[W](\xx) := \hat F( \xx;W(t)) - F^*( \xx)$ and using the definition \eref{e:hh} we have
\begin{equ}
  \partial_t  \why(t;\theta)  = - \int  \Delta F[W](\xx)  
  \sigma_h\big(\HH(\theta; \xx, 0)\big) \nu(\d \xx)
\end{equ}
so that, by the convergence assumption, we have that for every $\epsilon>0$ there exists a $T>0$ such that for almost every $\theta \in \supp(\phh)$
\begin{equ}
   | \int  \Delta F[W](\xx)  
  \sigma_h\big(\HH(\theta; \xx, 0)\big) \nu(\d  \xx)| \leq \epsilon\,.
\end{equ}
We proceed to prove that $\Delta F[W]$ converges in $L^2(\nu)$ to $\Delta F[\bar W]$ as $t\to \infty$. To do so we define
\begin{equ}
  \delta_i(t,\xx,\theta) = \pd{\sigma_h(\HH[\bar W](\theta; \xx, i)) - \sigma_h(\HH[W(t)](\theta; \xx, i))}
\end{equ}
for which by boundedness and Lipschitz continuity of $\sigma_h$ we have
\begin{equs}
  \delta_L(t, \xx, \theta ) &\leq K |\bar W_{xh}(\theta)\xx_{-L} - W_{xh}(t;\theta)\xx_{-L}|\\
  \delta_i(t,\xx,\theta)& \leq K \left(|\bar W_{xh}(\theta)\xx_{-L} -  W_{xh}(t;\theta)\xx_{-L}| + K \int |\bar W_{hh}(\theta, \theta') -  W_{hh}(t;\theta,\theta' )|\phh(\d \theta' ) \right. \\ & \qquad \qquad \qquad \qquad  \left.+ \int |\bar W_{hh}(\theta, \theta')\delta_{i+1}(t,\xx,\theta')| \phh(\d \theta')\right)
\end{equs}
Therefore, denoting by $\d \boldsymbol \theta$ the differential $\d \theta^{(0)},\dots, \d \theta^{(L)}$ we have that
\begin{equs}
  \int |&\Delta F[\bar W](\xx) - \Delta F[W(t)](\xx)|^2 \nu (\d \xx) \\
  &= \int |\hat F(\bar W; \xx) - \hat F(W(t);\xx)|^2\nu (\d \xx)\\
  & \leq \int \pc{K \int |\bar W_{hy}(\theta) - W_{hy}(t;\theta)| \phh(\d\theta) + \int \bar W_{hy}(\theta) \delta_0(t, \xx, \theta) \phh(\d \theta)}^2 \nu (\d \xx)\\
  & \leq K^{2L} \sum_{i=0}^L \int |\bar W_{hy}(\theta^{(0)})|^2 \pc{\prod_{j=1}^{i-1} |\bar W_{hh}(\theta^{(j-1)},\theta^{(j)})|}^2\pd{\bar W_{hh}(\theta^{(i-1)},\theta^{(i)})- W_{hh}(t;\theta^{(i-1)},\theta^{(i)})}^2\phh^{\otimes L+1}(\d \boldsymbol \theta)\\&
  \quad +  K^{2L} \sum_{i=0}^L \int |\bar W_{hy}(\theta^{(0)})|^2 \pc{\prod_{j=1}^{i-1} |\bar W_{hh}(\theta^{(j-1)},\theta^{(j)})|}^2\pd{\bar W_{xh}(\theta^{(i-1)})- W_{xh}(t;\theta^{(i-1)})}^2\phh^{\otimes L+1}(d \boldsymbol \theta) \mathbb E_X[\|\xx\|^2]\\
  & \quad + K^2 \int |\bar W_{hy}(\theta) - W_{hy}(t;\theta)|^2 \phh(d\theta)\label{e:lastbound}
\end{equs}
and by \aref{a:1} we have that the above goes to $0$ as $t \to \infty$.

Having proven the convergence of $\Delta F[W(t)]$ to  $\Delta F[\bar W]$ we proceed to prove the claim of the theorem. By boundedness of $\sigma_h$ we have that for every $\theta \in \supp(\phh)$
\begin{equs}
  | \int  \Delta F[\bar W]  
 \sigma_h & \big(\HH(\theta; \xx, 0)\big) \nu(\d  \xx)|\\&
 \leq K | \int  (\Delta F[\bar W](\xx)  - \Delta F[ W](\xx)) \nu(\d  \xx)| + | \int  \Delta F[ W](\xx)  
\sigma_h & \big(\HH(\theta; \xx, 0)\big) \nu(\d  \xx)|\\&
\leq K \epsilon
\end{equs}
By continuity of $\sigma_h$  we have that for every $\epsilon > 0$
\begin{equ}
  |\int  \Delta F[\bar W](\xx)  
 f(\xx) \nu(\d \xx)| \leq K \epsilon
\end{equ}
uniformly over $f(\xx) \in S$ where $S = \{\sigma_h(\HH(\theta; \xx, 0))~:~\theta \in \phh\}$, implying that
$
  |\int  \Delta F[\bar W](\xx)  
 f(\xx) \nu(\d \xx)|= 0$ for all  $f \in S$.
Since from \pref{p:spanning} we have that $\span(\sigma_h(\HH(\theta; \xx, 0))) = L^2(\nu)$, the above result immediately yields that for $\nu$-almost every $\xx$, $\Delta F[\bar W](\xx) = 0$, so that $\mathcal L(\bar W) = 0$.

Finally, we prove the desired result by connecting $\mathcal L(\bar W)$ and $\mathcal L(W(t))$:
\begin{equs}
|\mathcal L(\bar W) - \mathcal L(W(t))|& = |\int \Delta F[\bar W](\xx)^2 - \Delta F[W(t)](\xx)^2 \nu(\d \xx)|
& \leq 2K \|\hat F(\bar W;\cdot ) - \hat F(W(t);\cdot)\|_\nu \,,
\end{equs}
which by \eref{e:lastbound} goes to $0$ with $t \to \infty$. Combining the above we have
\begin{equ}
  \lim_{t \to \infty} \mathcal L(W(t)) \leq \mathcal L(\bar W) + \lim_{t \to \infty} |\mathcal L(\bar W) - \mathcal L(W(t))| = 0
\end{equ}
which proves the claim.\qed\\[10pt]

Author affiliations:

\end{document}